\documentclass[twoside]{article}

\usepackage[preprint]{aistats2027}
\usepackage[round]{natbib}

\usepackage{microtype}
\usepackage{graphicx}
\usepackage{subcaption}
\usepackage{booktabs}

\usepackage{amsmath}
\usepackage{amssymb}
\usepackage{mathtools}
\usepackage{amsthm}

\usepackage{bbm}
\usepackage{caption}
\usepackage{subcaption}
\usepackage{mathtools}
\usepackage{enumitem}
\usepackage{algorithm}
\usepackage{algpseudocode}

\usepackage{multirow}

\usepackage[dvipsnames]{xcolor}
\definecolor{citecolor}{RGB}{30,60,200}
\definecolor{linkcolor}{RGB}{100,180,50}

\usepackage{hyperref}
\hypersetup{
  colorlinks=true,
  citecolor=citecolor,
  linkcolor=linkcolor,
  urlcolor=black,
}

\providecommand{\best}{}\renewcommand{\best}[1]{\textcolor{bestgreen}{#1}}
\providecommand{\second}{}\renewcommand{\second}[1]{\textcolor{secondblue}{#1}}
\definecolor{bestgreen}{HTML}{1B7A46}
\definecolor{secondblue}{HTML}{1F5FA8}

\definecolor{acnotepink}{cmyk}{0.1, 0.4, 0, 0}

\usepackage[capitalize,noabbrev,nosort]{cleveref}
\usepackage{wrapfig}

\usepackage{tikz}
\usetikzlibrary{positioning}
\usetikzlibrary{arrows}
\usetikzlibrary{calc,fit}
\usetikzlibrary{shapes.geometric}
\usetikzlibrary{shapes.misc}
\usetikzlibrary{decorations.pathmorphing}
\usetikzlibrary{decorations.pathreplacing}
\usepackage{standalone}

\theoremstyle{plain}
\newtheorem{theorem}{Theorem}[section]

\newtheorem{lemma}[theorem]{Lemma}
\newtheorem{corollary}[theorem]{Corollary}

\theoremstyle{definition}

\theoremstyle{remark}

\makeatletter
\crefname{section}{\S\@gobble}{\S\@gobble}
\crefname{subsection}{\S\@gobble}{\S\@gobble}
\crefname{proposition}{Prop.}{Props.}
\crefname{figure}{Fig.}{Figs.}
\crefname{table}{Table}{Tables}
\crefname{conjecture}{Conjecture}{Conjectures}
\makeatother

\DeclareMathOperator*{\argmin}{arg\,min}

\DeclareMathOperator{\KL}{KL}

\newcommand{\dd}{\mathrm{d}}

\begin{document}

\twocolumn[

\aistatstitle{Amortised Adaptive Importance Sampling}

\aistatsauthor{ Arran Carter\textsuperscript{*} \And V\'ictor Elvira \And  Esmeralda S. Whitammer }
\runningauthor{Arran Carter, V\'ictor Elvira, Esmeralda S. Whitammer}

\aistatsaddress{ University of Edinburgh \And University of Edinburgh \And University of Edinburgh\\CIFAR Fellow } ]

{
\renewcommand{\thefootnote}{*}\footnotetext{
\small Correspondence to \texttt{arran.carter@ed.ac.uk}.\\ Code available \href{https://github.com/ArranCarter/Amortised_adaptive_importance_sampling}{\textcolor{citecolor}{here}}.
}
}

\begin{abstract}
    Monte Carlo sampling algorithms aim to produce samples from a target distribution with a given unnormalised probability density function.
    The high cost of running Monte Carlo algorithms, with many simulation steps or particles needed to obtain a good empirical approximation to the target, can be reduced by introducing learnable components into the sampler -- a form of amortisation.
    Once these components are trained, amortised samplers can produce high-quality samples with less computation at sampling time than training-free algorithms would require.
    We propose an amortised sampler built upon the framework of adaptive importance sampling -- a hierarchical importance sampling procedure where the proposal is iteratively adapted. 
    Our \emph{amortised adaptive importance sampling} algorithm fits a lightweight neural network that adapts the proposal using information obtained over the course of sampling.
    By training our sampler over many different targets taken from the same family of distributions, we obtain a sampler that can adapt to a specific target at sampling time without a priori access to the target's parameters. 
    We show that our procedure can efficiently sample from targets within the family the sampler was trained on, giving improvements over target-specific amortised samplers and training-free adaptive importance sampling algorithms.
    We also explore the potential for generalisation, where the trained sampler is used to sample from targets outwith its training family. 
\end{abstract}

\begin{figure*}[t]
    \includestandalone[width=\linewidth]{figures/figure1}
    \caption{\textbf{Amortisation of a proposal adaptation over a family of targets.} Each row shows sequential adaptation of the importance sampling proposal in our implementation of amortised adaptive importance sampling. The samples and importance weights from each iteration's proposal $q_{t-1}$ are supplied to a neural network that predicts the parameters of the next iteration's proposal $q_t$. During training, the neural network is updated to minimise a divergence between all steps' proposals and the target distribution $p$. Training is over many distributions from a family; once trained, the sampler can be used for new targets in the same family.}
    \label{fig:figure1}
\end{figure*}

\section{Introduction}

We consider the problem of sampling from a distribution over $\mathbb{R}^d$ with probability density function
\begin{align}
    p(x) = \frac{1}{Z}\pi(x), \quad Z = \int_{\mathbb{R}^d}\pi(x)\,\dd x,
\end{align}
where we assume that we only have access to an unnormalised density $\pi \propto p$.  We also consider the related problem of estimating integrals of the form:
\begin{align}
    \label{eq:expectation}
    I = \mathbb{E}_{x\sim p}\left[f(x)\right] = \int_{\mathbb{R}^d} f(x)p(x)\,\dd x.
\end{align}
(We use the same notation for an absolutely continuous distribution and its density.)
These problems are of key importance in many applications, including Bayesian inference \citep{gelman1995bayesian} and statistical mechanics \citep{noe2019boltzmann}. One may want to solve both problems for many related distributions $p$ and test functions $f$.

\paragraph{Monte Carlo Methods} Monte Carlo methods are a commonly used solution to sampling problems. These methods form an empirical approximation to a single target distribution by performing a stochastic simulation. Many such methods simulate a Markov chain whose stationary distribution is the target, known as Markov chain Monte Carlo \citep{metropolis1953equation, hastings1970monte,roberts2004general,neal2000slice,neal2011mcmc}. Another common category of Monte Carlo methods are importance sampling algorithms, which sample particles from some proposal distribution and reweight them \citep{elvira2021advances}. 

\looseness=-1
There are various iterative versions of importance sampling, with multiple rounds of particle reweighting, including annealed importance sampling \citep{neal2001annealed}, sequential Monte Carlo \citep{del2006sequential}, and methods that adapt the proposal towards the target, known as adaptive importance sampling \citep[AIS;][]{cappe2004population,cappe2008adaptive,bugallo2017adaptive}. 
The adaptation of the proposal in AIS algorithms uses information obtained about the target at sampling time. Various adaptation rules have been proposed \citep{cappe2004population, cappe2008adaptive, elvira2023gradient, akyildiz2021convergence, paananen2021implicitly,elvira2022optimized}.

\paragraph{Amortised Sampling} The progress of deep learning methods has spurred the development of amortised sampling, wherein learnt components are introduced into sampling algorithms. Amortised sampling algorithms range from those that aim to learn MCMC transitions parametrised by a normalising flow \citep{gabrie2022adaptive} to expressive dynamics-based approaches \citep{berner2026from}. Amortised importance sampling methods include  \citet{muller2019neuralis}, who parametrise the proposal using normalising flows. See \citet{gabrie2026leveraging} for a recent review of amortised sampling.

Many approaches to amortised sampling only consider sampling from a single distribution. 
Target-specific training becomes unattractive when sampling must be repeated across many related targets, since training cost is incurred separately for each target.
Recently, there has been interest in algorithms that amortise over multiple Monte Carlo problems, for example, producing samples for many distributions from the same family (e.g., Bayesian posteriors conditioned on different observations). Advances made in this direction include those of \citet{golinski2019amortized, sun2023meta, klein2023timewarp, klein2024transferable}. 

\paragraph{Our Contribution} We formulate a way to incorporate target-specific information into importance sampling proposals \emph{at sampling time} by learning the adaptation rule in AIS (\cref{sec:aais}). We obtain an algorithm that can efficiently adapt a starting proposal to any target within a family of distributions without requiring information that specifies the target in advance (\Cref{fig:figure1}). Empirically, we show the benefit of amortisation within an AIS procedure over AIS algorithms with static adaptation rules (\Cref{fig:funnel-banana}).

The only target-specific information required by our amortised adaptive importance sampling (AAIS) algorithm is obtained during the sampling procedure, for example, through pointwise evaluations of the (possibly unnormalised) target density function and possibly its derivatives. This allows AAIS to sample from multiple different targets within the family it was trained on, but it can also be run on targets outwith its training family. We investigate this promising direction and show that AAIS can still perform efficiently when the evaluation family differs from the training family.

\section{Preliminaries}

\subsection{Importance Sampling}

Importance sampling is a Monte Carlo algorithm that allows for estimating integrals of the form given by \eqref{eq:expectation} without requiring samples from $p$. Instead, samples $\{x_s\}_{s=1}^S$ are drawn from a \emph{proposal distribution}, $q$, and the corresponding Monte Carlo estimator of $I$ is appropriately reweighted:
\begin{align}\label{eq:unbiased-IS}
    \widehat I = \frac{1}{SZ}\sum_{s=1}^Sw(x_s)f(x_s),  \qquad w(x_s) = \frac{\pi(x_s)}{q(x_s)}.
\end{align}
This estimator is unbiased and consistent, and setting $q=p$ recovers the standard Monte Carlo estimator for \eqref{eq:expectation}. If $Z$ is unknown, it can be substituted with an estimator:
\begin{align}\label{eq:Z-estimator}
    \widehat Z = \frac{1}{S}\sum_{s=1}^Sw(x_s).
\end{align}
Replacing $Z$ in \eqref{eq:unbiased-IS} with $\widehat Z$, we obtain the self-normalised importance sampling (SNIS) estimator:
\begin{align}
    \widehat I_{\rm SNIS} = \frac{\sum_{s=1}^Sw(x_s)f(x_s)}{\sum_{s=1}^Sw(x_s)}
    &=\sum_{s=1}^S\bar{w}(x_s)f(x_s),\\
    \bar w(x_s)&=\frac{w(x_s)}{\sum_{s'=1}^Sw(x_{s'})}
    .
\end{align}
This estimator is biased but consistent. In our experiments, we only consider the SNIS estimator.

The choice of the proposal distribution $q$ is crucial to the efficiency of importance sampling. In particular, the variance of $\widehat Z$ is proportional to the $\chi^2$ divergence between the target and the proposal, and large discrepancies between the two distributions may lead to highly variable weights and poor estimates \citep{agapiou2017importance}. This motivates the design or adaptation of proposal distributions that approximate the target while remaining easy to sample from and evaluate.  In practice, since an appropriate proposal is rarely known in advance, it is often advantageous to adapt it using information gathered during the sampling procedure.
 
\subsection{Adaptive Importance Sampling} 

Adaptive importance sampling (AIS) algorithms iteratively adapt the proposal distribution using information from previous sampling iterations. AIS consists of three main stages iterated for $T$ steps. Let $q_1$ be the initial proposal, at step $t$: sample $x_s^{(t)} \sim q_{t}, s=1,\dots,S$; compute weights $w(x_s^{(t)})$ using \eqref{eq:unbiased-IS}; and then adapt the proposal to obtain $q_{t+1}$; see \cref{alg:general_ais}. One option is to construct the self-normalised AIS estimator of $I$:
\begin{align}\label{eq:ais-estimator}
    \widehat I_{\rm AIS} = \frac{\sum_{t=1}^{T+1}\sum_{s=1}^S w(x_s^{(t)})f(x_s^{(t)})}{\sum_{t=1}^{T+1}\sum_{s=1}^S w(x_s^{(t)})}.
\end{align}

\begin{algorithm}
  \begin{algorithmic}[1]
    \caption{Adaptive Importance Sampling}
    \label{alg:general_ais}
    \Require $S$, $T$, initial proposal $q_1$, target density $\pi$.
    \For{$t=1,\dots, T$}
    \State Draw $S$ samples $\{x_s^{(t)}\}_{s=1}^S$from $q_{t}$
    \State Calculate $w(x_s^{(t)}), s=1,\dots,S$
    \State Adapt $q_{t} \rightarrow q_{t+1}$
    \EndFor
    \State Draw and weight samples from $q_{T+1}$
    \State \Return $S(T+1)$ pairs of samples and their weights, $\{(w_s^{(t)}, x_s^{(t)})\}_{s=1, t=1}^{S, T+1}$, and the proposals $q_{1:T+1}$. 
  \end{algorithmic}
\end{algorithm}

Beyond improving the efficiency of importance sampling estimators, AIS can be used to construct proposals $q_t$ that progressively approximate the target $p$.
The resulting approximation to $p$ may be useful in its own right, for example, to draw posterior samples in a Bayesian problem and estimate posterior predictives.

A wide variety of AIS algorithms have been proposed, differing in several aspects, most notably in the adaptation rule -- the mechanism used to adapt the proposal parameters across iterations -- and in the parametric family adopted for the proposal. Common adaptation strategies include resampling-based mechanisms, where the locations of mixture components are adapted using the weighted sample population \citep{cappe2004population,elvira2017improving}; geometry-based approaches, which exploit gradient and curvature information from the target to adapt the locations and covariance matrices of the proposal components \citep[GAPIS;][]{elvira2015gradient} \citep[GRAMIS;][]{elvira2023gradient}; and parametric fitting approaches, where the proposal parameters are adapted to better approximate the target. The latter include moment-matching schemes, as in AMIS \citep{cornuet2012adaptive}, as well as methods that formulate the adaptation explicitly as the minimisation of a statistical divergence between the proposal and the target \citep{ryu2016convex,akyildiz2021convergence}.

\section{Amortised Adaptive Importance Sampling}
\label{sec:aais}

We propose AAIS, which replaces the prespecified adaptation rule of AIS with a learnt function $u_\phi$. The function is trained across AIS rollouts with different target distributions and maps information about the target collected during sampling to the next proposal.

\paragraph{Single-Target Amortisation}  We briefly introduce the idea through the single-target amortisation setting. In order to amortise AIS, we propose to learn the adaptation rule used in line $4$ of \cref{alg:general_ais}. We parametrise this adaptation rule with a function $u_\phi$ with learnable parameters $\phi$. At step $t$, this function takes in the observables, $O_{1:t}$, of the AIS rollout up to and including time $t$ (e.g., target density evaluations, gradients) as well as the proposals up to time $t$, $q_{1:t}$, and outputs the new proposal $q_{t+1}$. 

Let $R$ denote the output of an AIS procedure (\cref{alg:general_ais}) -- the pairs of samples with their weights from each step and the proposals $q_{1:T+1}$. The parameters of this adaptation function, $u_\phi$, can then be trained to minimise some real-valued objective, $\mathcal{D}(R, \pi,f)$, which takes in the AIS output, $R$, as well as the target density $\pi$ and, if desired, the function $f$ for which the expectation in \eqref{eq:expectation} is to be estimated. The output, $R$, of AIS is random, so the objective is to minimise:
\begin{align}\label{eq:single-target-obj}
    \ell(\phi) = \mathbb{E}_R\left[\mathcal{D}(R, \pi, f)\right],
\end{align}
with respect to the parameters $\phi$.

An example choice for $\mathcal{D}$ is the $\KL$ divergence of the final proposal to the target $\pi$:
\begin{align}
    \mathcal{D}(R, \pi, f) &= \KL(q_{T+1} \mid\mid p) -\log Z\\
    &= \int_{\mathbb{R}^d} q_{T+1}(x)\log\frac{q_{T+1}(x)}{\pi(x)}\dd x.
\end{align}
Note that $R$ depends on $\phi$, so extra care is required in order to perform gradient descent on \eqref{eq:single-target-obj}. The REINFORCE estimator \citep{williams1992simple} can be used, or certain choices of the $q_t$ can allow reprarametrisation.

Many choices for $\mathcal{D}$ are possible; we explore some of them empirically in \cref{sec:obj-choice,app:obj-abl}. For notational simplicity, we will drop the dependence of $\mathcal{D}$ on $f$, since in our experiments the objectives do not depend on it.

\paragraph{Amortisation over Distribution Families} 

All AIS algorithms use information about the target distribution $p$, via its unnormalised density $\pi$ or its derivatives at sampled points, to adapt the proposal. As a result, the (distributions of) sequences of proposals over AIS iterations differ between targets.

However, in AAIS, the adaptation rule itself is learnt. The optimal way to adapt the proposal using new information about the target may depend on information that is not contained in the current AIS rollout's observables. Instead, the information may come from training over a family of target distributions that share some structure.
For instance, in \cref{fig:figure1}, the optimal adaptation of the proposal could take into account the knowledge, acquired during training on many targets, that the target is a mixture of two Gaussians.

Extending the single-target amortisation procedure, we train the adaptation rule over targets $\pi \sim \Pi$, where $\Pi$ is a distribution over unnormalised density functions, by minimising  \eqref{eq:single-target-obj} in expectation over $\pi$:
\begin{align}\label{eq:family-obj}
    \ell(\phi) = \mathbb{E}_{\pi\sim \Pi}\left[\mathbb{E}_R\left[\mathcal{D}(R, \pi)\right]\right],
\end{align}
in a similar way to that considered in \citet{paige2016inference, golinski2019amortized}. \cref{alg:amortised_ais} details the training of the $u_\phi$. Once trained, the $u_\phi$ can be used to perform the adaptation step in \cref{alg:general_ais}.

Minimising \eqref{eq:family-obj} is not the same as attempting to find a single proposal that fits many distributions well. The output $R$ is dependent on both the adaptation rule parameters $\phi$ and the target $\pi$, so the objective in \eqref{eq:family-obj} aims for the best proposal for every individual $\pi$ sampled from $\Pi$. 

\paragraph{Characterising the Optimal Adaptation Rule} To give further clarification, below we consider the optimal adaptation rules. 

First, the optimal adaptation rule $u_\pi^*(O_{1:t}, q_{1:t})$ for a proposal $q_{t}$ with a single target $\pi$ is: 
\begin{align}
    u_\pi^*(O_{1:t}, q_{1:t}) = \argmin_{q_{t+1}}\mathbb{E}_{R}\left[\mathcal{D}(R, \pi) \mid O_{1:t}, q_{1:{t}}\right].
\end{align}
In the family amortisation setting, we can only infer which target is being sampled from based on the information gathered at each step, thus the corresponding optimal rule for amortisation over $\Pi$ is:
\begin{align}
    &u_\Pi^*(O_{1:t}, q_{1:t}) \nonumber \\&= \argmin_{q_{t+1}}\mathbb{E}_{\pi\sim \Pi}\left[\mathbb{E}_R\left[\mathcal{D}(R, \pi)\right] \mid O_{1:t}, q_{1:t}\right]\\
    &= \argmin_{q_{t+1}} \int \mathbb{E}_R[\mathcal{D}(R, \pi) \mid O_{1:t}, q_{1:t}] \,\dd\Pi(\pi\mid O_{1:t}).    \nonumber
\end{align}
Comparing this with the single-target optimal adaptation, we note the additional weighting by the posterior probability of each $\pi$ given the observed $O_{1:t}$. So, in a Bayesian decision theory framework with loss $\mathbb{E}_R[\mathcal{D}(R, \pi)\mid O_{1:t}, q_{1:t}]$, the optimal AAIS adaptation coincides with the optimal decision. 

In the preceding setup, using intermediate targets $\pi_t$ in the objective is also possible. The adaptation rule $u_\phi$ could also return a distribution over proposals.

\begin{figure*}[t]
\vspace*{-0.5em}
\centering
\begin{tabular}{@{}c@{\hspace{1pt}}c@{\hspace{1pt}}c@{\hspace{1pt}}c@{\hspace{1pt}}c@{\hspace{1pt}}c@{}}
& True Samples & AAIS & DM-PMC & M-PMC & GAPIS\\
        \rotatebox{90}{Funnel ($d=50$)}&\fbox{\includegraphics[width=0.16\textwidth]{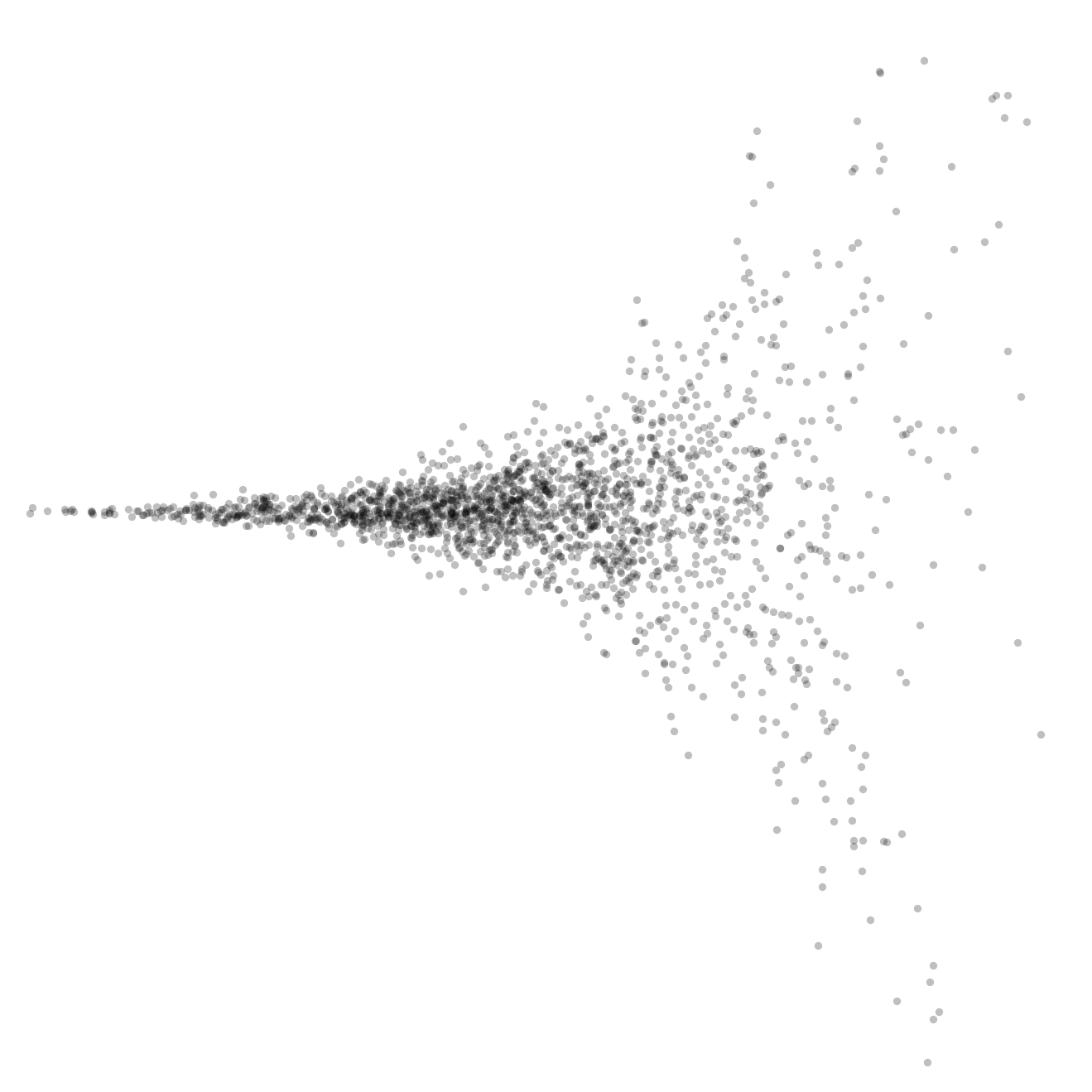}}
        &\fbox{\includegraphics[width=0.16\textwidth]{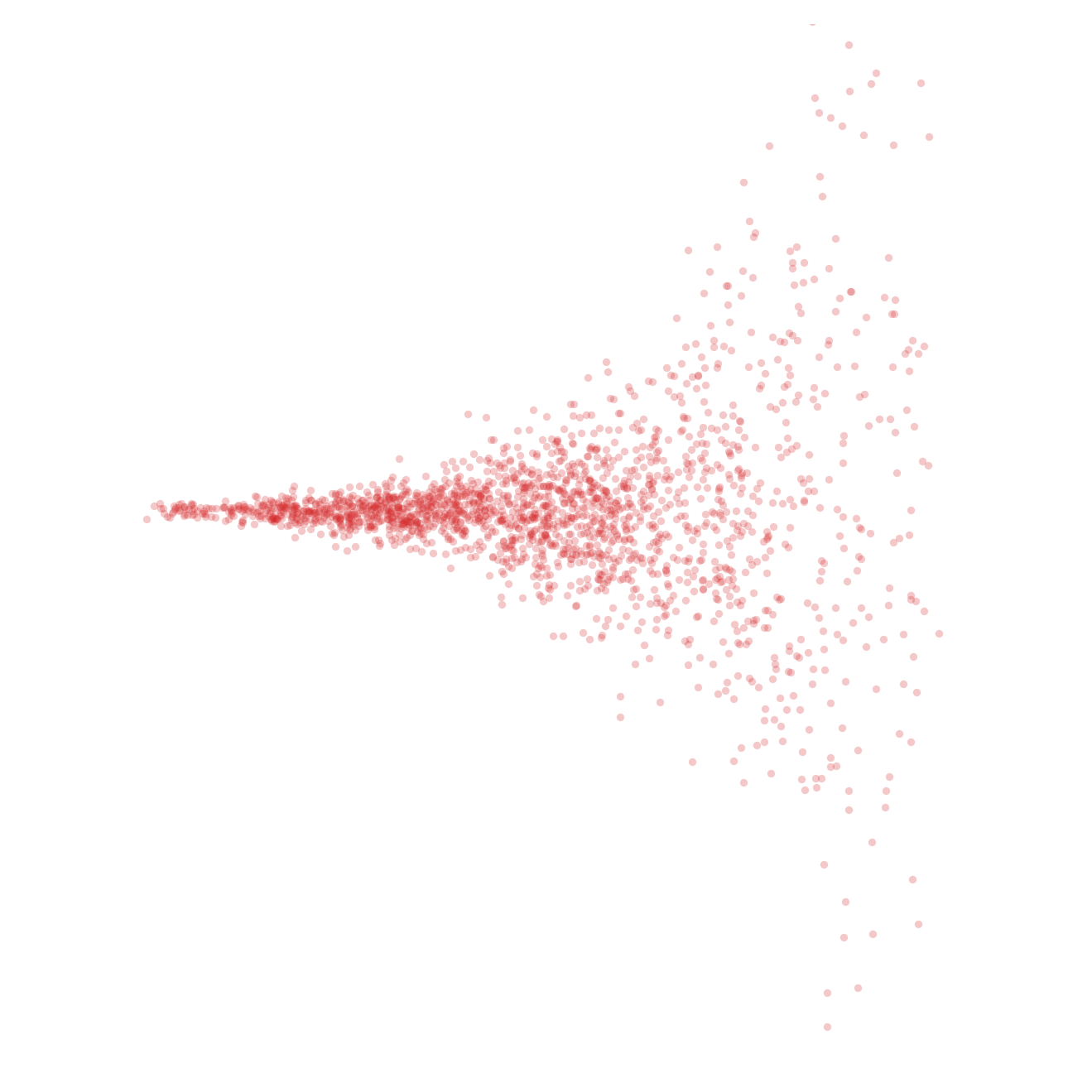}}
        &\fbox{\includegraphics[width=0.16\textwidth]{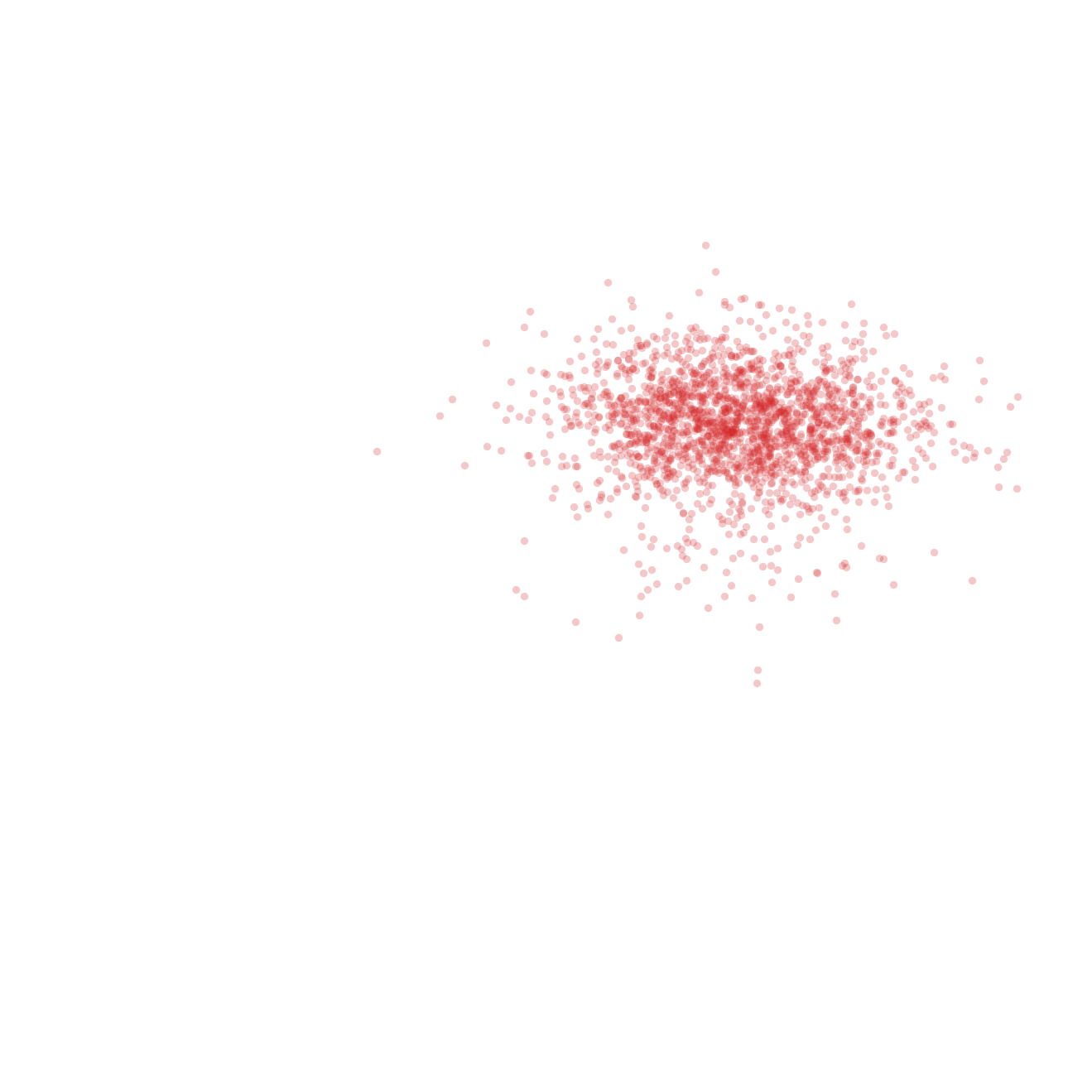}}
        &\fbox{\includegraphics[width=0.16\textwidth]{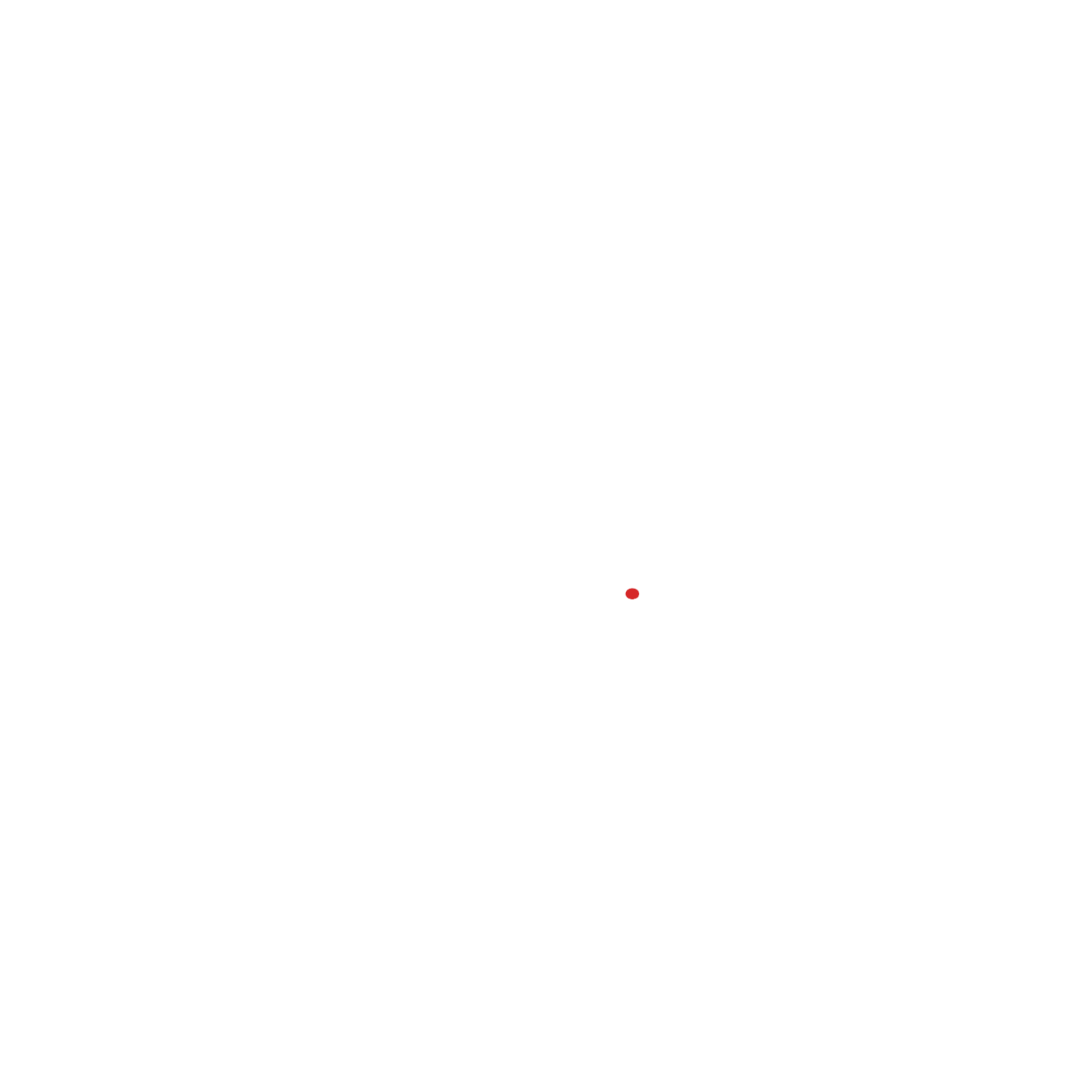}}
        &\fbox{\includegraphics[width=0.16\textwidth]{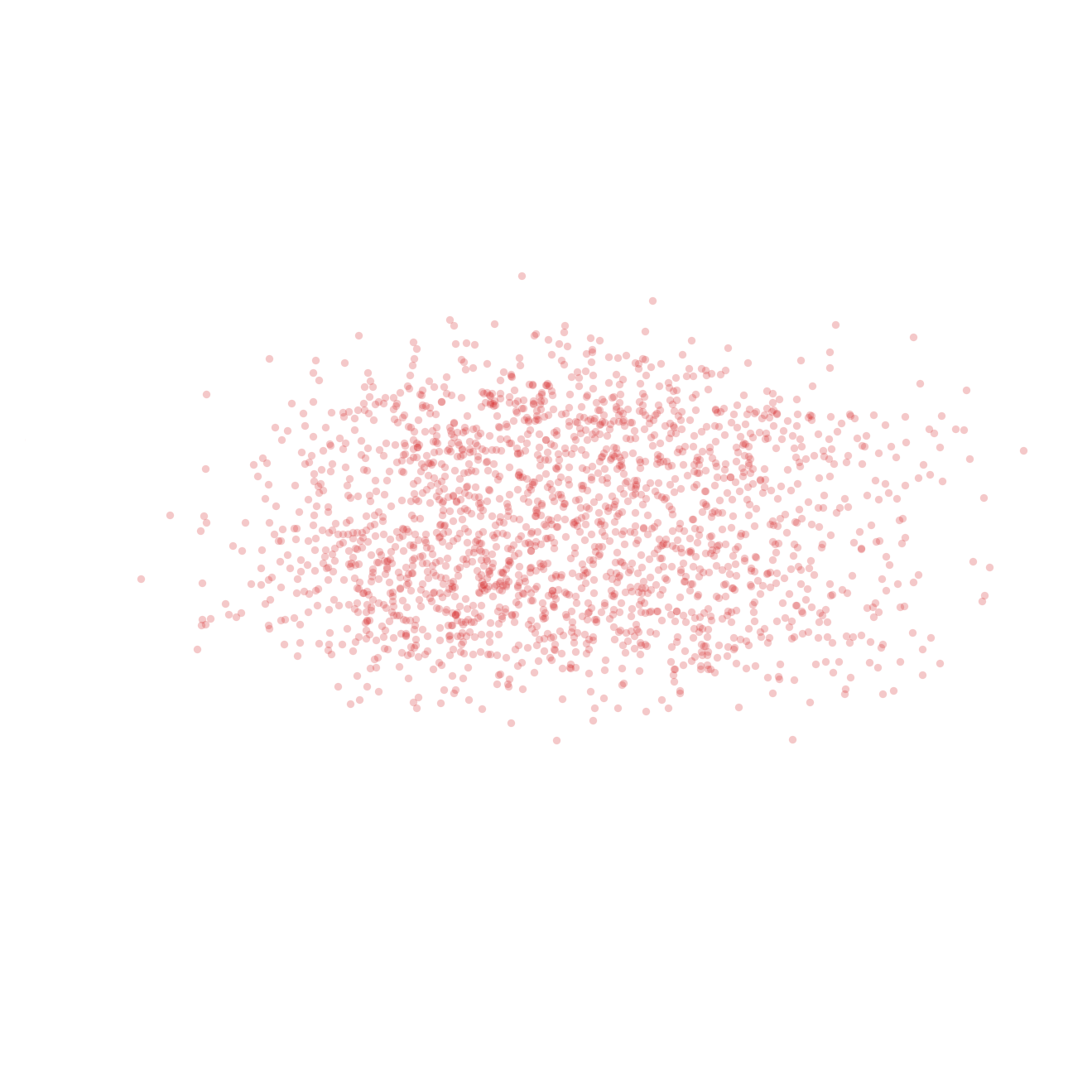}}
\\
        \rotatebox{90}{Banana ($d=5$)}&\fbox{\includegraphics[width=0.16\textwidth]{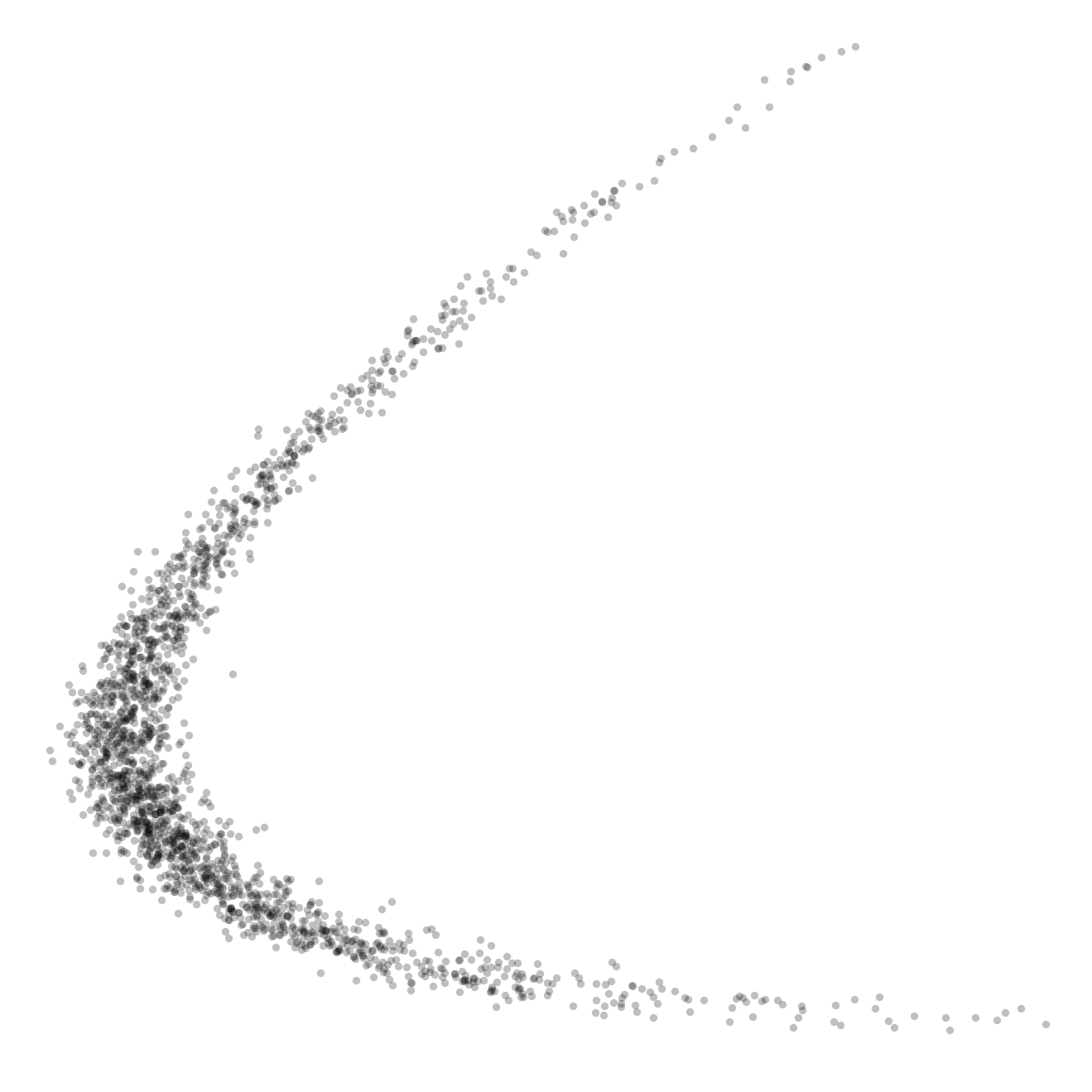}}
        &\fbox{\includegraphics[width=0.16\textwidth]{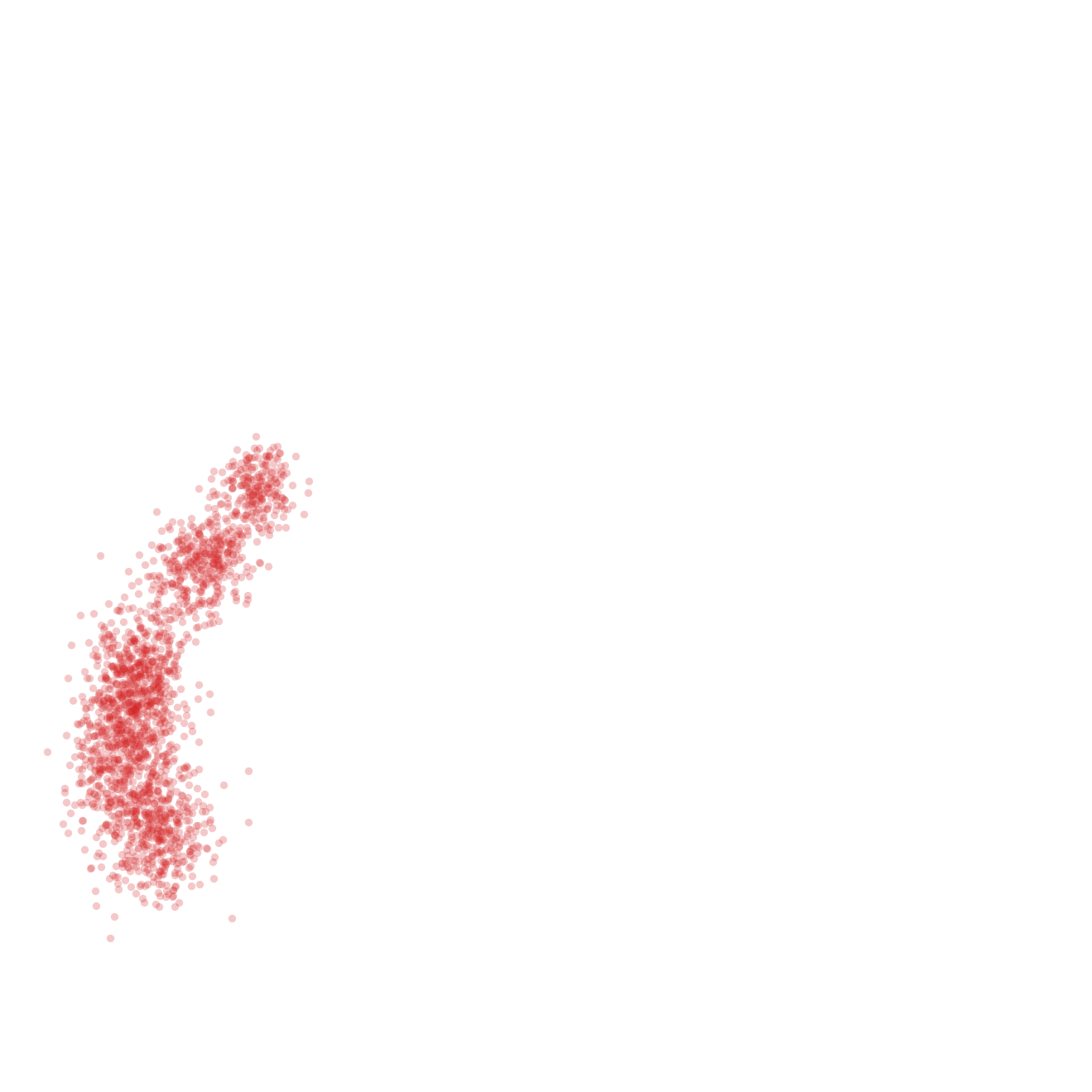}}
        &\fbox{\includegraphics[width=0.16\textwidth]{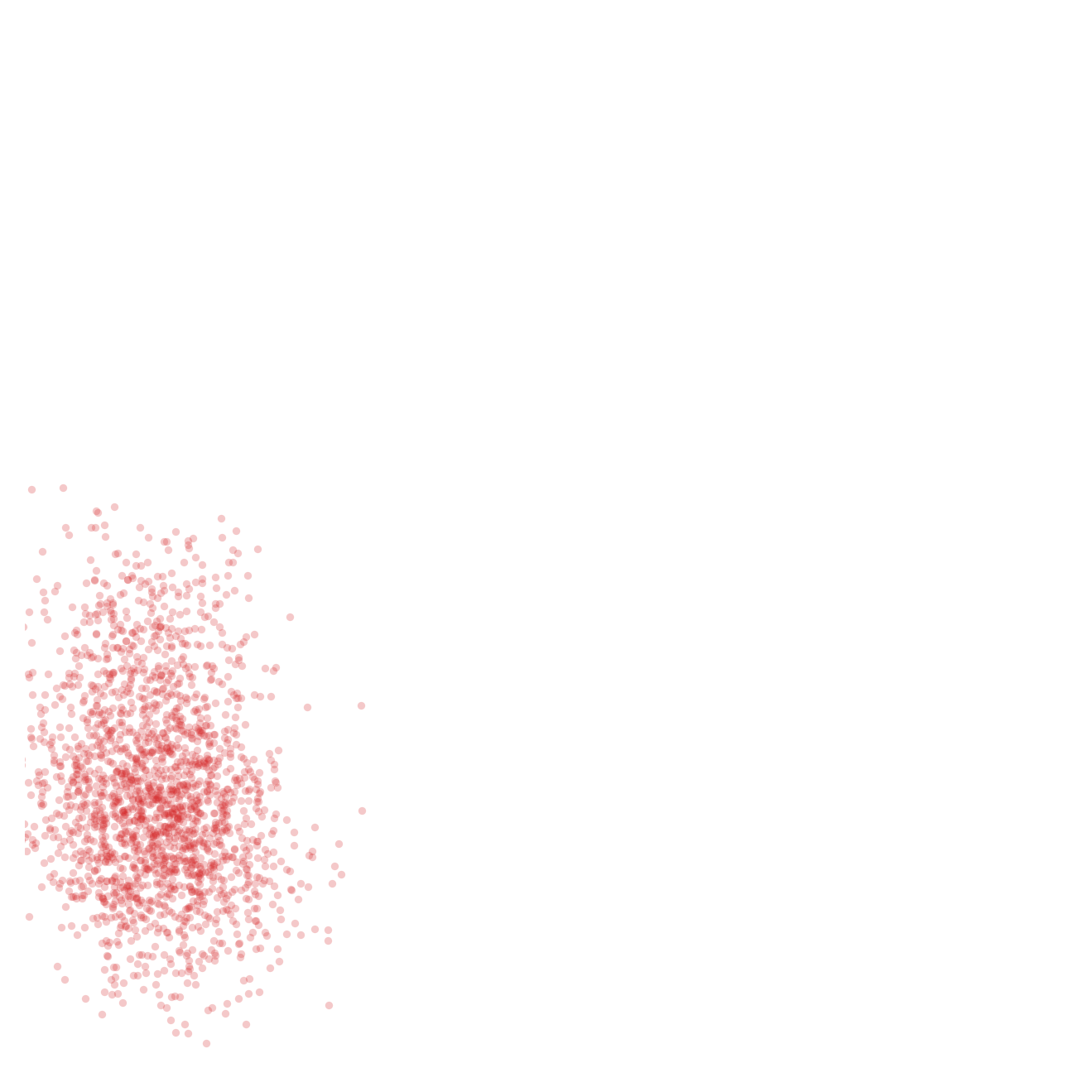}}
        &\fbox{\includegraphics[width=0.16\textwidth]{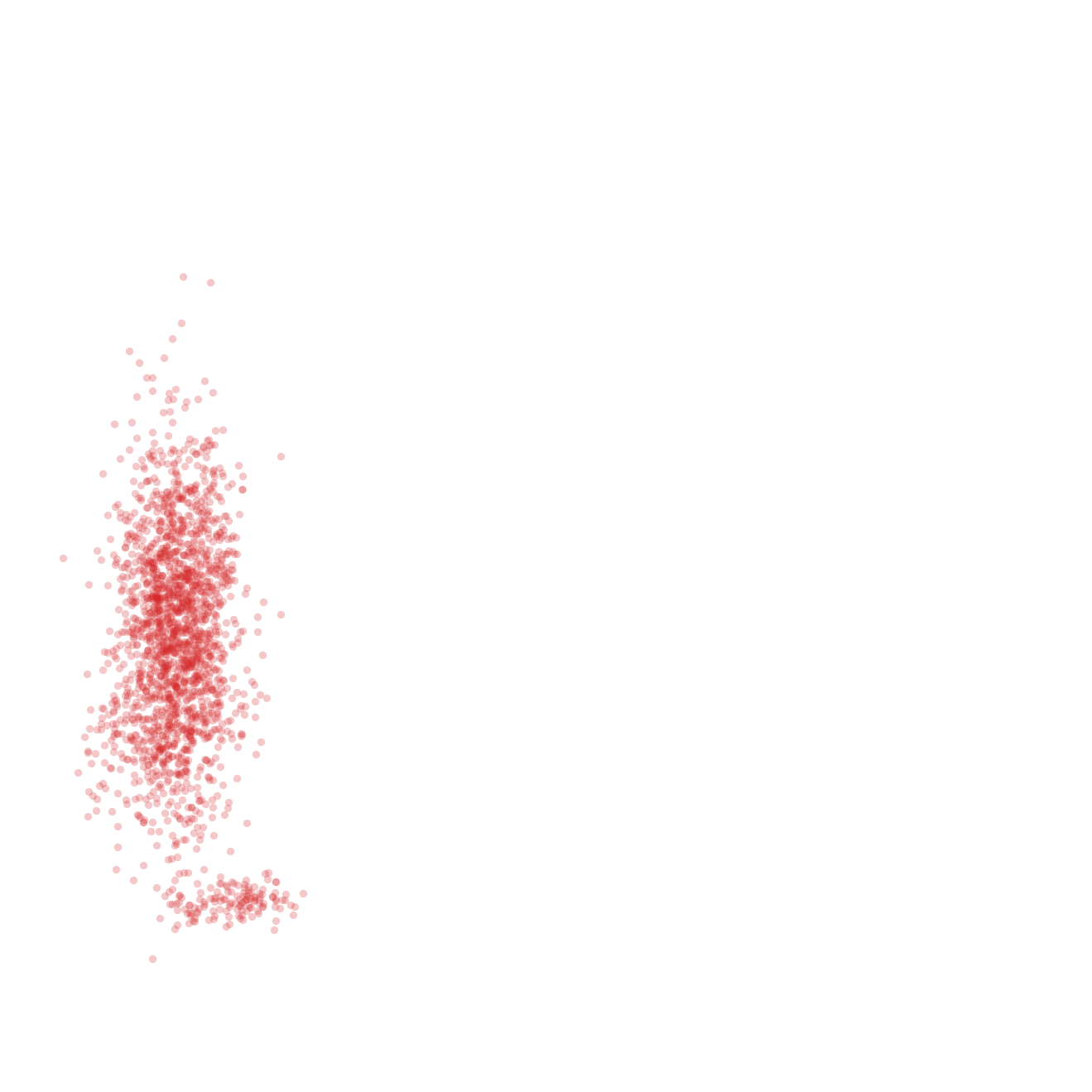}}
        &\fbox{\includegraphics[width=0.16\textwidth]{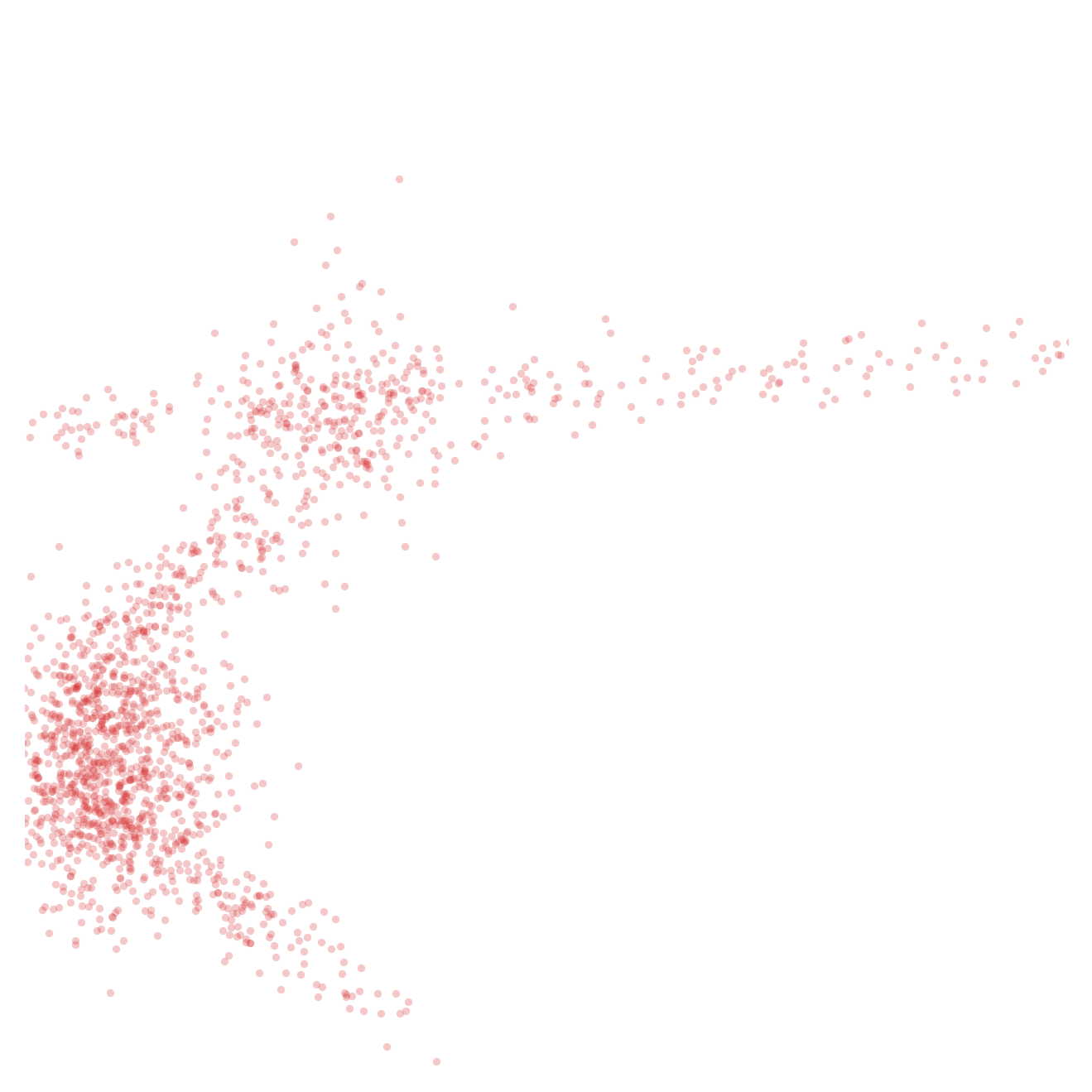}}
\end{tabular}
    \caption{Samples generated by the final step proposal for each method for $50$-dimensional funnel (top row) with $N=50$ components in the proposal mixture, and $5$-dimensional banana (bottom row) with $N=10$ components in the proposal mixture, each projected into their first two coordinates.}
    \label{fig:funnel-banana}
    \vspace*{-0.5em}
\end{figure*}

\paragraph{Our AAIS Implementation} 

The above is a general setup for AAIS. Choices must be made for: the objective $\mathcal{D}$, the parametrisation of $u_\phi$, and the form of proposal.

We parametrise our proposals as equally-weighted Gaussian mixtures with $N$ components:
\begin{align}
    q_t(x) = \frac{1}{N}\sum_{n=1}^N \mathcal{N}(\mu_n^{(t)}, \Sigma_n^{(t)}),
\end{align}
where the $\mu_n^{(t)}$ and $\Sigma_n^{(t)}$ are adapted at each step. We use a diagonal $\Sigma_n^{(t)}$ in our experiments.

\looseness=-1
Our adaptation function is chosen to be scale-equivariant (see \cref{app:equivariance}). This gives the sampler better generalisation (see \cref{app:equivariance-abl}) and also removes the requirement for scale-specific hyperparameter tuning. Our $u_\phi$ produces additive adaptations to the $\mu_n^{(t)}$ and multiplicative adaptations to the $\Sigma_n^{(t)}$, taking in only the current time step's information and proposal -- the samples $x_{s}^{(t)}$, their weights $w(x_s^{(t)})$, target gradient evaluations at the $x_s^{(t)}$ and $\mu_n^{(t)}$ , the time step $t$, and $q_{t}$. We use a neural network structure similar to \citet{zaheer2017deep} making use of multiple lightweight multilayer perceptrons and pooling to ensure permutation-invariance with respect to the input sample-weight pairs. This allows the final AAIS algorithm to run with any number of samples taken at each step and for any number of components -- not necessarily just the values used when training. See \cref{app:update-function} for full adaptation function details.

Our choice for $\mathcal{D}$ is a step-wise KL divergence:
\begin{align}\label{eq:obj-implementation}
    \mathcal{D}(R, \pi) = \frac{1}{T}\sum_{t=2}^{T+1}\KL(q_t \mid\mid p)+\text{const}.
\end{align}
To allow for gradients to propagate from $\KL(q_t \mid \mid p)$ back to the previous adaptations we make use of the reparametrisation trick for Gaussians and the deterministic-mixture sampling procedure of \citet{elvira2017improving}, where an equal number of samples are taken from each component in $q_{t}$.

To stabilise training we make use of annealing in the early steps of training, and we modify the estimator of the $\KL$ divergences to use compressed log-density ratios. Details are in \cref{app:training-details}, and the full algorithm for training our AAIS implementation is given in \cref{alg:amortised_ais_implementation}, along with details of algorithmic cost in \cref{app:update-function}.

\section{Experiments}

\looseness=-1
We empirically evaluate the benefit of amortisation over families by comparing our AAIS implementation to previous AIS algorithms with different adaptation rules. We compare against: deterministic-mixture population Monte Carlo \citep[DM-PMC;][]{elvira2017improving} -- a resampling based adaptation rule, mixture PMC \citep[M-PMC;][]{cappe2008adaptive} -- an EM adaptation rule (equivalent to minimising the $\KL$ divergence between the target and a Gaussian mixture proposal $q_t$, $\KL(p\,\mid\mid\,q_t)$), and the gradient adaptive population importance sampler \citep[GAPIS;][]{elvira2015gradient} -- adaptation based on target gradients and Hessian information. All samplers are run using a matched number of target density evaluations, and where gradients are calculated they count as $2$ evaluations. These baselines give a range of types of adaptation rules, and they require little hyperparameter tuning. GAPIS requires the gradient step size to be chosen; for every target family we run GAPIS for different values of the step size on a validation set of targets and select the parameter value with the best performance. 

We compare samplers using three main metrics: sliced 1-Wasserstein distance \citep{bonneel2015sliced} between the final proposal and the target (sliced $W_1^{\rm prop}$), the effective sample size (ESS) using the temporal mixture of all proposals, and the error in the estimated log-normalising constant $\log Z$. To assess sampling on a family of distributions, we average these metrics over a batch of test targets and also compute the worst $5\%$ quantile for each metric over the batch. All metric details are given in \cref{app:metrics}.

\subsection{Sampling in Families}

Details of all target families are given in \cref{app:targets}.

\paragraph{Synthetic Densities}
We evaluate AAIS on a family formed from Neal's funnel distribution \citep{neal2000slice}:
\begin{align}\label{eq:funnel}
    x_1\sim \mathcal{N}(0, \sigma^2), \quad x_{2:d}\mid x_1 \sim \mathcal{N}(0, e^{x_1}I_{d-1}).
\end{align}
We turn this into a family of distributions by randomly sampling $\sigma\sim \mathcal{U}\left[\frac{3}{2}, 3\right]$.

\cref{tab:results-funnel} shows that AAIS outperforms the baselines consistently, with a final proposal better matched to the target, better ESS, and lower normalising constant estimation error. AAIS shows far better scalability for the problem as it is the only sampler that effectively samples from the funnel at $50$ dimensions, having magnitudes larger ESS and a much more accurate estimation of the normalising constant. AAIS outperforms on the metric means and also has better performance on the worst 5\% quantile of targets. \cref{fig:funnel-banana} (top) shows samples taken from the final proposal, $q_{T+1}$, of each AIS algorithm considered, where we see that AAIS is the only algorithm with a well-matched final proposal for the $50$-dimensional funnel. 

We also evaluate on a family built from banana-shaped distributions -- \cref{fig:funnel-banana} (bottom) -- where amortisation performs similarly well, see \cref{tab:results-banana-full} for full results.

\begin{table}[t]
  \centering
  \caption{Experimental results for Funnel $d=5,25, 50$. We report the mean over 256 test targets from the family and (for AAIS) the std.dev.\ of this mean over $3$ training seeds. The worst $5\%$ quantile of the metric over the batch of targets is shown in parentheses. Best in \best{green}, second-best in \second{blue}.}
  \vspace*{-0.5em}
  \resizebox{\linewidth}{!}{
    \begin{tabular}{@{}llccccccc}
      \toprule
         & Method $\downarrow$ Metric $\rightarrow$ & sliced $W_1^{\mathrm{prop}}$ $\downarrow$ & ESS $\uparrow$ & $\log Z$ err. $\downarrow$ \\ \midrule
        \multirow{5}{*}{\rotatebox{90}{$d=5$}} & AAIS & \best{0.4658\,{\scriptsize$\pm$0.002}\,{\scriptsize(0.978)}} & \best{0.2605\,{\scriptsize$\pm$0.008}\,{\scriptsize(0.015)}} & \best{0.1250\,{\scriptsize$\pm$0.005}\,{\scriptsize(0.241)}} \\
         & DM-PMC & 0.9592\,{\scriptsize(1.69)} & 0.0301\,{\scriptsize(0.002)} & 0.2691\,{\scriptsize(0.537)} \\
         & M-PMC & \second{0.7222\,{\scriptsize(1.43)}} & \second{0.0965\,{\scriptsize(0.004)}} & \second{0.1777\,{\scriptsize(0.347)}} \\
         & GAPIS & 1.678\,{\scriptsize(2.44)} & 0.0055\,{\scriptsize(0.001)} & 0.4400\,{\scriptsize(1.05)} \\ \cmidrule(l){2-5} 
         & True samples & 0.4451\,{\scriptsize(0.848)} & 1.000\,{\scriptsize(1.00)} & 0.0000\,{\scriptsize(0.000)} \\ \midrule
        \multirow{5}{*}{\rotatebox{90}{$d=25$}} & AAIS & \best{0.5069\,{\scriptsize$\pm$0.006}\,{\scriptsize(1.08)}} & \best{0.1953\,{\scriptsize$\pm$0.006}\,{\scriptsize(0.003)}} & \best{0.2672\,{\scriptsize$\pm$0.007}\,{\scriptsize(0.413)}} \\
         & DM-PMC & 1.423\,{\scriptsize(2.22)} & 0.0004\,{\scriptsize(0.000)} & \second{4.277\,{\scriptsize(7.06)}} \\
         & M-PMC & 2.060\,{\scriptsize(2.85)} & 0.0003\,{\scriptsize(0.000)} & 10.626\,{\scriptsize(13.29)} \\
         & GAPIS & \second{1.125\,{\scriptsize(1.68)}} & \second{0.0013\,{\scriptsize(0.000)}} & 9.789\,{\scriptsize(12.11)} \\ \cmidrule(l){2-5} 
         & True samples & 0.4787\,{\scriptsize(0.941)} & 1.000\,{\scriptsize(1.00)} & 0.0000\,{\scriptsize(0.000)} \\ \midrule
        \multirow{5}{*}{\rotatebox{90}{$d=50$}} & AAIS & \best{0.5123\,{\scriptsize$\pm$0.002}\,{\scriptsize(1.11)}} & \best{0.1327\,{\scriptsize$\pm$0.003}\,{\scriptsize(0.002)}} & \best{0.2921\,{\scriptsize$\pm$0.003}\,{\scriptsize(0.437)}} \\
         & DM-PMC & 1.837\,{\scriptsize(2.65)} & 0.0001\,{\scriptsize(0.000)} & \second{18.597\,{\scriptsize(22.77)}} \\
         & M-PMC & 2.263\,{\scriptsize(3.07)} & 0.0001\,{\scriptsize(0.000)} & 22.685\,{\scriptsize(26.08)} \\
         & GAPIS & \second{1.070\,{\scriptsize(1.64)}} & \second{0.0006\,{\scriptsize(0.000)}} & 23.015\,{\scriptsize(26.58)} \\
         & True samples & 0.4830\,{\scriptsize(0.967)} & 1.000\,{\scriptsize(1.00)} & 0.0000\,{\scriptsize(0.000)} \\ \bottomrule
    \end{tabular}
  }
  \label{tab:results-funnel}
\end{table}

\paragraph{Bayesian Posteriors}
We show here the usefulness of AAIS for Bayesian inference problems by training on a family of many related posteriors. 
We consider the following hierarchical Bayesian inference problem:
\begin{align}
    \mu \sim \mathcal{N}(0, \sigma_\mu^2),\quad \log \tau \sim \mathcal{N}(0, \sigma_\tau^2),~ j=1,\dots, J,\\
    y_j \mid \theta_j \sim \mathcal{N}(\theta_j, \sigma_j^2), \quad \theta_j \mid \mu, \tau \sim \mathcal{N}(\mu, \tau^2).
\end{align}
\looseness=-1
The $\sigma^2_j$ are assumed to be known, and the task is to infer $\theta_{1:J}, \mu$, and $\tau$, so $d=J+2$. To form a family of distributions we randomly sample: observations $y_{1:J}$, the parameters of the data generating process $\theta_{1:J},\mu,\tau$, and the likelihood model parameters $\sigma_j^2$. \cref{tab:results-hierarchical} shows the usefulness of amortisation over families for this problem. Again, AAIS scales much better to the higher-dimensional problems than the baselines and is able to perform stable adaptation of the proposal. On this Bayesian hierarchical problem AAIS has amortised not only over different datasets, but also over different data generating processes and different models for the likelihood (different $\sigma_j^2)$. We also evaluate performance on a Bayesian linear regression (BLR) problem in \cref{tab:results-blr-full}, where AAIS performs similarly well.

\begin{table}[t]
  \centering
  \caption{Experimental results for the Bayesian hierarchical model $d=5,25$. Metrics reported follow the convention of \cref{tab:results-funnel}.}
  \vspace*{-0.5em}
  \resizebox{\linewidth}{!}{
    \begin{tabular}{@{}llccccccc}
      \toprule
         & Method $\downarrow$ Metric $\rightarrow$ & sliced $W_1^{\mathrm{prop}}$ $\downarrow$ & ESS $\uparrow$ & $\log Z$ err. $\downarrow$ \\ \midrule
        \multirow{5}{*}{\rotatebox{90}{$d=5$}} & AAIS & \best{0.8718\,{\scriptsize$\pm$0.029}\,{\scriptsize(1.50)}} & \best{0.0358\,{\scriptsize$\pm$0.003}\,{\scriptsize(0.002)}} & \best{0.5812\,{\scriptsize$\pm$0.007}\,{\scriptsize(0.972)}} \\
         & DM-PMC & \second{1.594\,{\scriptsize(3.16)}} & \second{0.0120\,{\scriptsize(0.001)}} & \second{0.8585\,{\scriptsize(1.69)}} \\
         & M-PMC & 2.005\,{\scriptsize(4.31)} & 0.0081\,{\scriptsize(0.000)} & 1.674\,{\scriptsize(3.81)} \\
         & GAPIS & 138.7\,{\scriptsize(543.3)} & 0.0016\,{\scriptsize(0.000)} & 2.449\,{\scriptsize(5.90)} \\ \cmidrule(l){2-5} 
         & True samples & 0.3458\,{\scriptsize(0.565)} & 1.000\,{\scriptsize(1.00)} & 0.0000\,{\scriptsize(0.000)} \\ \midrule
        \multirow{5}{*}{\rotatebox{90}{$d=25$}} & AAIS & \best{0.9877\,{\scriptsize$\pm$0.020}\,{\scriptsize(2.41)}} & \best{0.0125\,{\scriptsize$\pm$0.001}\,{\scriptsize(0.001)}} & \best{1.279\,{\scriptsize$\pm$0.022}\,{\scriptsize(2.19)}} \\
         & DM-PMC & \second{3.675\,{\scriptsize(6.99)}} & 0.0003\,{\scriptsize(0.000)} & 20.770\,{\scriptsize(33.99)} \\
         & M-PMC & 5.413\,{\scriptsize(9.27)} & 0.0001\,{\scriptsize(0.000)} & 48.089\,{\scriptsize(76.65)} \\
         & GAPIS & 429.0\,{\scriptsize($1.2\!\times\!10^{3}$)} & \second{0.0007\,{\scriptsize(0.000)}} & \second{19.314\,{\scriptsize(41.64)}} \\ \cmidrule(l){2-5} 
         & True samples & 0.3343\,{\scriptsize(0.598)} & 1.000\,{\scriptsize(1.00)} & 0.0000\,{\scriptsize(0.000)} \\ \bottomrule
    \end{tabular}
  }
  \label{tab:results-hierarchical}
\end{table}

\begin{table}[t]
  \centering
  \caption{Performance of AAIS amortised over a single target vs over a whole family, all targets are at $d=25$. Metrics reported follow the convention of \cref{tab:results-funnel}.}
  \vspace*{-0.5em}
  \resizebox{\linewidth}{!}{
    \begin{tabular}{@{}clccccc}
      \toprule
        Target Family & Amortised over & Evaluated on & $\log Z$ err. $\downarrow$ \\ \midrule
        \multirow{4}{*}{\rotatebox{0}{Funnel}} & Single target & \multirow{2}{*}{\rotatebox{0}{Single target}} & {$0.2656$\,{\scriptsize$\pm 0.0062$}\,{\scriptsize$(0.3187)$}} \\
         & Family &  & \best{$0.2647$\,{\scriptsize$\pm 0.0081$}\,{\scriptsize$(0.3222)$}} \\ \cmidrule(l){2-4} 
         & Single target & \multirow{2}{*}{\rotatebox{0}{Family}} & \best{$0.2661$\,{\scriptsize$\pm 0.0049$}\,{\scriptsize$(0.4080)$}} \\
         & Family &  & {$0.2672$\,{\scriptsize$\pm 0.0074$}\,{\scriptsize$(0.4131)$}} \\ \midrule
        \multirow{4}{*}{\rotatebox{0}{Banana}} & Single target & \multirow{2}{*}{\rotatebox{0}{Single target}} & \best{$1.096$\,{\scriptsize$\pm 0.0005$}\,{\scriptsize$(1.529)$}} \\
         & Family &  & {$1.500$\,{\scriptsize$\pm 0.0244$}\,{\scriptsize$(2.102)$}} \\ \cmidrule(l){2-4} 
         & Single target & \multirow{2}{*}{\rotatebox{0}{Family}} & {$2.563$\,{\scriptsize$\pm 0.0315$}\,{\scriptsize$(4.500)$}} \\
         & Family &  & \best{$2.127$\,{\scriptsize$\pm 0.0138$}\,{\scriptsize$(3.340)$}} \\ \midrule
        \multirow{4}{*}{\rotatebox{0}{Hierarchical}} & Single target & \multirow{2}{*}{\rotatebox{0}{Single target}} & \best{$0.9722$\,{\scriptsize$\pm 0.0194$}\,{\scriptsize$(1.208)$}} \\
         & Family &  & {$1.281$\,{\scriptsize$\pm 0.0208$}\,{\scriptsize$(1.572)$}} \\ \cmidrule(l){2-4} 
         & Single target & \multirow{2}{*}{\rotatebox{0}{Family}} & {$4.720$\,{\scriptsize$\pm 0.1137$}\,{\scriptsize$(15.600)$}} \\
         & Family &  & \best{$1.279$\,{\scriptsize$\pm 0.0214$}\,{\scriptsize$(2.196)$}} \\ \midrule
        \multirow{4}{*}{\rotatebox{0}{BLR}} & Single target & \multirow{2}{*}{\rotatebox{0}{Single target}} & \best{$293.7$\,{\scriptsize$\pm 5.751$}\,{\scriptsize$(335.1)$}} \\
         & Family &  & {$428.8$\,{\scriptsize$\pm 7.762$}\,{\scriptsize$(461.9)$}} \\ \cmidrule(l){2-4} 
         & Single target & \multirow{2}{*}{\rotatebox{0}{Family}} & {$605.2$\,{\scriptsize$\pm 9.940$}\,{\scriptsize$(1.1\!\times\!10^{3})$}} \\
         & Family &  & \best{$374.3$\,{\scriptsize$\pm 4.139$}\,{\scriptsize$(998.9)$}} \\ \bottomrule
    \end{tabular}
  }
  \label{tab:results-single-vs-family}
\end{table}
\begin{table*}[t]
  \caption{Generalisation of a trained AAIS algorithm to shifted targets of the same form, see family definitions in \cref{app:targets}. Sliced 1-Wasserstein distance of the final proposal to the target is reported and follows the convention of \cref{tab:results-funnel}. A ``-" indicates that the mean could not be calculated due to numerical divergence.}
  \vspace*{-0.5em}
  \resizebox{\linewidth}{!}{
    \begin{tabular}{@{}llcccccc}
      \toprule
      Base Target & Shift $\downarrow$ Method $\rightarrow$
      & AAIS & DM-PMC & M-PMC & GAPIS 
      \\
      \midrule
        \multirow{2}{*}{Banana ($d=5$)} & Curvature range: $\left[\frac{1}{10}, \frac{1}{2}\right] \to \left[\frac{1}{2}, 1\right]$ & \best{$1.365$\,{\scriptsize$\pm0.0344$}\,{\scriptsize$(2.459)$}} & $1.530$\,{\scriptsize($3.040$)} & $1.518$\,{\scriptsize($2.951$)} & \second{$1.387$\,{\scriptsize($1.996$)}} \\
         & Curvature range: $\left[\frac{1}{10}, \frac{1}{2}\right] \to [1,2]$ & \second{$2.912$\,{\scriptsize$\pm0.0520$}\,{\scriptsize$(5.382)$}} & $3.370$\,{\scriptsize($6.450$)} & $3.429$\,{\scriptsize($6.192$)} & \best{$2.278$\,{\scriptsize($4.172$)}} \\\midrule
        \multirow{4}{*}{BLR ($d=5$)} & $\beta^*\sim\mathcal{N}(0, 4I_r) \to \beta^*\sim\mathcal{N}(0, 64I_r)$ & \second{$3.980$\,{\scriptsize$\pm0.0428$}\,{\scriptsize$(7.859)$}} & \best{$1.323$\,{\scriptsize$(4.059)$}} & $5.108$\,{\scriptsize$(9.767)$} & $131.1$\,{\scriptsize$(611.8)$} \\
        & $\beta^*\sim\mathcal{N}(0, 4I_r) \to \beta^*\sim\mathcal{N}(0, 400I_r)$ & \second{$12.231$\,{\scriptsize$\pm0.0404$}\,{\scriptsize$(20.850)$}} & \best{6.793\,{\scriptsize(16.909)}} & 13.203\,{\scriptsize(22.528)} & 684.2\,{\scriptsize$(3.8\!\times\!10^{3})$} \\\cmidrule(lr){2-6}
        & $n$ range: $[3r, 12r] \to [12r, 48r]$ & \best{$0.7802$\,{\scriptsize$\pm0.0302$}\,{\scriptsize$(2.521)$}} & \second{$1.152$\,{\scriptsize$(2.197)$}} & $2.624$\,{\scriptsize$(4.347)$} & $132.9$\,{\scriptsize$(668.3)$} \\
        & $n$ range: $[3r, 12r] \to [48r, 192r]$ & \best{$0.8607$\,{\scriptsize$\pm0.0162$}\,{\scriptsize$(2.648)$}} & \second{$1.190$\,{\scriptsize$(2.101)$}} & $2.753$\,{\scriptsize$(4.494)$} & $515.2$\,{\scriptsize$(2.5\!\times\!10^{3})$} \\\midrule
        \multirow{2}{*}{Funnel ($d=5$)} & $x_1$ variance range: $\left[\frac{3}{2}, 3\right] \to \left[3, \frac{9}{2}\right]$ & \best{$3.280$\,{\scriptsize$\pm0.0184$}\,{\scriptsize$(7.076)$}} & $3.955$\,{\scriptsize$(7.728)$} & $3.970$\,{\scriptsize$(7.916)$} & \second{$3.927$\,{\scriptsize$(7.551)$}} \\
        & $x_1$ variance range: $\left[\frac{3}{2}, 3\right] \to \left[\frac{9}{2}, 6\right]$ & \second{$21.723$\,{\scriptsize$\pm0.0426$}\,{\scriptsize$(59.748)$}} & $22.256$\,{\scriptsize$(60.264)$} & $22.385$\,{\scriptsize$(60.376)$} & \best{$21.531$\,{\scriptsize$(58.947)$}} \\\midrule
        \multirow{2}{*}{Hierarchical ($d=5$)} & $\tau$ range: $\left[\frac{1}{2}, 8\right] \to \left[8, 20\right]$ & \best{$1.848$\,{\scriptsize$\pm0.1378$}\,{\scriptsize$(5.819)$}} & \second{$3.581$\,{\scriptsize$(8.298)$}} & $4.929$\,{\scriptsize$(12.708)$} & $9.2\!\times\!10^{12}$\,{\scriptsize$(5.0\!\times\!10^{13})$} \\ 
        & $\tau$ range: $\left[\frac{1}{2}, 8\right] \to \left[20, 32\right]$  & - & \best{$9.220$\,{\scriptsize$(19.125)$}} & \second{$12.100$\,{\scriptsize$(24.291)$}} & -\\\midrule
        \multirow{2}{*}{Banana ($d=25$)} & Curvature range: $\left[\frac{1}{10}, \frac{1}{2}\right] \to \left[\frac{1}{2}, 1\right]$ & \second{$2.493$\,{\scriptsize$\pm0.0234$}\,{\scriptsize$(3.211)$}} & $2.751$\,{\scriptsize$(3.606)$} & $3.340$\,{\scriptsize$(4.011)$} & \best{$1.109$\,{\scriptsize$(1.684)$}} \\
        & Curvature range: $\left[\frac{1}{10}, \frac{1}{2}\right] \to [1,2]$ & \second{$4.253$\,{\scriptsize$\pm0.0454$}\,{\scriptsize$(5.549)$}} & $5.114$\,{\scriptsize$(6.662)$} & $5.556$\,{\scriptsize$(7.076)$} & \best{$3.245$\,{\scriptsize$(4.806)$}} \\\midrule
        \multirow{4}{*}{BLR ($d=25$)} & $\beta^*\sim\mathcal{N}(0, 4I_r) \to \beta^*\sim\mathcal{N}(0, 64I_r)$ & \best{$3.366$\,{\scriptsize$\pm0.0550$}\,{\scriptsize$(5.645)$}} & \second{$4.164$\,{\scriptsize$(6.531)$}} & $6.825$\,{\scriptsize$(8.417)$} & $1.4\!\times\!10^{3}$\,{\scriptsize$(7.3\!\times\!10^{3})$} \\
        &  $\beta^*\sim\mathcal{N}(0, 4I_r) \to \beta^*\sim\mathcal{N}(0, 400I_r)$ & \best{$11.207$\,{\scriptsize$\pm0.2200$}\,{\scriptsize$(15.456)$}} & \second{$13.439$\,{\scriptsize$(17.379)$}} & $15.990$\,{\scriptsize$(19.547)$} & $7.3\!\times\!10^{3}$\,{\scriptsize$(3.9\!\times\!10^{4})$} \\\cmidrule(lr){2-6}
        & $n$ range: $[3r, 12r] \to [12r, 48r]$ & \best{$1.310$\,{\scriptsize$\pm0.0673$}\,{\scriptsize$(2.897)$}} & \second{$1.961$\,{\scriptsize$(3.197)$}} & $3.181$\,{\scriptsize(3.852)} & $1.1\!\times\!10^{3}$\,{\scriptsize($5.5\!\times\!10^{3}$)} \\
        & $n$ range: $[3r, 12r] \to [48r, 192r]$ & \best{$1.512$\,{\scriptsize$\pm0.0747$}\,{\scriptsize$(3.192)$}} & \second{$1.962$\,{\scriptsize$(3.212)$}} & $3.177$\,{\scriptsize$(3.954)$} & $5.2\!\times\!10^{3}$\,{\scriptsize$(3.0\!\times\!10^{4})$} \\\midrule
        \multirow{2}{*}{Funnel ($d=25$)} & $x_1$ variance range: $\left[\frac{3}{2}, 3\right] \to \left[3, \frac{9}{2}\right]$ & \best{$3.609$\,{\scriptsize$\pm0.0564$}\,{\scriptsize$(8.537)$}} & $4.650$\,{\scriptsize$(9.820)$} & $5.760$\,{\scriptsize$(10.868)$} & \second{$4.154$\,{\scriptsize$(8.906)$}} \\
        & $x_1$ variance range: $\left[\frac{3}{2}, 3\right] \to \left[\frac{9}{2}, 6\right]$ & \best{$24.502$\,{\scriptsize$\pm0.0988$}\,{\scriptsize$(65.907)$}} & $25.417$\,{\scriptsize$(66.595)$} & $26.834$\,{\scriptsize$(67.877)$} & \second{$24.580$\,{\scriptsize$(65.867)$}} \\\midrule
        \multirow{2}{*}{Hierarchical ($d=25$)} & $\tau$ range: $\left[\frac{1}{2}, 8\right] \to \left[8, 20\right]$ & \best{$3.767$\,{\scriptsize$\pm0.0882$}\,{\scriptsize$(7.176)$}} & \second{$10.288$\,{\scriptsize$(14.992)$}} & $12.936$\,{\scriptsize$(18.187)$} & $2.7\!\times\!10^{13}$\,{\scriptsize$(1.1\!\times\!10^{14})$} \\
        & $\tau$ range: $\left[\frac{1}{2}, 8\right] \to \left[20, 32\right]$ & - & \best{$19.281$\,{\scriptsize$(26.756)$}} & \second{$22.520$\,{\scriptsize$(30.694)$}} & - \\
      \bottomrule
    \end{tabular}
  }
  \label{tab:results-generalisation-shifted}
  \vspace*{-0.5em}
\end{table*}
\paragraph{Comparison to Single-Target Amortisation} To distinguish the performance gained from family-level amortisation over the performance gained from single-target amortisation, we show how AAIS trained over a family of distributions compares with AAIS trained on only a single distribution from the family. From \cref{tab:results-single-vs-family} we see that single-target AAIS generally performs better on its training target, as expected. More importantly, we see that family-amortised AAIS performs better when evaluated on other distributions within the family, particularly on the Bayesian inference problems. Targets from the funnel family are close enough together that amortisation over the family is not meaningfully different from single-target training.

\subsection{Generalisation to Unseen Targets} 
AAIS can be run on any target for evaluation, including targets not within its training family. Using the target families considered previously, we change the random parameter distributions defining each target family and obtain modified versions of each family. For example, we change the range which we sample $\sigma$ from in \eqref{eq:funnel} from $\left[\frac{3}{2}, 3\right]$ to $\left[3,\frac{9}{2}\right]$. In \cref{tab:results-generalisation-shifted} we use AAIS trained on the original targets and test them on the modified targets. Even though AAIS was not trained on the modified families, it still often does the best. AAIS generalises well to the smaller of two different modifications performed to both the funnel and Bayesian hierarchical families, but is unable to generalise to the larger modification. AAIS generalises particularly well to the modifications performed on the Bayesian linear regression family, including shifts to the random number of data-points supplied for posterior inference. Note that GAPIS was re-tuned on the modified families.

\subsection{Objective Choice}\label{sec:obj-choice}
In \cref{tab:results-abl-obj-loss} we compare different choices for divergence inside \eqref{eq:obj-implementation} instead of $\KL(q_t\mid\mid p)$. We compare using $\KL(p\mid\mid q_t)$ estimated either with  a SNIS-estimator or using true target samples (oracle draws). The $\KL(p \mid \mid q_t)$ estimated with true target samples achieves better performance on the two synthetic densities, but we were unable to achieve stable training of AAIS with either $\KL(p\mid\mid q_t)$ objective on the Bayesian hierarchical family. A particular challenge in the use of these objectives is that if the current proposal is far from the current target, then the importance weights can be extremely large, making numerical optimisation difficult. See \cref{app:obj-abl} for further details.

\begin{table}[t]
  \centering
  \caption{Effect of three different divergences in the AAIS objective. Metrics reported follow the convention of \cref{tab:results-funnel}. All targets are at $d=25$. A ``-" indicates that the mean could not be calculated due to numerical divergence.}
  \vspace*{-0.5em}
  \resizebox{\linewidth}{!}{
    \begin{tabular}{@{}llcccc}
      \toprule
        Target Family & Objective $\downarrow$ Metric $\rightarrow$ & $\log Z$ err. $\downarrow$ \\ \midrule
        \multirow{3}{*}{Funnel} & $\KL(q_t \mid \mid p)$ & \second{$0.2672$\,{\scriptsize$\pm 0.0074$}\,{\scriptsize$(0.4131)$}} \\
         & $\KL(p \mid \mid q_t)$ (SNIS) & $4.496$\,{\scriptsize$\pm 0.7189$}\,{\scriptsize$(6.445)$} \\
         & $\KL(p \mid \mid q_t)$ (oracle draws) & \best{$0.2145$\,{\scriptsize$\pm 0.0077$}\,{\scriptsize$(0.3352)$}} \\ \midrule
        \multirow{3}{*}{Banana} & $\KL(q_t \mid \mid p)$ & \second{$2.127$\,{\scriptsize$\pm 0.0138$}\,{\scriptsize$(3.341)$}} \\
         & $\KL(p \mid \mid q_t)$ (SNIS) & $6.705$\,{\scriptsize$\pm 0.6376$}\,{\scriptsize$(11.362)$} \\
         & $\KL(p \mid \mid q_t)$ (oracle draws) & \best{$1.130$\,{\scriptsize$\pm 0.0392$}\,{\scriptsize$(2.398)$}} \\ \midrule
        \multirow{3}{*}{Hierarchical} & $\KL(q_t \mid \mid p)$ & \best{$1.279$\,{\scriptsize$\pm 0.0215$}\,{\scriptsize$(2.194)$}} \\
         & $\KL(p \mid \mid q_t)$ (SNIS) & - \\
         & $\KL(p \mid \mid q_t)$ (oracle draws) & - \\ \midrule
        \multirow{3}{*}{BLR} & $\KL(q_t \mid \mid p)$ & \best{$374.4$\,{\scriptsize$\pm 4.126$}\,{\scriptsize$(994.0)$}} \\
         & $\KL(p \mid \mid q_t)$ (SNIS) & $1.6\!\times\!10^{3}$\,{\scriptsize$\pm 1.4\!\times\!10^{3}$}\,{\scriptsize$(5.0\!\times\!10^{3})$} \\
         & $\KL(p \mid \mid q_t)$ (oracle draws) & \second{$583.8$\,{\scriptsize$\pm 41.198$}\,{\scriptsize$(1.1\!\times\!10^{3})$}} \\ \bottomrule
    \end{tabular}
  }
  \label{tab:results-abl-obj-loss}
  \vspace*{-1em}
\end{table}

\subsection{Additional Experiments}

In \cref{app:additional-experiments} we provide various ablation studies, which justify our algorithm choices, such as the use of: target gradients (\cref{app:grad-abl}), expectation-maximisation-based features (\cref{app:em-abl}), diagonal covariance structure (\cref{app:covariance-abl}), and a step-wise objective (\cref{app:obj-abl}). We also show that equivariance gives better generalisation performance at the cost of some performance on the training family (\cref{app:equivariance-abl}). We explore the dependence on model capacity (\cref{app:capacity-abl}) and the generalisation of each trained model to targets from the other families (\cref{app:generalisation-other}). We also study the limits of scaling AAIS to multimodal distributions (\cref{tab:results-gmm5-full}).

Finally, we also give extra metrics for the results presented in the preceding subsections (\cref{app:additional-experiments-main}).

\section{Related Work}
\paragraph{AIS with Learnt Components} The idea to use adaptation rules based on the gradient of a divergence between proposal and target within an AIS algorithm has been explored previously, e.g., \citet{el2019variational, han2017stein, el2019stochastic, akyildiz2021convergence, perello2023adaptively} -- these methods learn in an online fashion rather than amortising. \citet{dowling2018improved} propose to use variational inference to learn an initialisation for AIS. The most similar approach to ours in the AIS literature is that of \citet{el2021policy} who use a reinforcement learning setup and, like AAIS, propose to learn the adaptation between iterations. Unlike our setup, information gained from the AIS generation process is detached from the updating procedure -- if the model is already trained, one could sample all of the parameters for every time step before running the algorithm. Under this formulation adaptation can never occur on a new target distribution, and thus amortisation over multiple targets is not possible.
\paragraph{Amortisation over Multiple Distributions} AAIS tackles the problem of sampling and integral estimation over many different distributions/tasks. Prior works in this direction include \citet{xi2018bayesian,chen2024conditional,sun2023vector, sun2023meta}, where the problem of jointly estimating multiple integrals is explored -- these works differ from our problem setup in that they usually assume access to true samples from the target when training. Recent progress in amortised sampling for molecular dynamics was made by \citet{klein2023timewarp, klein2024transferable}. Closely related is the area of amortised Bayesian inference, where amortisation is over datasets; such works include \citet{ritchie2016deep,paige2016inference,le2017auto}, and \citet[AMCI;][]{golinski2019amortized}, who set up a general framework for amortising the calculation of expectations. Many of these works, in a Bayesian inference setting, propose to learn a mapping from a dataset to the parameters of an importance sampling proposal. We adopt a similar framework, also through importance sampling, but we emphasise that our algorithm is able to solve a wider range of problems: these previous approaches require conditioning information that fully specifies the target distribution, whereas AAIS obtains information about the specific sampling target online during the sampling procedure.

\section{Conclusion}

We have shown that learning the adaptation rule in adaptive importance sampling by training over a family of distributions can give better sampling and estimation performance on that family. Unlike many amortisation approaches, AAIS does not require access to conditioning information about the target in advance of the sampling procedure. This is a useful freedom because we do not need to explicitly define the structure of the distribution family through a conditioning variable; instead, the structure can be implicit and inferred by observing target information through the training procedure. An extension of our AAIS algorithm could use conditioning information as an input to its adaptation rule. Alternatively, AAIS could be used to adapt the output proposal of another method that uses conditioning, such as that of \citet{golinski2019amortized}.

Our AAIS implementation did not explore some of the many choices that the setup in \cref{sec:aais} allows for, such as the possibility to make the adaptation non-Markovian (i.e., dependent on the entire rollout history), or to use a loss function that aims only for good estimation of a particular expectation \eqref{eq:expectation} rather than a well matched proposal.  AAIS can be extended to this setting by targeting the optimal distribution for a given $f$ (see e.g., \citet{mcbook}); the methods suggested by \citet{golinski2019amortized} can be transferred to AAIS.

Our method shows promising generalisation potential to distributions that lie outside the training family. This opens the possibility for training a sampler that can perform well on many distributions that are structurally related, but have not necessarily been part of the family trained on. Our experiments deliberately made use of small-scale neural networks with small parameter counts, so scaling may bring greater generalisation performance. Our study of generalisation performance also raises the question of what kind of structure in these distribution families is amortisable, and whether we can predict if a sampler trained on one family will perform well on another.

Whilst our algorithm focused on adaptive importance sampling, the principle of using information obtained at sampling time to infer target information available due to amortisation can be extended to other iterative Monte Carlo algorithms that intake target information at each step, such as Markov chain Monte Carlo, where we expect that similar gains could be made.

\subsection*{AI Use Statement}

Generative AI tools were lightly used to assist with coding tasks. They were not used for any other tasks, including writing and figure creation.

\subsubsection*{Acknowledgements}
The work of AC is supported by the UKRI EPSRC through the CDT in Machine Learning Systems hosted in 
the School of Informatics, University of Edinburgh (EP/Y03516X/1), as well as joint sponsorship from Level E Research. The work of VE and ESW is supported by the Advanced Research and Invention Agency (ARIA). ESW acknowledges support from the CIFAR Learning in Machines and Brains programme. 

\bibliography{references}

\clearpage
\appendix
\thispagestyle{empty}

\onecolumn
\aistatstitle{Amortised Adaptive Importance Sampling: \\
Supplementary Materials}

\section{Further AAIS details}

Here we give details of the AAIS algorithm (\cref{alg:amortised_ais}) and our implementation (\cref{alg:amortised_ais_implementation}). 

\begin{algorithm}
  \begin{algorithmic}[1]
    \caption{Amortised Adaptive Importance Sampling}
    \label{alg:amortised_ais}
    \Require $S$, $T$, adaptation function $u_\phi$, $N_{epochs}$, target family $\Pi$, proposal initialisation family $\mathcal{Q}$, divergence/loss $D$, batch size $B$.
    \For{$epoch = 1, \dots, N_{epochs}$}
    \State Draw a random batch of targets, $\pi_b \sim \Pi$, $b=1,\dots,B$
    \State Draw a random starting proposal $q_{b,1} \sim \mathcal{Q}$
    \For{$t=1,\dots, T$}
    \State Draw $x_{b,s}^{(t)} {\sim} q_{b,t}, ~s=1,\dots,S,~ b=1,\dots,B$.
    \State Calculate $w_{b,s}^{(t)}=w(x_{b,s}^{(t)}),~ s=1,\dots,S,~b=1,\dots,B$, using \eqref{eq:unbiased-IS}.
    \State Calculate observables $O_{b,t},~b=1,\dots,B$.
    \State Adapt $q_{b,t+1} = u_\phi(q_{b,1:t}, O_{b,1:t}), ~b=1,\dots,B$.
    \EndFor
    \State Draw and weight $x_{b,s}^{(T+1)} \sim q_{b,T+1}, ~s=1,\dots,S,~b=1,\dots,B$.
    \State Compute $D(q_{1:T+1}, \{(x_{b,n,k}^{(t)}, w_{b,s}^{(t)})\}_{s=1, t=1}^{S, T+1}, \pi))$ for $b=1,\dots,B$.
    \State Estimate $\nabla_\phi\mathbb{E}_{\pi \sim \Pi}[\mathbb{E}[D(q_{1:T+1}, \{(x_{b,n,k}^{(t)}, w_{b,s}^{(t)})\}_{s=1, t=1}^{S, T+1}, \pi)]]$ and update $\phi$.
    \EndFor
    \State \Return $u_\phi$
  \end{algorithmic}
\end{algorithm}

\begin{algorithm}
  \begin{algorithmic}[1]
    \caption{Our AAIS implementation with an equal-weight Gaussian mixture proposal. The adaptation function is written with $\pi$ as an input. This is used as notational convenience to mean that density evaluations and gradient evaluations of $\pi$ are used as input, see \cref{app:update-function}.}
    \label{alg:amortised_ais_implementation}
    \Require Number of components $N$, number of samples per component $K$, $T$, adaptation function $u_\phi$, $N_{epochs}$, target family $\Pi$, batch size $B$, temperature schedule $\kappa$
    \For{$epoch = 1, \dots, N_{epochs}$}
    \State Draw batch of random target unnormalised densities $\pi'_b \sim \Pi$,$~b=1,\dots,B$.
    \State Temper the density $\pi_b(x) = \pi'_b(x)^{\frac{1}{\kappa(epoch)}}$
    \State Draw random starting proposal parameters $\theta_{b,1} = \{\mu_{b,n}^{(1)}, L_{b,n}^{(1)}\}_{n=1}^N$ parametrising $q_{b,1}$.
    \For{$t=1,\dots, T$}
    \State $\epsilon_{b,n,k}\sim \mathcal{N}(0,I_d),~b=1,\dots,B,~n=1,\dots,N,~k=1,\dots,K$. 
    \State $x_{b,n,k}^{(t)} = \mu_{b,n}^{(t)} + L_{b,n}^{(t)} \epsilon_{b,n,k}$ 
    \State $w_{b,n,k}^{(t)} = \frac{\pi_b(x_{b,n,k}^{(t)})}{q_{b,t}(x_{b,n,k}^{(t)})}$
    \State Adapt $\theta_{b,t+1} = u_\phi(\theta_{b,t}, \{(x_{b,n,k}^{(t)}, w_{b,n,k}^{(t)})\}_{k=1, n=1}^{K,N}, \pi, \frac{t-1}{T-1})$.
    \EndFor
    \State $\epsilon_{b,n,k}\sim \mathcal{N}(0,I_d),~b=1,\dots,B,~n=1,\dots,N,~k=1,\dots,K$.
    \State $x_{b,n,k} = \mu_{b,n}^{(T+1)} + L_{b,n}^{(T+1)} \epsilon_{b,n,k}$ 
    \State Estimate $\nabla_{\phi} \mathbb{E}_{\pi \sim \Pi}\left[\frac{1}{T}\sum_{t=2}^{T+1} \KL (q_{t} \mid\mid p)\right]$ \eqref{eq:kl-estimator} using collected samples and perform a gradient step on $\phi$.
    \EndFor
    \State \Return $u_\phi$
  \end{algorithmic}
\end{algorithm}

\subsection{AAIS Adaptation Function Details}\label{app:update-function}
We give the details of our adaptation function for AAIS here. First we calculate the EM update statistics for Gaussian mixtures as these will be used in features for our AAIS adaptation function.

\subsubsection{EM Update Statistics}
Here we motivate the calculation of various features within our adaptation function by relating them to the statistics calculated to perform the expectation-maximisation update with a mixture of Gaussians. 

The M-step for a Gaussian mixture, $g$, with components $g_n, n=1,\dots,N$, fitted to the sample-weight pairs $\{(x_s, w_s)\}_{s=1}^S$, $w_s = w(x_s)$ from \eqref{eq:unbiased-IS}, as in \citet{cappe2008adaptive}, is given by the updates:
\begin{align*}
    \alpha_n^{EM} = \frac{\sum_s w_s \rho_{n,s}}{\sum_s w_s}, \quad \mu_n^{EM} = \frac{\sum_s w_s \rho_{n,s} x_s}{\sum_s \rho_{n,s} w_s}, \quad \Sigma_n^{EM} = \frac{\sum_s w_s \rho_{n,s}(x_s - \mu^{EM}_n)(x_s - \mu^{EM}_n)^T}{\sum_i w_s \rho_s},
\end{align*}
where
\begin{align*}
    \rho_{n,s} = \mathbbm{1}_{c(s)=n},
\end{align*}
where $c(i)$ denotes the index of the component that generated sample $x_i$. 

The M-PMC \citep{cappe2008adaptive} algorithm we use as a baseline makes use of the suggested Rao-Blackwellised version where instead 
$\rho_{n,s} = \frac{\alpha_n g_n(x_s)}{\sum_{j=1}^N\alpha_jg_j(x_s)}$
is the probability of $x_s$ being generated by proposal $n$, we also restrict the covariance to be diagonal -- in line with the proposal parametrisation used by AAIS.

\subsubsection{Full Adaptation Function Details} 
The adaptation step in our algorithm is detailed here, it consists of the calculation of various features/statistics, 3 MLP layers and a final linear output layer. Some inputs are clamped where required for stability, and the final adaptation is constrained in the amount it is allowed to change the current proposal by.

\paragraph{Per-particle Features}
First, each particle-weight pair $(x_s, w_s)$ is processed. Let $c(s)$ denote the index of the proposal component that generated $x_s$, we calculate features:

\begin{align*}
    z_s &= L_{c(s)}^{-1}(x_s - \mu_{c(s)})\\
    \bar w_s &= \log \frac{w_s}{\frac{1}{|\{j: c(j) = c(s)\}|}\sum_jw_j\mathbbm{1}_{(c(j) = c(s))}}\\
    g_s &= L_n^T\nabla\log\pi(x_s).
\end{align*}

These are then passed through an MLP, $\xi$, with: an input layer, a SiLU activation, and a single hidden-layer. Passing the features through $\xi: \mathbb{R}^{2d+1} \to \mathbb{R}^{H}$, where $H$ is the hidden-dimension, gives a per-particle encoding. These are pooled over each generating component:

\begin{align*}
    h_n = \frac{1}{|\{j: c(j) = n\}|}\sum_{s=1}^S \xi(z_s, \bar w_s, g_s)\mathbbm{1}_{c(s)=n}.
\end{align*}

\paragraph{Per-component Features}
The per-component features can mostly be split into $2$ main groups of features. The first gives the adaptation access to the current parameters of the proposal, however, to maintain scale-equivariance, the features are relative component-wise features with respect to the global features of the proposal, and where appropriate are rescaled using the proposal's pooled variance:
\begin{align}\label{eq:proposal-pooled-var}
    \zeta^2 = \frac{1}{Nd}\sum_{n=1}^N\sum_{j=1}^d \left((\mu_n)_j - (\bar\mu)_j\right)^2 + \frac{1}{Nd}\sum_{n=1}^N\sum_{j=1}^d \left(\Sigma_n\right)_{jj}, \quad \bar \mu = \frac{1}{N}\sum_{n=1}^N \mu_n.
\end{align}

The second group of features are aimed at providing the adaptation with access to the EM updates. We make use of the EM updates (the non-Rao-Blackwellised version) from the previous section:

\begin{align*}
    \nu_n = L_n^{-1}(\mu^{EM} - \mu), \quad C_n = L_n^{^{-1}}\Sigma^{EM}L_n^{-T}.
\end{align*}

Where, in the same style as the first group of features, we have expressed the mean and covariance EM-informed features in terms of a residual with respect to the current parameters.

We also pass in the pooled particle encodings from the previous section, the time index, and the gradients of the log-target evaluated at the current proposal means (this is the feature used by GAPIS \citep{elvira2015gradient} to perform its adaptation).

We summarise the component-level features in \cref{tab:component_features}. The correlation features are specific to the case of updating a full covariance on each step, in cases where only the diagonal is adapted these are omitted.

\begin{table*}[h]
  \centering
  \caption{Summary of component level features.}
    \begin{tabular}{ll}
        \toprule
        Feature & Dimension \\
        \midrule
        $h_n$ -- the pooled particle encodings. & $H$ \\
        $\frac{\left(\mu_n-\bar\mu\right)}{\zeta}$, position relative to the ensemble. & $d$ \\
        $\frac12\log\Sigma_{n,jj}-\log \zeta$, variance relative to the ensemble. & $d$ \\
        Correlations of $\Sigma_n$. & $d(d{-}1)/2$ \\
        $\log N m_n$ -- EM mass share. & $1$ \\
        $\log N\alpha_n$ -- Current mixture weight multiplied by $N$. & $1$ \\
        $\frac{t-1}{T-1}$ -- Step index, normalised to $[0,1]$. & $1$ \\
        $\nu_n$ -- scaled EM mean residual. & $d$ \\
        $\log \text{diag}(C_n)$ -- log of the residual EM covariance diagonal. & $d$ \\
        Correlations of $C_n$. & $d(d{-}1)/2$ \\
        $L_n^\top\nabla\log\pi(\mu_n)$ -- Scaled target gradient at the component mean. & $d$ \\
        \bottomrule
    \end{tabular}
  \label{tab:component_features}
\end{table*}

The features $\log Nm_n, \log N\alpha_n, \log\text{diag}(C_n),$ and $ L_n^T\nabla\log\pi(\mu_n)$ are clipped with the function $f(x) = \frac{1}{5}\text{clip}(x, \pm 20)$.

\paragraph{Proposal Parameter Updating}
The component features in the previous section, denoted $\text{feat}_n$ are passed through $2$ single-layer MLP's with SiLU activation functions -- $\psi\in\mathbb{R}^{d_{\rm comp}}\to \mathbb{R}^H, \psi_{\rm cross}\in \mathbb{R}^{2H}\to \mathbb{R}^H$ where $d_{\rm comp}$ is the sum of dimensions listed in \cref{tab:component_features} -- with a pooling step in order to mix information between components in the mixture, then they are passed through a linear layer with the appropriate output dimension depending on the number of parameters needed to output to adapt the covariance:

\begin{align*}
    z_n = \psi\left(\text{feat}_n\right), \quad z_n = \psi_{\rm cross}\left(z_n, \frac{1}{N}\sum_{n=1}^Nz_n\right), \quad \Delta_n = Wz_n + b.
\end{align*}

We split the output $\Delta_n  =(\Delta\mu'_n, \Delta L'_n)$, and apply a smooth clamping to the outputs to constrain the maximum adaptation to the proposal:
\begin{align}
    \Delta\mu_n = \Delta\mu_{\rm max} \tanh\left(\frac{\Delta\mu'_n}{\Delta\mu_{\rm max}}\right).
\end{align}
The means are adapted as 
\begin{align*}
    \mu_n \gets \mu_n + L_n\Delta\mu_n.
\end{align*}

To adapt the covariances $\Sigma_n$, we split $\Delta L'_n\in \mathbb{R}^{\frac{d(d+1)}{2}}$ into a $d$-dimensional component $\Delta\log\sigma'$, and (if the full covariance is used to parametrise the mixture) a matrix, $K_n'$, with lower triangular values given by the remaining $\frac{d(d-1)}{2}$ components. These adaptations are constrained as $\Delta\log\sigma = \frac{1}{2}\Delta L_{\rm max} \tanh\left(\frac{\Delta\log\sigma'}{\Delta L_{\rm max}}\right)$ and $(K_n)_{ij} = \Delta L_{\rm max} \tanh\left(\frac{(K'_n)_{ij}}{\Delta L_{\rm max}}\right)$. Then, adapt the current $L_n$ by:
\begin{align*}
    L_n \gets L_n A_n, \quad  A_n = \text{diag}(e^{\Delta \log \sigma}) + K_n.
\end{align*}

\paragraph{Parameter Count} The number of learnt parameters in this model depends on the form of the covariance. In our main experiments we used a diagonal covariance so the correlation features in \cref{tab:component_features} are omitted so $d_{\rm comp}=H + 5d + 3$. This gives a parameter count of $((2d+1)H + H + H^2 + H) + (d_{\rm comp}H + H) + (2H^2 + H) + (2dH + 2d)$, where we have used brackets to separate the parameter counts for $\xi, \psi, \psi_{\rm cross},$ and the final layer $(W,b)$. See \cref{tab:hyperparams} for the exact parameter count in our experiments.

\paragraph{Cost} The cost of the AAIS algorithm above in terms of target density evaluations, counting gradient evaluations as double cost (one forward pass + backward pass), is:
\begin{align*}
    C = 2N(K+1)(T+1).
\end{align*}
This is calculated as: $NK(T+1)$ evaluations of the density and its gradient at the samples $x_s$ with cost $2NK(T+1)$, added to the gradient at the component means $\mu_n$ giving a further $2N(T+1)$ cost.
\subsection{Scale Equivariance of AAIS}\label{app:equivariance}
Here we give the precise statement about the equivariance properties of our AAIS algorithm and give the brief proof. This is specific to our implementation of AAIS. 

\begin{lemma}\label{lem:equivariance}
    Let $\pi^*(x) = \pi\left(\frac{x - b}{a}\right)$ and $u_\phi$ be the AAIS adaptation function as in \cref{app:update-function}, then, if we denote the output parameters from the adaptation function as 
    \begin{align*}
        \{\mu_n^{new}\}_{n=1}^N, \{L_n^{new}\}_{n=1}^N = u_\phi\left(\{\mu_{n}^{old}, L_n^{old}\}_{n=1}^N,\{x_s\}_{s=1}^S, \pi, \frac{t-1}{T-1}\right),
    \end{align*}
    we have the following:
    \begin{align*}
        \{a\mu_n^{new} + b\}_{n=1}^N, \{aL_n^{new}\}_{n=1}^N=u_\phi\left(\{a\mu_{n}^{old}+b, aL_n^{old}\}_{n=1}^N,\{ax_s + b\}_{s=1}^S, \pi^*, \frac{t-1}{T-1}\right).
    \end{align*}
\end{lemma}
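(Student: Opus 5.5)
The plan is to show that every input fed to the learnt networks $\xi$, $\psi$, $\psi_{\rm cross}$ is unchanged when we pass from the original inputs to the transformed ones. Here $a>0$ is a scalar and $b\in\mathbb{R}^d$. The outputs $\Delta\mu_n$, $\Delta\log\sigma$ and $K_n$ are then identical in the two cases. The final update rules $\mu_n\gets\mu_n+L_n\Delta\mu_n$ and $L_n\gets L_nA_n$ are affine-equivariant by construction, and this gives the claim. So the proof reduces to a feature-by-feature invariance check, in the order in which the features are computed.

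\textbf{Step 1 (weights and gradients).} Write $x^*_s=ax_s+b$, $\mu^*_n=a\mu_n+b$ and $L^*_n=aL_n$. The new proposal is the pushforward of the old one, so $q^*(x^*)=a^{-d}q(x)$. Also $\pi^*(x^*)=\pi(x)$. Hence $w^*_s=a^d w_s$. Every weight is scaled by the same constant, so the per-component normalised log-weights $\bar w_s$ are invariant. The same holds for all self-normalised quantities $\rho_{n,s}w_s/\sum w$, including the EM mass share $m_n$. For gradients, $\nabla\log\pi^*(x^*)=a^{-1}\nabla\log\pi(x)$. Therefore $g^*_s=(aL_n)^\top a^{-1}\nabla\log\pi(x_s)=g_s$, and likewise $L_n^\top\nabla\log\pi(\mu_n)$ is invariant. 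The mixture weights $\alpha_n$ are untouched, and the time feature is identical.

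\textbf{Step 2 (particle and component features).} $z^*_s=(aL_{c(s)})^{-1}(ax_s+b-a\mu_{c(s)}-b)=z_s$. The component indices $c(s)$ are preserved, so the pooled encodings $h_n$ agree. For the ensemble features, $\bar\mu^*=a\bar\mu+b$ and $\zeta^{*2}=a^2\zeta^2$ from \eqref{eq:proposal-pooled-var}. Thus $(\mu^*_n-\bar\mu^*)/\zeta^*$ is invariant, and so is $\tfrac12\log\Sigma^*_{n,jj}-\log\zeta^*=\tfrac12\log\Sigma_{n,jj}+\log a-\log a-\log\zeta$. The correlations are scale-free. The EM statistics transform as $\mu^{EM*}=a\mu^{EM}+b$ and $\Sigma^{EM*}=a^2\Sigma^{EM}$, because the weights entering them are invariant up to normalisation. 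Hence $\nu_n$ and $C_n$ are invariant. Clipping is applied to invariant quantities, so it is harmless.

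\textbf{Step 3 (output).} All features coincide, so $\Delta_n$ and hence $\Delta\mu_n$ and $A_n$ coincide. Then $\mu^*_n+L^*_n\Delta\mu_n=a(\mu_n+L_n\Delta\mu_n)+b$ and $L^*_nA_n=a(L_nA_n)$, which is the claimed identity.

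The main obstacle is Step 1, specifically the bookkeeping of the weight scaling $a^{d}$ (and $a^{-1}$ for gradients). One must check that every weight-dependent feature is genuinely self-normalised per component, which holds for $\bar w_s$ and for the EM statistics. The same argument would fail for a non-scalar $a$ in the gradient feature unless $L_n$ is transformed accordingly, so I would state explicitly that $a$ is a positive scalar.
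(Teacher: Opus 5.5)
Your proposal is correct and follows essentially the same route as the paper: you show that every feature feeding into $\Delta_n$ is invariant under $x\mapsto ax+b$ (using $\zeta^{2}\mapsto a^{2}\zeta^{2}$), and then use the affine equivariance of the updates $\mu_n\gets\mu_n+L_n\Delta\mu_n$ and $L_n\gets L_nA_n$. Your version is slightly more careful than the paper's in three places: the paper says the weights are ``identical'', whereas you correctly note $w_s^*=a^{d}w_s$ and that only self-normalised weight quantities enter the features; you make explicit the needed assumption $a>0$; and you carry out the component-level checks ($\nu_n$, $C_n$, $m_n$, relative position and scale) that the paper skips.
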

\begin{proof}
    The proof proceeds by showing the scale-invariance of the procedure which produces the $\Delta_n$. From there, the adaptation to $\mu_n$ is:
    \begin{align*}
        a\mu_n + b \gets a\mu_n + b + aL_n\Delta\mu_n = a(\mu_n + L_n\Delta\mu_n) + b,
    \end{align*}
    and the covariance adaptation is:
    \begin{align*}
        aL_n \gets aL_nA_n, \quad A_n =(\text{diag}(e^{\Delta\log\sigma^2}) + K_n).
    \end{align*}

    Starting with the per-particle features which are encoded, note first that the weights $w_s$ will be identical because of the way $\pi^*$ is defined, let $y_s = ax_s + b$ be the transformed particles given to the function, then:

    \begin{align*}
        z_s = \frac{1}{a}L^{-1}_n(y_{c(s)} - a\mu_{c(s)} - b) = \frac{1}{a} L_n^{-1}(ax_{c(s)} + b - a\mu_{c(s)}-b) = L_n^{-1}(x_{c(s)} - \mu_{c(s)}),\\
        g_s = aL_n^T\nabla_y\log\pi^*(y_s) = aL_n^T\nabla_y\log\pi\left(\frac{y_s-b}{a}\right) = aL_n^T\nabla_x\log\pi(x_s) \cdot \frac{1}{a} = L_n^T\nabla_x\log\pi(x_s).
    \end{align*}
    
    Proving the invariance for the component-level features follows the same pattern as above and so we skip it, only noting that:
    \begin{align*}
        \zeta^2 &= \frac{1}{Nd}\sum_{n=1}^N\sum_{j=1}^d \left((a\mu_n + b)_j - (a\bar\mu + b)_j\right)^2 + \frac{1}{Nd}\sum_{n=1}^N\sum_{j=1}^d \left(a^2\Sigma_n\right)_{jj}\\ &= a^2\left(\frac{1}{Nd}\sum_{n=1}^N\sum_{j=1}^d \left((\mu_n)_j - (\bar\mu)_j\right)^2 + \frac{1}{Nd}\sum_{n=1}^N\sum_{j=1}^d \left(\Sigma_n\right)_{jj}\right),
    \end{align*}
    so the pooled variance scales as we would expect it to.

    Thus all the features that lead into producing $\Delta_n$ are invariant under a linear transformation, so the statement follows using the argument given at the start.
\end{proof}

This has a useful corollary:

\begin{corollary}
    Let a given AAIS rollout's proposal parameters on each step be $\{\mu_n^{(t)}, L_n^{(t)}\}_{n=1, t=1}^{N, T+1}$ then, assuming that the target is changed such that the new target $\pi^*(x) = \pi(\frac{x-b}{a})$ and the same random noises are used throughout, using the same trained AAIS network parameters and starting the rollout at $\{a\mu_n^{(1)} + b, aL_n^{(1)}\}_{n=1}^N$ will give the sequence of proposal parameters $\{a\mu_n^{(t)} + b, aL_n^{(t)}\}_{n=1, t=1}^{N,T+1}$.
\end{corollary}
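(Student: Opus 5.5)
The plan is an induction on the step index $t$, with \cref{lem:equivariance} doing the work at each step. Fix the trained parameters $\phi$ and the noise realisations $\epsilon_{n,k}^{(t)}$ for $t=1,\dots,T+1$. Let the original rollout on $\pi$ have parameters $\theta_t=\{\mu_n^{(t)},L_n^{(t)}\}$ and particles $x_{n,k}^{(t)}=\mu_n^{(t)}+L_n^{(t)}\epsilon_{n,k}^{(t)}$. Let the new rollout on $\pi^*$ have parameters $\theta_t^*$ and particles $y_{n,k}^{(t)}$. The induction hypothesis is $\theta_t^*=\{a\mu_n^{(t)}+b,\,aL_n^{(t)}\}_{n=1}^N$. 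The base case $t=1$ holds because the new rollout is started there by assumption.

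For the inductive step, assume the hypothesis at step $t$.

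\textbf{Particles.} Since the new rollout uses the same $\epsilon$, its particles are
\[
y_{n,k}^{(t)}=a\mu_n^{(t)}+b+aL_n^{(t)}\epsilon_{n,k}^{(t)}=ax_{n,k}^{(t)}+b .
\]
Each transformed particle is still attributed to the same generating component $c(s)$, as the feature construction requires.

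\textbf{Weights.} The new proposal $q_t^*$ is the pushforward of $q_t$ under $x\mapsto ax+b$, so $q_t^*(ax+b)=|a|^{-d}q_t(x)$, taking $a$ to be a scalar. Meanwhile $\pi^*(ax+b)=\pi(x)$, so the new weights equal the old ones multiplied by the constant $|a|^d$.

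\textbf{Applying the lemma.} Every input to $u_\phi$ is now the transformed version of the corresponding original input, so \cref{lem:equivariance} gives $\theta_{t+1}^*=\{a\mu_n^{(t+1)}+b,\,aL_n^{(t+1)}\}$. This closes the induction up to $T+1$.

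The main obstacle is the weight step. The proof of \cref{lem:equivariance} asserts that the weights are identical, but they actually differ by the factor $|a|^d$, and I must check that this does not propagate into the features.

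\textbf{Per-particle feature $\bar w_s$.} This is a log-ratio of $w_s$ to the mean weight of its own component, so the constant factor cancels.

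\textbf{EM features.} The statistics $\mu^{EM}$, $\Sigma^{EM}$ and the mass share $m_n$ are all normalised by sums of weights, so the factor cancels there too.

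\textbf{Residual scaling.} The EM residual features transform as required: $\mu^{EM}$ maps affinely and $\Sigma^{EM}$ scales by $a^2$. Hence $\nu_n$ and $C_n$ are unchanged.

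If some feature did use unnormalised weights, the argument would need to be restricted to $|a|=1$, or that feature would need to be normalised. I will verify feature by feature against \cref{tab:component_features} that none does.

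Finally, I will note that the corollary's conclusion concerns only the proposal parameters. The last draw at $T+1$ follows by the same particle identity and is not needed for the claim.
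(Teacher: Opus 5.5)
Your induction on $t$ with shared noise is the paper's argument: the paper proves the corollary simply by repeated application of the equivariance lemma. Your extra observation is also correct: the weights are not literally identical but pick up the constant factor $|a|^{d}$, which cancels because every weight-dependent feature ($\bar w_s$, $\mu^{EM}$, $\Sigma^{EM}$, $m_n$) is self-normalised, and this patches a small slip in the lemma's proof. One caveat is that the lemma tacitly needs $a>0$: $\zeta$ scales by $|a|$, so the feature $(\mu_n-\bar\mu)/\zeta$ flips sign when $a<0$, and your use of $|a|$ should not be read as extending the result to negative $a$.
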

\begin{proof}
    The proof follows by repeated application of \cref{lem:equivariance} throughout the AAIS iterations.
\end{proof}

Informally, the usefulness of this corollary is that if a trained set of parameters $\phi$ works well for a particular problem, then it will work similarly for the scaled version of the problem. As an example, imagine that a set of parameters can sample a Gaussian with mean $\mu$ and variance $\sigma^2$ well, then it will be able to sample a Gaussian of mean $a\mu + b$ and variance $a^2\sigma^2$ in the same fashion as long as the starting proposal parameters are transformed accordingly.

\subsection{Non-equivariant Version} \label{app:non-scale-equivariant}
The non-equivariant version of AAIS, evaluated in \cref{tab:results-abl-equivariance,tab:results-generalisation-equivariance}, adds in absolute features into the adaptation, breaking equivariance but allowing for learning the scale of the target. The non-equivariant adaptation function takes the same form as that described in \cref{app:update-function} with additional component-level features $\mu_n, \log \text{diag}(L_n), \log s$ -- the component mean, log of the diagonal of the scale matrix, and log of the pooled proposal standard deviation \eqref{eq:proposal-pooled-var}. The output is also given an extra $d$ values $\Delta\mu_n^{abs}$ and the mean adaptation becomes $\mu_n \gets \mu_n + L_n\Delta\mu_n + l_\phi\Delta\mu_n^{abs}$ where $l_\phi \in \mathbb{R}$ is a learnt scale for the adaptation.

\section{Experimental Details}

Here we give details of our experimental setup. All experiments were performed on a cluster using NVIDIA L40S GPU's, per training run these took approximately $3$ minutes for the lower dimensional training runs up to $15$ minutes for the $d=50$ runs. Our models are lightweight enough to run on CPU, although the batching over many targets during training is significantly sped up through GPU use.

\subsection{Training Details}\label{app:training-details}
Our objective function requires estimation of $\KL(q_t \mid \mid p)$. In order to maintain stable training when $q$ and $\pi$ are mismatched, we use compression with $\arcsin$ on the log-density ratio, the estimator we use is:
\begin{align}\label{eq:kl-estimator}
    \widehat \KL(q_t \mid\mid p) = \frac{1}{NK}\sum_{n=1}^{N}\sum_{k=1}^K C \arcsin\left(\frac{g(x_{n,k})}{C}\right)+\text{const}, \quad g(x_{n,k}) = \log q_t(x_{n,k}) - \log \pi(x_{n,k}).
\end{align} 
$C=30$ was chosen so that the compression has a negligible effect for ratios within a suitable range and only compresses extreme values.

The Adam optimiser \citep{kingma2014adam} is used with a cosine scheduled learning rate starting at $3\times 10^{-3}$ and with minimum learning rate $(3\times 10^{-3}) \cdot \frac{1}{20}$.

Annealing was performed through training to aid stability with a temperature schedule, as in \cref{alg:amortised_ais_implementation}, $\kappa(epoch) = 0.3 + (1-0.3)\cdot \min(1, \frac{epoch}{0.1\cdot N_{epochs}})$.

After training, the final model is chosen by selecting the model at the epoch with the lowest loss on a held out set of targets from the family.

\subsection{Target Densities}\label{app:targets}

Here we detail the different choices for target families $\Pi$. Both Bayesian posteriors are based on similar posteriors from \citet{gelman1995bayesian}.

\paragraph{Funnel} A family consisting of Neal's funnel \citep{neal2000slice} in $d$-dimensions, a single target has the form:
\begin{align*}
    x_1 \sim \mathcal{N}(0, \sigma^2), \qquad x_{i} \mid x_1 \sim \mathcal{N}(0, e^{x_1}), \quad i=2,\dots,d.
\end{align*}
The family of distributions obtained by sampling $\sigma \sim \mathcal{U}\left[\frac{3}{2}, 3\right]$.

\paragraph{Banana} A product of independent 2-dimensional banana distributions as in \citet{haario1999adaptive}. An individual block is given by:
\begin{align*}
    x_1 \sim \mathcal{N}(0, \sigma_1^2), \qquad x_2 \sim \mathcal{N}(b(x_1^2 - \sigma_1^2), \sigma_2^2),
\end{align*}
where $\sigma_1^2, \sigma_2^2$ and $b$ are the variances and the curvature. The samples are then rotated by some angle $\theta$. The parameters for every block are sampled as $b\sim \mathcal{U}\left[\frac{1}{10}, \frac{1}{5}\right], \sigma_1\sim \mathcal{U}\left[1, 3\right], \sigma_2 \sim \mathcal{U}\left[\frac{1}{2}, \frac{3}{2}\right], \theta\sim\mathcal{U}[0,2\pi]$, and, if $d$ is even, an individual target is formed by composing $\frac{d}{2}$ blocks, if $d$ is odd then $\frac{d-1}{2}$ blocks are composed along with a final independent Gaussian $\mathcal{N}(0, \sigma_1^2)$.

\paragraph{Bayesian Linear Regression Posterior}
This is a Bayesian linear regression model. Let $r$ be the number of parameters of the linear model, define a design matrix $X\in \mathbb{R}^{n_{\rm data}\times r}$, parameter vector $\beta \in \mathbb{R}^r$, variance $\sigma^2\in \mathbb{R}$, and the hyperparameters $a,b\in \mathbb{R}$. Let $y\in \mathbb{R}^{n_{\rm data}}$ be the response, the model is:
\begin{align*}
    y \mid \beta \sim \mathcal{N}\left(X\beta, \sigma^2I_{n_{\rm data}}\right), \qquad \beta \mid \sigma^2 \sim \mathcal{N}\left(0, \sigma^2V_0^{-1}\right), \quad \sigma^2 \sim \text{Inv-Gamma}(a_0, b_0), 
\end{align*}
where inference is over the $\beta$ and $\sigma^2$ so the dimension of the posterior is $d=p+1$.

This has posterior:
\begin{align*}
    \pi(\beta, \sigma^2) = \mathcal{N}(\beta; \mu, \sigma^2 V^{-1})&\cdot \text{Inv-Gamma}(\sigma^2;a^*,b^*), \\
    V = X^TX + V_0, \quad \mu = V^{-1}X^Ty, \quad a^* &= a_0 + \frac{{n_{\rm data}}}{2}, \quad b^* = b_0 + \frac{1}{2}(y^Ty - \mu^T V\mu), 
\end{align*}
which can be tractably sampled from by sampling $\sigma^2 \sim \text{Inv-Gamma}(a^*, b^*)$ followed by $\beta \mid \sigma^2 \sim \mathcal{N}(\mu, \sigma^2 V^{-1})$. To ensure every sampling method produces valid $\sigma^2$ values, each method produces samples of $\log \sigma^2$ instead and we adjust the above posterior accordingly. The above also has tractable expected values for the parameters to be sampled, which can be used instead of approximations for evaluation.

We take $V_0 = \frac{1}{10}I_r, a_0 = b_0 = 1, \mu_0 = 0$. Only the data and the parameters of the data generating process differ between targets in the family. We give the details below. 

The data generating process randomly constructs the design matrix $X$ such that the family has a range of different difficulties within it, we do this by randomly sampling $X$ with a given condition number $\kappa$ (approximately) and with a random rotation:
\begin{align}
    X_i \sim \mathcal{N}\left(0, \Sigma\right), i=1,\dots, r, \qquad \Sigma = QDQ^T,
\end{align}
where $Q$ is a random orthogonal $r\times r$ matrix formed by taking the QR decomposition of a $r\times r$ matrix of i.i.d. standard Gaussian random variables, and $D$ is the diagonal matrix with elements $s_i^2$ where $\log s_i = \frac{i-1}{r}\log \kappa$ -- this gives the smallest and largest singular values of $\Sigma$ as $1$ and $\kappa^2$ respectively, so the condition number of $\Sigma$ is exactly $\kappa^2$, for each target we sample $\log_{10}\kappa\sim \mathcal{U}[0,2]$ so $\kappa\in[1,100]$.

After generating $X$, the data is produced as:
\begin{align}
    y_i \sim \mathcal{N}\left(X\beta^*, \left(\sigma^*\right)^2\right), i=1,\dots,{n_{\rm data}},
\end{align}
where remaining parameters are sampled as $\beta^* \sim \mathcal{N}(0, 4I_r)$ and $\sigma^*\sim \mathcal{U}\left[\frac{3}{10}, \frac{3}{2}\right]$. We also randomise the number of data points ${n_{\rm data}}\sim \mathcal{U}\left[3r, 12r\right]$.

\paragraph{Hierarchical Bayesian Posterior} This is a hierarchical Bayesian model. In $d$-dimensions, let $J=d-2$, then the model is given by:
\begin{align*}
    y_j \mid \theta_j \sim \mathcal{N}(\theta_j, \sigma^2_j), \quad \theta_j \mid \mu, \tau \sim \mathcal{N}(\mu, \tau^2), \quad \mu \sim \mathcal{N}(0, \sigma_\mu^2), \quad \log \tau \sim \mathcal{N}(0, \sigma^2_\tau), 
\end{align*}
where all values belong to $\mathbb{R}$, the $\sigma^2_j$ are known, and inference is over the $\theta_{1:J}, \mu, $ and $\tau$. 

This has posterior density:
\begin{align*}
    \pi(\theta_1,\dots,\theta_J, \mu, \tau) = \mathcal{N}(\mu, \sigma_\mu^2)\mathcal{N}(\log\tau, \sigma_\tau^2)\Pi_{j=1}^J\left(\mathcal{N}(y_j; \theta_j, \sigma_j^2)\mathcal{N}(\theta_j, \mu, \tau^2)\right).
\end{align*}

To differ members within a family of such distributions, we use different sets of generated data with different parameters for the data generating process: we use $\mu^*, \tau^*, \theta_j^*$ to indicate the true parameters used in the data generation for a particular target, then the data generating process is $\mu^* \sim \mathcal{N}(0, 25), \tau^* \sim \mathcal{U}\left[\frac{1}{2}, 8\right], \theta_j^* \sim \mathcal{N}(\mu^*, (\tau^*)^2), y_j \sim \mathcal{N}(\theta_j^*, \sigma_j^2)$, we also randomise the known $\sigma_j^2$ as $\sigma^2_j \sim \mathcal{U}\left[2,18\right]$, so each target not only varies in the data $y$ supplied, but also in the likelihood model itself.

In order to obtain ground truth samples to use as reference for evaluations, we see that, for a given $(\mu, \tau)$, we can simply sample $\theta_j$ as 
\begin{align*}
    \theta_j \mid y, \mu, \tau \sim \mathcal{N}(a,b), \quad a= \mu + \frac{\tau^2 }{\sigma_j^2 + \tau^2}(y - \mu), \quad b= \frac{\sigma_j^2\tau^2}{\sigma^2_j + \tau^2}.
\end{align*}

In order to sample $(\mu, \tau)$, note first that we can marginalise out the $\theta_j$ to obtain:
\begin{align*}
    \pi(\mu, \tau) = \mathcal{N}(\mu, \sigma_\mu^2)\mathcal{N}(\log\tau, \sigma_\tau^2) \Pi_{j=1}^J\mathcal{N}(y_j, \mu, \sigma_j^2 + \tau^2),
\end{align*}
which we then sample from using the inverse-CDF approximated by quadrature over a sufficiently fine grid.

\subsection{Metrics}\label{app:metrics}
Here we give details on the metrics reported in the tables in both the main text and those in \cref{app:additional-experiments}.

\paragraph{Sliced 1-Wasserstein Distance} \citep{bonneel2015sliced} This is a tractable surrogate for the 1-Wasserstein distance $\mathcal{W}_1(q_{T+1}, \pi)$:
\begin{align}
    \mathcal{W}_1(q_{T+1}, p) &= \inf_{\gamma \in \Gamma(q_{T+1}, p)} \int |x - y| d\gamma(x,y)\\
    \text{SWD}_1(q_{T+1}, p) &= \mathbb{E}_{\omega \sim \mathcal{U}(\mathbb{S}^{d-1})}[\mathcal{W}_1((\tau_w)_{\#}q_{T+1}, (\tau_\omega)_{\#}p)],
\end{align}
where $\Gamma(q_{T+1}, p)$ is the set of probability measures with marginals $q_{T+1}$ and $p$, $\mathbb{S}^{d-1} = \{\omega \in \mathbb{R}^d : ||\omega|| = 1 \}$, $\tau_w(x) = \omega^Tx$ is the projection of $x\in \mathbb{R}^d$ onto $\omega$, and $(\tau_\omega)_{\#}p$ is the pushforward of $p$ by $\tau_\omega$. We compute the Monte Carlo estimate of this by sampling projections $\omega$, and we calculate the distance between the empirical distributions of $q_{T+1}$ and $p$ formed by $300$ samples from each. This metric is denoted ``sliced $\mathcal{W}_1^{\rm prop}$" in our tables.

\paragraph{Effective Sample Size}
The effective sample size for an AIS output with weights $\{w(x_{s}^{(t)})\}_{s=1,t=1}^{S,T+1}$ is 
\begin{align}
    \text{ESS} = \frac{(\sum_t\sum_{s}w(x_{s}^{(t)}))^2}{\sum_t\sum_{s}w(x_{s}^{(t)})^2}, 
\end{align}
and we report the normalised version $\frac{1}{S(T+1)}\text{ESS} \in [0,1]$.

\paragraph{Error in Log-Estimate of Normalising Constant} We compute the estimate of $Z$:
\begin{align}
    \widehat Z = \frac{1}{S(T+1)}\sum_{t=1}^{T+1}\sum_{s=1}^S w(x_s^{(t)}),
\end{align}
and then report $|\log \widehat Z - \log Z|$.

\paragraph{Sliced 1-Wasserstein Distance with Resampled Points} This is the same calculation as the previous sliced 1-Wasserstein distance, except that the empirical distribution for $q_{T+1}$ is not formed from new samples from the final proposal, but instead by sampling with replacement from the final $S$ samples on the last step of the AIS procedure proportional to their importance weights. 

\paragraph{KL divergence} We compute the estimated $\KL(q_{T+1} \mid \mid p)$ as
\begin{align}
    \widehat \KL(q_{T+1} \mid \mid p) = \sum_{s=1}^{300}(\log q_{T+1}(x_s) - \log \pi(x_s)), 
\end{align}
where we draw $300$ samples $x_s\sim q_{T+1}$.

\paragraph{Expectation Estimation Errors} We compute the estimate of $\mathbb{E}[f(x)]$:

\begin{align}
    \widehat{\mathbb{E}}[f(x)] = \frac{1}{\sum_{t=1}^{T+1}\sum_{s=1}^{S}w(x_s^{(t)})}\sum_{t=1}^{T+1}\sum_{s=1}^{S}w(x_s^{(t)})f(x_s^{(t)}), 
\end{align}
and then take the absolute error between this and $\mathbb{E}[f(x)]$ estimated by $20000$ draws from $\pi$.

\subsection{Hyperparameters}\label{app:hyperparamters}
\cref{tab:hyperparams} gives the hyperparameters used in the experiments. We vary the value of $K$ in order to maintain a matched cost for each sampler. The GAPIS step size $\lambda$ was chosen by selecting the best value (in terms of sliced 1-Wasserstein distance of the final proposal to the target) from a grid of 11 different values. The $\Delta\mu_{\rm max}$ parameter for AAIS that controls the maximum amount the proposal means can shift between time steps (see \cref{app:update-function}) was increased for the Bayesian linear regression problem. Other, fixed, parameters are the number of training epochs $N_{epochs}=1500$, and the target batch size used during training $B=32$.

\begin{table*}[h]
  \centering
  \caption{Hyperparameter settings for experiments. The final three columns are specific to AAIS, the Num. params column was calculated using the formula from \cref{app:update-function}.}
  \resizebox{1\linewidth}{!}{
    \begin{tabular}{@{}lccccccccccc}
      \toprule 
        Target Family & $d$ & $N$ & $T$ & $K$ (AAIS) & $K$ (DM-PMC) & $K$ (M-PMC) & $K$ (GAPIS)  & $\lambda$ (GAPIS) & $\Delta\mu_{\max}$ & H & Num. params.\\
        \midrule
        BLR d5 & 5 & 10 & 15 & 15 & 32 & 32 & 25  & 1e-05 & 3.0 & 32 & 5802 \\
        Hierarchical d5 & 5 & 10 & 15 & 15 & 32 & 32 & 25  & 1e-05 & 1.0 & 32 & 5802 \\
        Funnel d5 & 5 & 10 & 15 & 15 & 32 & 32 & 25  & 0.01 & 1.0 & 32 & 5802 \\
        Banana d5 & 5 & 10 & 15 & 15 & 32 & 32 & 25  & 0.03 & 1.0 & 32 & 5802 \\
        BLR d25 & 25 & 25 & 15 & 15 & 32 & 32 & 6  & 1e-05 & 3.0 & 64 & 31346 \\
        Hierarchical d25 & 25 & 25 & 15 & 15 & 32 & 32 & 6  & 1e-05 & 1.0 & 64 & 31346  \\
        Funnel d25 & 25 & 25 & 15 & 15 & 32 & 32 & 6  & 0.0003 & 1.0 & 64 & 31346  \\
        Banana d25 & 25 & 25 & 15 & 15 & 32 & 32 & 6  & 0.03 & 1.0 & 64 & 31346  \\
        Funnel d50 & 50 & 50 & 15 & 25 & 53 & 53 & 5  & 0.0001 & 1.0 & 128 & 124260 \\
        Banana d50 & 50 & 50 & 15 & 25 & 53 & 53 & 5  & 0.03 & 1.0 & 128 & 124260 \\
      \bottomrule
    \end{tabular}
  }
  \label{tab:hyperparams}
\end{table*}

\section{Additional Experiments}\label{app:additional-experiments}
Here we give the results of various ablation experiments performed, as well as extra metrics for the experiments presented in the main text. 

\subsection{Target Gradient Ablation} \label{app:grad-abl}
Here we show the effect of removing the gradient information from the features of the update function. \cref{tab:results-abl-gradient} shows that removing the gradient features from the model is detrimental, particularly on the Bayesian linear regression problem.

\begin{table*}[ht]
  \centering
  \caption{Experimental results showing the effect of removing target gradients as features in the adaptation function (\cref{app:update-function}) note that this still uses target gradients when performing gradient descent in training. Metrics reported follow the convention of \cref{tab:results-funnel}. All targets are at $d=25$.}
  \vspace*{-0.5em}
  \resizebox{\linewidth}{!}{
    \begin{tabular}{@{}clcccc}
      \toprule
      Target Family &Method $\downarrow$ Metric $\rightarrow$
      & sliced $W_1^{\mathrm{prop}}$ $\downarrow$  & rev.\ KL $\downarrow$  & $\log Z$ err. $\downarrow$ & $\mathbb{E}[x]$ err. $\downarrow$  
      \\
      \midrule
        \multirow{2}{*}{BLR} &
        With gradient features & \best{$1.174$\,{\scriptsize$\pm 0.0333$}\,{\scriptsize$(2.600)$}}  & \best{$409.9$\,{\scriptsize$\pm 3.154$}\,{\scriptsize$(1.0\!\times\!10^{3})$}}  & \best{$374.4$\,{\scriptsize$\pm 4.126$}\,{\scriptsize$(994.0)$}} & \best{$6.832$\,{\scriptsize$\pm 0.2652$}\,{\scriptsize$(15.645)$}}  \\
        &No gradient features & {$5.300$\,{\scriptsize$\pm 1.431$}\,{\scriptsize$(6.316)$}}  & {$1.1\!\times\!10^{3}$\,{\scriptsize$\pm 157.7$}\,{\scriptsize$(1.8\!\times\!10^{3})$}}  & {$790.1$\,{\scriptsize$\pm 22.984$}\,{\scriptsize$(1.4\!\times\!10^{3})$}} & {$19.601$\,{\scriptsize$\pm 1.304$}\,{\scriptsize$(25.856)$}}  \\
        \cmidrule(l){1-6}
        \multirow{2}{*}{Hierarchical}
        &With gradient features & \best{$0.9877$\,{\scriptsize$\pm 0.0201$}\,{\scriptsize$(2.414)$}}  & \best{$2.157$\,{\scriptsize$\pm 0.0160$}\,{\scriptsize$(3.137)$}}  & \best{$1.279$\,{\scriptsize$\pm 0.0215$}\,{\scriptsize$(2.194)$}} & \best{$4.531$\,{\scriptsize$\pm 0.0769$}\,{\scriptsize$(12.805)$}}  \\
        &No gradient features & {$1.151$\,{\scriptsize$\pm 0.0501$}\,{\scriptsize$(2.963)$}}  & {$2.515$\,{\scriptsize$\pm 0.0460$}\,{\scriptsize$(4.160)$}}  & {$1.426$\,{\scriptsize$\pm 0.0569$}\,{\scriptsize$(2.583)$}} & {$5.134$\,{\scriptsize$\pm 0.0410$}\,{\scriptsize$(15.701)$}}  \\
        \cmidrule(l){1-6}
        \multirow{2}{*}{Funnel}
        &With gradient features & \best{$0.5069$\,{\scriptsize$\pm 0.0061$}\,{\scriptsize$(1.082)$}}  & \best{$0.1535$\,{\scriptsize$\pm 0.0032$}\,{\scriptsize$(0.2379)$}}  & \best{$0.2672$\,{\scriptsize$\pm 0.0074$}\,{\scriptsize$(0.4131)$}} & {$0.6550$\,{\scriptsize$\pm 0.1238$}\,{\scriptsize$(2.128)$}}  \\
        &No gradient features & {$0.5200$\,{\scriptsize$\pm 0.0059$}\,{\scriptsize$(1.095)$}}  & {$0.2003$\,{\scriptsize$\pm 0.0055$}\,{\scriptsize$(0.3083)$}}  & {$0.2784$\,{\scriptsize$\pm 0.0061$}\,{\scriptsize$(0.4373)$}} & \best{$0.5170$\,{\scriptsize$\pm 0.0218$}\,{\scriptsize$(1.560)$}}  \\
        \cmidrule(l){1-6}
        \multirow{2}{*}{Banana}
        &With gradient features & {$1.093$\,{\scriptsize$\pm 0.0069$}\,{\scriptsize$(1.492)$}}  & \best{$6.112$\,{\scriptsize$\pm 0.0565$}\,{\scriptsize$(7.823)$}}  & \best{$2.127$\,{\scriptsize$\pm 0.0138$}\,{\scriptsize$(3.341)$}} & \best{$4.295$\,{\scriptsize$\pm 0.0227$}\,{\scriptsize$(6.514)$}}  \\
        &No gradient features & \best{$1.079$\,{\scriptsize$\pm 0.0040$}\,{\scriptsize$(1.548)$}}  & {$7.048$\,{\scriptsize$\pm 0.0381$}\,{\scriptsize$(9.012)$}}  & {$2.207$\,{\scriptsize$\pm 0.0190$}\,{\scriptsize$(4.196)$}} & {$4.396$\,{\scriptsize$\pm 0.0934$}\,{\scriptsize$(6.846)$}}  \\    
      \bottomrule
    \end{tabular}
  }
  \label{tab:results-abl-gradient}
\end{table*}

\subsection{EM Features Ablation} \label{app:em-abl}
Here we show the effect of removing the EM update features from \cref{app:update-function}. \cref{tab:results-abl-em} shows that there is no single best choice on all $d=25$ families, but EM features perform well on the hierarchical and funnel families. We opted to use them on all families to reduce the amount of per-family tuning that occurs. 

\begin{table*}[ht]
  \centering
  \caption{Experimental results showing the effect of the EM features in the adaptation model. Metrics reported follow the convention of \cref{tab:results-funnel}. All targets are at $d=25$.}
  \vspace*{-0.5em}
  \resizebox{\linewidth}{!}{
    \begin{tabular}{@{}clcccc}
      \toprule
      Target Family &Method $\downarrow$ Metric $\rightarrow$
      & sliced $W_1^{\mathrm{prop}}$ $\downarrow$  & rev.\ KL $\downarrow$  & $\log Z$ err. $\downarrow$ & $\mathbb{E}[x]$ err. $\downarrow$  
      \\
      \midrule
        \multirow{2}{*}{BLR} &
        With EM features & {$1.174$\,{\scriptsize$\pm 0.0333$}\,{\scriptsize$(2.600)$}} & {$409.9$\,{\scriptsize$\pm 3.154$}\,{\scriptsize$(1.0\!\times\!10^{3})$}} & {$374.4$\,{\scriptsize$\pm 4.126$}\,{\scriptsize$(994.0)$}} & {$6.832$\,{\scriptsize$\pm 0.2652$}\,{\scriptsize$(15.645)$}} \\
        &No EM features & \best{$0.9059$\,{\scriptsize$\pm 0.0132$}\,{\scriptsize$(1.908)$}} & \best{$357.5$\,{\scriptsize$\pm 5.075$}\,{\scriptsize$(960.2)$}} & \best{$335.0$\,{\scriptsize$\pm 4.293$}\,{\scriptsize$(916.3)$}} & \best{$5.496$\,{\scriptsize$\pm 0.0735$}\,{\scriptsize$(11.569)$}} \\
        \cmidrule(l){1-6}
        \multirow{2}{*}{Hierarchical} &
        With EM features & \best{$0.9877$\,{\scriptsize$\pm 0.0201$}\,{\scriptsize$(2.414)$}} & \best{$2.157$\,{\scriptsize$\pm 0.0160$}\,{\scriptsize$(3.137)$}} & \best{$1.279$\,{\scriptsize$\pm 0.0215$}\,{\scriptsize$(2.194)$}} & \best{$4.531$\,{\scriptsize$\pm 0.0769$}\,{\scriptsize$(12.805)$}} \\
        &No EM features & {$1.255$\,{\scriptsize$\pm 0.0103$}\,{\scriptsize$(3.175)$}} & {$2.683$\,{\scriptsize$\pm 0.0571$}\,{\scriptsize$(4.568)$}} & {$1.512$\,{\scriptsize$\pm 0.0402$}\,{\scriptsize$(2.838)$}} & {$5.239$\,{\scriptsize$\pm 0.0698$}\,{\scriptsize$(13.861)$}} \\
        \cmidrule(l){1-6}
        \multirow{2}{*}{Funnel} &
        With EM features & \best{$0.5069$\,{\scriptsize$\pm 0.0061$}\,{\scriptsize$(1.082)$}} & \best{$0.1535$\,{\scriptsize$\pm 0.0032$}\,{\scriptsize$(0.2379)$}} & \best{$0.2672$\,{\scriptsize$\pm 0.0074$}\,{\scriptsize$(0.4131)$}} & \best{$0.6550$\,{\scriptsize$\pm 0.1238$}\,{\scriptsize$(2.128)$}} \\
        &No EM features & {$0.5486$\,{\scriptsize$\pm 0.0096$}\,{\scriptsize$(1.218)$}} & {$0.2895$\,{\scriptsize$\pm 0.0444$}\,{\scriptsize$(0.4191)$}} & {$0.2871$\,{\scriptsize$\pm 0.0020$}\,{\scriptsize$(0.4446)$}} & {$0.6850$\,{\scriptsize$\pm 0.0509$}\,{\scriptsize$(2.014)$}} \\
        \cmidrule(l){1-6}
        \multirow{2}{*}{Banana} &
        With EM features & {$1.093$\,{\scriptsize$\pm 0.0069$}\,{\scriptsize$(1.492)$}} & {$6.112$\,{\scriptsize$\pm 0.0565$}\,{\scriptsize$(7.823)$}} & {$2.127$\,{\scriptsize$\pm 0.0138$}\,{\scriptsize$(3.341)$}} & {$4.295$\,{\scriptsize$\pm 0.0227$}\,{\scriptsize$(6.514)$}} \\
        &No EM features & \best{$1.081$\,{\scriptsize$\pm 0.0001$}\,{\scriptsize$(1.458)$}} & \best{$5.974$\,{\scriptsize$\pm 0.0550$}\,{\scriptsize$(7.592)$}} & \best{$2.057$\,{\scriptsize$\pm 0.0554$}\,{\scriptsize$(3.302)$}} & \best{$4.286$\,{\scriptsize$\pm 0.0425$}\,{\scriptsize$(6.671)$}} \\
    
      \bottomrule
    \end{tabular}
  }
  \label{tab:results-abl-em}
\end{table*}

\subsection{Objective Choice} \label{app:obj-abl}
Here we give results for using differing objectives. We give the full results for the use different stochastic divergences $D$ (\cref{tab:results-abl-obj-loss-full}), and also investigate the effect of removing the step-wise objective and of removing the pathwise gradient through the rollout.

\cref{tab:results-abl-obj-traj} shows the effect of removing one of either the step-wise objective or the gradients propagating through the whole trajectory. There is no one choice that is uniformly best, but using both seems to give the most consistent results.

\begin{table*}[ht]
  \centering
  \caption{Experimental results showing the effect of three different objective structures, a terminal only objective that uses pathwise gradients that flow through the whole trajectory, and a step-wise objective with no gradients flowing between steps in the trajectory, full model has step-wise objectives and also gradients flowing through the whole trajectory. Metrics reported follow the convention of \cref{tab:results-funnel}. All targets are at $d=25$.}
  \vspace*{-0.5em}
  \resizebox{\linewidth}{!}{
    \begin{tabular}{@{}clcccc}
      \toprule
      Target Family &Method $\downarrow$ Metric $\rightarrow$
      & sliced $W_1^{\mathrm{prop}}$ $\downarrow$  & rev.\ KL $\downarrow$  & $\log Z$ err. $\downarrow$ & $\mathbb{E}[x]$ err. $\downarrow$  
      \\
      \midrule
        \multirow{3}{*}{BLR} &
        Full model & \second{$1.174$\,{\scriptsize$\pm 0.0333$}\,{\scriptsize$(2.600)$}} & \second{$409.9$\,{\scriptsize$\pm 3.154$}\,{\scriptsize$(1.0\!\times\!10^{3})$}} & \second{$374.4$\,{\scriptsize$\pm 4.126$}\,{\scriptsize$(994.0)$}} & \second{$6.832$\,{\scriptsize$\pm 0.2652$}\,{\scriptsize$(15.645)$}} \\
        &Terminal objective only & $2.706$\,{\scriptsize$\pm 0.6514$}\,{\scriptsize$(3.839)$} & $717.5$\,{\scriptsize$\pm 101.8$}\,{\scriptsize$(1.3\!\times\!10^{3})$} & $604.1$\,{\scriptsize$\pm 97.197$}\,{\scriptsize$(1.1\!\times\!10^{3})$} & $12.982$\,{\scriptsize$\pm 3.012$}\,{\scriptsize$(20.797)$} \\
        &Step-wise & \best{$0.8079$\,{\scriptsize$\pm 0.0069$}\,{\scriptsize$(1.701)$}} & \best{$320.2$\,{\scriptsize$\pm 2.461$}\,{\scriptsize$(871.4)$}} & \best{$310.9$\,{\scriptsize$\pm 2.342$}\,{\scriptsize$(858.9)$}} & \best{$5.005$\,{\scriptsize$\pm 0.0412$}\,{\scriptsize$(10.651)$}} \\
        \cmidrule(l){1-6}
        \multirow{3}{*}{Hierarchical} &
        Full model & \best{$0.9877$\,{\scriptsize$\pm 0.0201$}\,{\scriptsize$(2.414)$}} & \best{$2.157$\,{\scriptsize$\pm 0.0160$}\,{\scriptsize$(3.137)$}} & \best{$1.279$\,{\scriptsize$\pm 0.0215$}\,{\scriptsize$(2.194)$}} & \best{$4.531$\,{\scriptsize$\pm 0.0769$}\,{\scriptsize$(12.805)$}} \\
        &Terminal objective only & $2.184$\,{\scriptsize$\pm 0.1925$}\,{\scriptsize$(3.297)$} & $7.913$\,{\scriptsize$\pm 0.6277$}\,{\scriptsize$(11.317)$} & $2.727$\,{\scriptsize$\pm 0.2046$}\,{\scriptsize$(4.947)$} & $10.343$\,{\scriptsize$\pm 0.5920$}\,{\scriptsize$(18.960)$} \\
        &Step-wise & \second{$1.227$\,{\scriptsize$\pm 0.0190$}\,{\scriptsize$(2.463)$}} & \second{$2.377$\,{\scriptsize$\pm 0.0102$}\,{\scriptsize$(4.292)$}} & \second{$1.404$\,{\scriptsize$\pm 0.0219$}\,{\scriptsize$(2.434)$}} & \second{$4.881$\,{\scriptsize$\pm 0.1296$}\,{\scriptsize$(10.874)$}} \\
        \cmidrule(l){1-6}
        \multirow{3}{*}{Funnel} &
        Full model & \best{$0.5069$\,{\scriptsize$\pm 0.0061$}\,{\scriptsize$(1.082)$}} & \best{$0.1535$\,{\scriptsize$\pm 0.0032$}\,{\scriptsize$(0.2379)$}} & \best{$0.2672$\,{\scriptsize$\pm 0.0074$}\,{\scriptsize$(0.4131)$}} & \second{$0.6550$\,{\scriptsize$\pm 0.1238$}\,{\scriptsize$(2.128)$}} \\
        &Terminal objective only & \second{$0.5172$\,{\scriptsize$\pm 0.0042$}\,{\scriptsize$(1.109)$}} & \second{$0.1971$\,{\scriptsize$\pm 0.0094$}\,{\scriptsize$(0.3104)$}} & $0.4416$\,{\scriptsize$\pm 0.0188$}\,{\scriptsize$(0.6401)$} & $0.9632$\,{\scriptsize$\pm 0.1450$}\,{\scriptsize$(2.622)$} \\
        &Step-wise & $0.5615$\,{\scriptsize$\pm 0.0638$}\,{\scriptsize$(1.201)$} & $0.2207$\,{\scriptsize$\pm 0.0622$}\,{\scriptsize$(0.3148)$} & \second{$0.2858$\,{\scriptsize$\pm 0.0114$}\,{\scriptsize$(0.4576)$}} & \best{$0.6370$\,{\scriptsize$\pm 0.0193$}\,{\scriptsize$(1.863)$}} \\
        \cmidrule(l){1-6}
        \multirow{3}{*}{Banana} &
        Full model & \second{$1.093$\,{\scriptsize$\pm 0.0069$}\,{\scriptsize$(1.492)$}} & \second{$6.112$\,{\scriptsize$\pm 0.0565$}\,{\scriptsize$(7.823)$}} & \second{$2.127$\,{\scriptsize$\pm 0.0138$}\,{\scriptsize$(3.341)$}} & \second{$4.295$\,{\scriptsize$\pm 0.0227$}\,{\scriptsize$(6.514)$}} \\
        &Terminal objective only & \best{$0.9906$\,{\scriptsize$\pm 0.0079$}\,{\scriptsize$(1.379)$}} & \best{$5.396$\,{\scriptsize$\pm 0.1244$}\,{\scriptsize$(7.116)$}} & \best{$1.543$\,{\scriptsize$\pm 0.0322$}\,{\scriptsize$(2.646)$}} & \best{$4.135$\,{\scriptsize$\pm 0.0219$}\,{\scriptsize$(6.599)$}} \\
        &Step-wise & $1.126$\,{\scriptsize$\pm 0.0014$}\,{\scriptsize$(1.571)$} & $6.778$\,{\scriptsize$\pm 0.0132$}\,{\scriptsize$(8.601)$} & $2.427$\,{\scriptsize$\pm 0.0287$}\,{\scriptsize$(3.934)$} & $4.431$\,{\scriptsize$\pm 0.0575$}\,{\scriptsize$(6.524)$} \\
      \bottomrule
    \end{tabular}
  }
  \label{tab:results-abl-obj-traj}
\end{table*}

\begin{table*}[ht]
  \centering
  \caption{Experimental results showing the effect of three different divergences in the objective. Metrics reported follow the convention of \cref{tab:results-funnel}. All targets are at $d=25$. A ``-" indicates that the mean could not be calculated due to a numerical divergence.}
  \vspace*{-0.5em}
  \resizebox{\linewidth}{!}{
    \begin{tabular}{@{}clcccc}
      \toprule
      Target Family &Method $\downarrow$ Metric $\rightarrow$
      & sliced $W_1^{\mathrm{prop}}$ $\downarrow$  & rev.\ KL $\downarrow$  & $\log Z$ err. $\downarrow$ & $\mathbb{E}[x]$ err. $\downarrow$  
      \\
      \midrule
        \multirow{3}{*}{BLR} &
        $\KL(q_t \mid \mid p)$ & \best{$1.174$\,{\scriptsize$\pm 0.0333$}\,{\scriptsize$(2.600)$}} & \best{$409.9$\,{\scriptsize$\pm 3.154$}\,{\scriptsize$(1.0\!\times\!10^{3})$}} & \best{$374.4$\,{\scriptsize$\pm 4.126$}\,{\scriptsize$(994.0)$}} & \best{$6.832$\,{\scriptsize$\pm 0.2652$}\,{\scriptsize$(15.645)$}} \\
        &$\KL(p \mid \mid q_t)$ (SNIS) & $118.2$\,{\scriptsize$\pm 94.077$}\,{\scriptsize$(128.3)$} & $1.3\!\times\!10^{14}$\,{\scriptsize$\pm 1.8\!\times\!10^{14}$}\,{\scriptsize$(2.6\!\times\!10^{35})$} & $1.6\!\times\!10^{3}$\,{\scriptsize$\pm 1.4\!\times\!10^{3}$}\,{\scriptsize$(5.0\!\times\!10^{3})$} & $21.066$\,{\scriptsize$\pm 1.232$}\,{\scriptsize$(27.944)$} \\
        &$\KL(p \mid \mid q_t)$ (oracle draws) & \second{$2.344$\,{\scriptsize$\pm 0.4549$}\,{\scriptsize$(3.144)$}} & $1.5\!\times\!10^{19}$\,{\scriptsize$\pm 2.7\!\times\!10^{19}$}\,{\scriptsize$(1.1\!\times\!10^{14})$} & \second{$583.8$\,{\scriptsize$\pm 41.198$}\,{\scriptsize$(1.1\!\times\!10^{3})$}} & \second{$11.520$\,{\scriptsize$\pm 1.502$}\,{\scriptsize$(18.700)$}} \\
        \cmidrule(l){1-6}
        \multirow{3}{*}{Hierarchical} &
        $\KL(q_t \mid \mid p)$ & \best{$0.9877$\,{\scriptsize$\pm 0.0201$}\,{\scriptsize$(2.414)$}} & \best{$2.157$\,{\scriptsize$\pm 0.0160$}\,{\scriptsize$(3.137)$}} & \best{$1.279$\,{\scriptsize$\pm 0.0215$}\,{\scriptsize$(2.194)$}} & \best{$4.531$\,{\scriptsize$\pm 0.0769$}\,{\scriptsize$(12.805)$}} \\
        &$\KL(p \mid \mid q_t)$ (SNIS) & - & - & - & - \\
        &$\KL(p \mid \mid q_t)$ (oracle draws) & - & - & - & - \\
        \cmidrule(l){1-6}
        \multirow{3}{*}{Funnel} &
        $\KL(q_t \mid \mid p)$ & \best{$0.5069$\,{\scriptsize$\pm 0.0061$}\,{\scriptsize$(1.082)$}} & \best{$0.1535$\,{\scriptsize$\pm 0.0032$}\,{\scriptsize$(0.2379)$}} & \second{$0.2672$\,{\scriptsize$\pm 0.0074$}\,{\scriptsize$(0.4131)$}} & \second{$0.6550$\,{\scriptsize$\pm 0.1238$}\,{\scriptsize$(2.128)$}} \\
        &$\KL(p \mid \mid q_t)$ (SNIS) & $58.695$\,{\scriptsize$\pm 4.989$}\,{\scriptsize$(64.920)$} & $1.6\!\times\!10^{3}$\,{\scriptsize$\pm 1.3\!\times\!10^{3}$}\,{\scriptsize$(3.0\!\times\!10^{3})$} & $4.496$\,{\scriptsize$\pm 0.7189$}\,{\scriptsize$(6.445)$} & $13.183$\,{\scriptsize$\pm 0.8694$}\,{\scriptsize$(28.047)$} \\
        &$\KL(p \mid \mid q_t)$ (oracle draws) & \second{$1.225$\,{\scriptsize$\pm 0.3780$}\,{\scriptsize$(2.005)$}} & $1.0\!\times\!10^{11}$\,{\scriptsize$\pm 1.8\!\times\!10^{11}$}\,{\scriptsize$(8.1\!\times\!10^{3})$} & \best{$0.2145$\,{\scriptsize$\pm 0.0077$}\,{\scriptsize$(0.3352)$}} & \best{$0.6345$\,{\scriptsize$\pm 0.0102$}\,{\scriptsize$(1.753)$}} \\
        \cmidrule(l){1-6}
        \multirow{3}{*}{Banana} &
        $\KL(q_t \mid \mid p)$ & $1.093$\,{\scriptsize$\pm 0.0069$}\,{\scriptsize$(1.492)$} & \best{$6.112$\,{\scriptsize$\pm 0.0565$}\,{\scriptsize$(7.823)$}} & \second{$2.127$\,{\scriptsize$\pm 0.0138$}\,{\scriptsize$(3.341)$}} & \best{$4.295$\,{\scriptsize$\pm 0.0227$}\,{\scriptsize$(6.514)$}} \\
        &$\KL(p \mid \mid q_t)$ (SNIS) & \second{$0.8083$\,{\scriptsize$\pm 0.0209$}\,{\scriptsize$(0.9314)$}} & \second{$94.821$\,{\scriptsize$\pm 4.839$}\,{\scriptsize$(125.1)$}} & $6.705$\,{\scriptsize$\pm 0.6376$}\,{\scriptsize$(11.362)$} & $7.443$\,{\scriptsize$\pm 0.0841$}\,{\scriptsize$(10.325)$} \\
        &$\KL(p \mid \mid q_t)$ (oracle draws) & \best{$0.5676$\,{\scriptsize$\pm 0.0151$}\,{\scriptsize$(0.7589)$}} & $408.9$\,{\scriptsize$\pm 88.470$}\,{\scriptsize$(1.1\!\times\!10^{3})$} & \best{$1.130$\,{\scriptsize$\pm 0.0392$}\,{\scriptsize$(2.398)$}} & \second{$4.552$\,{\scriptsize$\pm 0.0809$}\,{\scriptsize$(8.082)$}} \\
    
      \bottomrule
    \end{tabular}
  }
  \label{tab:results-abl-obj-loss-full}
\end{table*}

\subsection{Covariance Structure}\label{app:covariance-abl}
Here we compare using an isotropic variance, a diagonal variance (as used in our experiments), and a full covariance matrix. \cref{tab:results-abl-covariance} shows the effect of the different possible covariance parametrisations. We found the diagonal covariance to be the most stable in our experiments.

\begin{table*}[ht]
  \centering
  \caption{Experimental results showing the effect of using different parametrisations for the proposal covariance. Metrics reported follow the convention of \cref{tab:results-funnel}. An $^*$ indicates that only $2$ out of the $3$ training seeds were stable. All targets are at $d=25$.}
  \vspace*{-0.5em}
  \resizebox{\linewidth}{!}{
    \begin{tabular}{@{}clcccc}
      \toprule
      Target Family &Method $\downarrow$ Metric $\rightarrow$
      & sliced $W_1^{\mathrm{prop}}$ $\downarrow$  & rev.\ KL $\downarrow$  & $\log Z$ err. $\downarrow$ & $\mathbb{E}[x]$ err. $\downarrow$  
      \\
      \midrule
        \multirow{3}{*}{BLR} &
        Diagonal & \best{$1.174$\,{\scriptsize$\pm 0.0333$}\,{\scriptsize$(2.600)$}} & \best{$409.9$\,{\scriptsize$\pm 3.154$}\,{\scriptsize$(1.0\!\times\!10^{3})$}} & \best{$374.4$\,{\scriptsize$\pm 4.126$}\,{\scriptsize$(994.0)$}} & \best{$6.832$\,{\scriptsize$\pm 0.2652$}\,{\scriptsize$(15.645)$}} \\
        &Isotropic & \second{$1.956$\,{\scriptsize$\pm 0.2392$}\,{\scriptsize$(2.927)$}} & \second{$615.5$\,{\scriptsize$\pm 45.231$}\,{\scriptsize$(1.1\!\times\!10^{3})$}} & \second{$495.5$\,{\scriptsize$\pm 40.974$}\,{\scriptsize$(1.0\!\times\!10^{3})$}} & \second{$9.346$\,{\scriptsize$\pm 1.027$}\,{\scriptsize$(16.318)$}} \\
        &Full & $262.0$\,{\scriptsize$\pm 270.3$}\,{\scriptsize$(288.0)$} & $1.4\!\times\!10^{5}$\,{\scriptsize$\pm 2.5\!\times\!10^{5}$}\,{\scriptsize$(3.8\!\times\!10^{5})$} & $827.1$\,{\scriptsize$\pm 34.365$}\,{\scriptsize$(1.5\!\times\!10^{3})$} & $21.699$\,{\scriptsize$\pm 1.606$}\,{\scriptsize$(29.616)$} \\
        \cmidrule(l){1-6}
        \multirow{3}{*}{Hierarchical} &
        Diagonal & \best{$0.9877$\,{\scriptsize$\pm 0.0201$}\,{\scriptsize$(2.414)$}} & \best{$2.157$\,{\scriptsize$\pm 0.0160$}\,{\scriptsize$(3.137)$}} & \best{$1.279$\,{\scriptsize$\pm 0.0215$}\,{\scriptsize$(2.194)$}} & \best{$4.531$\,{\scriptsize$\pm 0.0769$}\,{\scriptsize$(12.805)$}} \\
        &Isotropic$^*$ & \second{$2.710$\,{\scriptsize$\pm 0.0008$}\,{\scriptsize$(5.984)$}} & \second{$14.136$\,{\scriptsize$\pm 0.2302$}\,{\scriptsize$(22.366)$}} & \second{$5.371$\,{\scriptsize$\pm 0.0832$}\,{\scriptsize$(13.536)$}} & \second{$11.226$\,{\scriptsize$\pm 0.1195$}\,{\scriptsize$(27.474)$}} \\
        &Full & $8.314$\,{\scriptsize$\pm 11.169$}\,{\scriptsize$(12.147)$} & $157.2$\,{\scriptsize$\pm 256.7$}\,{\scriptsize$(328.0)$} & $11.193$\,{\scriptsize$\pm 14.756$}\,{\scriptsize$(17.452)$} & $16.326$\,{\scriptsize$\pm 14.053$}\,{\scriptsize$(29.943)$} \\
        \cmidrule(l){1-6}
        \multirow{3}{*}{Funnel} &
        Diagonal & \best{$0.5069$\,{\scriptsize$\pm 0.0061$}\,{\scriptsize$(1.082)$}} & \best{$0.1535$\,{\scriptsize$\pm 0.0032$}\,{\scriptsize$(0.2379)$}} & \best{$0.2672$\,{\scriptsize$\pm 0.0074$}\,{\scriptsize$(0.4131)$}} & \second{$0.6550$\,{\scriptsize$\pm 0.1238$}\,{\scriptsize$(2.128)$}} \\
        &Isotropic & $1.277$\,{\scriptsize$\pm 0.0069$}\,{\scriptsize$(2.059)$} & $1.564$\,{\scriptsize$\pm 0.0150$}\,{\scriptsize$(1.848)$} & $0.8495$\,{\scriptsize$\pm 0.0168$}\,{\scriptsize$(1.112)$} & $1.459$\,{\scriptsize$\pm 0.0009$}\,{\scriptsize$(2.102)$} \\
        &Full & \second{$0.5139$\,{\scriptsize$\pm 0.0027$}\,{\scriptsize$(1.108)$}} & \second{$0.2111$\,{\scriptsize$\pm 0.0107$}\,{\scriptsize$(0.3171)$}} & \second{$0.2779$\,{\scriptsize$\pm 0.0064$}\,{\scriptsize$(0.4398)$}} & \best{$0.6449$\,{\scriptsize$\pm 0.0436$}\,{\scriptsize$(1.574)$}} \\
        \cmidrule(l){1-6}
        \multirow{3}{*}{Banana} &
        Diagonal & \second{$1.093$\,{\scriptsize$\pm 0.0069$}\,{\scriptsize$(1.492)$}} & \second{$6.112$\,{\scriptsize$\pm 0.0565$}\,{\scriptsize$(7.823)$}} & \second{$2.127$\,{\scriptsize$\pm 0.0138$}\,{\scriptsize$(3.341)$}} & \best{$4.295$\,{\scriptsize$\pm 0.0227$}\,{\scriptsize$(6.514)$}} \\
        &Isotropic & $1.124$\,{\scriptsize$\pm 0.0019$}\,{\scriptsize$(1.554)$} & $6.739$\,{\scriptsize$\pm 0.0051$}\,{\scriptsize$(8.484)$} & $2.375$\,{\scriptsize$\pm 0.0196$}\,{\scriptsize$(3.770)$} & $4.378$\,{\scriptsize$\pm 0.0107$}\,{\scriptsize$(6.478)$} \\
        &Full$^*$ & \best{$1.074$\,{\scriptsize$\pm 0.0008$}\,{\scriptsize$(1.515)$}} & \best{$6.078$\,{\scriptsize$\pm 0.0086$}\,{\scriptsize$(7.864)$}} & \best{$2.090$\,{\scriptsize$\pm 0.0150$}\,{\scriptsize$(3.396)$}} & \second{$4.308$\,{\scriptsize$\pm 0.0083$}\,{\scriptsize$(6.639)$}} \\
      \bottomrule
    \end{tabular}
  }
  \label{tab:results-abl-covariance}
\end{table*}

\clearpage
\subsection{Equivariance}\label{app:equivariance-abl}
The scale-equivariance of our AAIS implementation gives the algorithm greater generalisation abilities, at the cost of some performance on the family it was trained on, as can be seen in \cref{tab:results-abl-equivariance,tab:results-generalisation-equivariance}, where we compare with a non-equivariant adaptation function (\cref{app:non-scale-equivariant}). We hypothesise this is because the non-equivariant sampler learns information about the scale that is useful for sampling within the family, but is detrimental when applied to a modified family.

\begin{table*}[ht]
  \centering
  \caption{Experimental results showing the effect of using a model that is not scale-equivariant \cref{app:non-scale-equivariant}. Metrics reported follow the convention of \cref{tab:results-funnel}. All targets are at $d=25$.}
  \vspace*{-0.5em}
  \resizebox{\linewidth}{!}{
    \begin{tabular}{@{}clcccc}
      \toprule
      Target Family &Method $\downarrow$ Metric $\rightarrow$
      & sliced $W_1^{\mathrm{prop}}$ $\downarrow$  & rev.\ KL $\downarrow$  & $\log Z$ err. $\downarrow$ & $\mathbb{E}[x]$ err. $\downarrow$  
      \\
      \midrule
        \multirow{2}{*}{BLR} &
        Scale-equivariant & \best{$1.174$\,{\scriptsize$\pm 0.0333$}\,{\scriptsize$(2.600)$}} & \best{$409.9$\,{\scriptsize$\pm 3.154$}\,{\scriptsize$(1.0\!\times\!10^{3})$}} & \best{$374.4$\,{\scriptsize$\pm 4.126$}\,{\scriptsize$(994.0)$}} & \best{$6.832$\,{\scriptsize$\pm 0.2652$}\,{\scriptsize$(15.645)$}} \\
        &Not scale-equivariant & {$1.461$\,{\scriptsize$\pm 0.0976$}\,{\scriptsize$(2.529)$}} & {$542.4$\,{\scriptsize$\pm 16.950$}\,{\scriptsize$(1.2\!\times\!10^{3})$}} & {$466.2$\,{\scriptsize$\pm 15.175$}\,{\scriptsize$(1.0\!\times\!10^{3})$}} & {$8.135$\,{\scriptsize$\pm 0.4538$}\,{\scriptsize$(14.913)$}} \\
        \cmidrule(l){1-6}
        \multirow{2}{*}{Hierarchical} &
        Scale-equivariant & \best{$0.9877$\,{\scriptsize$\pm 0.0201$}\,{\scriptsize$(2.414)$}} & \best{$2.157$\,{\scriptsize$\pm 0.0160$}\,{\scriptsize$(3.137)$}} & {$1.279$\,{\scriptsize$\pm 0.0215$}\,{\scriptsize$(2.194)$}} & {$4.531$\,{\scriptsize$\pm 0.0769$}\,{\scriptsize$(12.805)$}} \\
        &Not scale-equivariant & {$1.052$\,{\scriptsize$\pm 0.0421$}\,{\scriptsize$(3.274)$}} & {$2.192$\,{\scriptsize$\pm 0.1499$}\,{\scriptsize$(3.985)$}} & \best{$0.9876$\,{\scriptsize$\pm 0.0388$}\,{\scriptsize$(1.950)$}} & \best{$4.132$\,{\scriptsize$\pm 0.0887$}\,{\scriptsize$(14.853)$}} \\
        \cmidrule(l){1-6}
        \multirow{2}{*}{Funnel} &
        Scale-equivariant & \best{$0.5069$\,{\scriptsize$\pm 0.0061$}\,{\scriptsize$(1.082)$}} & {$0.1535$\,{\scriptsize$\pm 0.0032$}\,{\scriptsize$(0.2379)$}} & {$0.2672$\,{\scriptsize$\pm 0.0074$}\,{\scriptsize$(0.4131)$}} & {$0.6550$\,{\scriptsize$\pm 0.1238$}\,{\scriptsize$(2.128)$}} \\
        &Not scale-equivariant & {$0.5076$\,{\scriptsize$\pm 0.0036$}\,{\scriptsize$(1.105)$}} & \best{$0.1486$\,{\scriptsize$\pm 0.0018$}\,{\scriptsize$(0.2277)$}} & \best{$0.2304$\,{\scriptsize$\pm 0.0044$}\,{\scriptsize$(0.3747)$}} & \best{$0.5341$\,{\scriptsize$\pm 0.0807$}\,{\scriptsize$(1.515)$}} \\
        \cmidrule(l){1-6}
        \multirow{2}{*}{Banana}
        &Scale-equivariant & {$1.093$\,{\scriptsize$\pm 0.0069$}\,{\scriptsize$(1.492)$}} & {$6.112$\,{\scriptsize$\pm 0.0565$}\,{\scriptsize$(7.823)$}} & {$2.127$\,{\scriptsize$\pm 0.0138$}\,{\scriptsize$(3.341)$}} & \best{$4.295$\,{\scriptsize$\pm 0.0227$}\,{\scriptsize$(6.514)$}} \\
        &Not scale-equivariant & \best{$1.046$\,{\scriptsize$\pm 0.0004$}\,{\scriptsize$(1.444)$}} & \best{$5.869$\,{\scriptsize$\pm 0.0307$}\,{\scriptsize$(7.593)$}} & \best{$2.008$\,{\scriptsize$\pm 0.0154$}\,{\scriptsize$(3.319)$}} & {$4.386$\,{\scriptsize$\pm 0.0219$}\,{\scriptsize$(6.669)$}} \\
    
      \bottomrule
    \end{tabular}
  }
  \label{tab:results-abl-equivariance}
\end{table*}

\begin{table}[ht]
  \caption{Experimental results showing the generalisation of the equivariant AAIS and a non-equivariant AAIS \cref{app:non-scale-equivariant} to shifted targets of the same form. Metrics reported follow the convention of \cref{tab:results-funnel}.}
  \vspace*{-0.5em}
  \resizebox{\linewidth}{!}{
    \begin{tabular}{@{}llcccccc}
      \toprule
      Base Target & Shift $\downarrow$ Method $\rightarrow$
      & Equivariant & Non-equivariant \\
      \midrule
        \multirow{2}{*}{Banana ($d=5$)} & Curvature range: $\left[\frac{1}{10}, \frac{1}{2}\right] \to \left[\frac{1}{2}, 1\right]$ & \best{$1.365$\,{\scriptsize$\pm0.0344$}\,{\scriptsize$(2.459)$}} & $1.395$\,{\scriptsize$\pm0.0184$}\,{\scriptsize$(2.593)$} \\
        & Curvature range: $\left[\frac{1}{10}, \frac{1}{2}\right] \to [1,2]$ & \best{$2.912$\,{\scriptsize$\pm0.0520$}\,{\scriptsize$(5.382)$}} & $3.197$\,{\scriptsize$\pm0.0251$}\,{\scriptsize$(5.970)$} \\\midrule
        \multirow{4}{*}{BLR ($d=5$)} & $\beta^*\sim\mathcal{N}(0, 4I_p) \to \beta^*\sim\mathcal{N}(0, 64I_p)$ & \best{$3.980$\,{\scriptsize$\pm0.0428$}\,{\scriptsize$(7.859)$}} & $4.331$\,{\scriptsize$\pm0.0666$}\,{\scriptsize$(8.379)$} \\
        & $\beta^*\sim\mathcal{N}(0, 4I_p) \to \beta^*\sim\mathcal{N}(0, 400I_p)$ & \best{$12.231$\,{\scriptsize$\pm0.0404$}\,{\scriptsize$(20.850)$}} & $12.897$\,{\scriptsize$\pm0.1075$}\,{\scriptsize$(21.379)$} \\\cmidrule(lr){2-4}
        & $n$ range: $[3p, 12p] \to [12p, 48p]$ & \best{$0.7802$\,{\scriptsize$\pm0.0302$}\,{\scriptsize$(2.521)$}} & $0.8313$\,{\scriptsize$\pm0.0859$}\,{\scriptsize$(2.477)$} \\
        & $n$ range: $[3p, 12p] \to [48p, 192p]$ & \best{$0.8607$\,{\scriptsize$\pm0.0162$}\,{\scriptsize$(2.648)$}} & $1.086$\,{\scriptsize$\pm0.2062$}\,{\scriptsize$(2.863)$} \\\midrule
        \multirow{2}{*}{Funnel ($d=5$)} & $x_1$ variance range: $\left[\frac{3}{2}, 3\right] \to \left[3, \frac{9}{2}\right]$ & $3.280$\,{\scriptsize$\pm0.0184$}\,{\scriptsize$(7.076)$} & \best{$3.125$\,{\scriptsize$\pm0.0261$}\,{\scriptsize$(6.939)$}} \\
        & $x_1$ variance range: $\left[\frac{3}{2}, 3\right] \to \left[\frac{9}{2}, 6\right]$  & \best{$21.723$\,{\scriptsize$\pm0.0426$}\,{\scriptsize$(59.748)$}} & - \\\midrule
        \multirow{2}{*}{Hierachical ($d=5$)} & $\tau$ range: $\left[\frac{1}{2}, 8\right] \to \left[8, 20\right]$ & \best{$1.848$\,{\scriptsize$\pm0.1378$}\,{\scriptsize$(5.819)$}} & $2.494$\,{\scriptsize$\pm0.0916$}\,{\scriptsize$(9.985)$} \\
        & $\tau$ range: $\left[\frac{1}{2}, 8\right] \to \left[20, 32\right]$ & -\,{\scriptsize$(16.400)$} & - \\\midrule
        \multirow{2}{*}{Banana ($d=25$)} & Curvature range: $\left[\frac{1}{10}, \frac{1}{2}\right] \to \left[\frac{1}{2}, 1\right]$ & \best{$2.493$\,{\scriptsize$\pm0.0234$}\,{\scriptsize$(3.211)$}} & $2.526$\,{\scriptsize$\pm0.0078$}\,{\scriptsize$(3.323)$} \\
        & Curvature range: $\left[\frac{1}{10}, \frac{1}{2}\right] \to [1,2]$ & \best{$4.253$\,{\scriptsize$\pm0.0454$}\,{\scriptsize$(5.549)$}} & $4.876$\,{\scriptsize$\pm0.0158$}\,{\scriptsize$(6.439)$} \\\midrule
        \multirow{4}{*}{BLR ($d=25$)} & $\beta^*\sim\mathcal{N}(0, 4I_p) \to \beta^*\sim\mathcal{N}(0, 64I_p)$ & \best{$3.366$\,{\scriptsize$\pm0.0550$}\,{\scriptsize$(5.645)$}} & $5.233$\,{\scriptsize$\pm0.0538$}\,{\scriptsize$(6.910)$} \\
        &  $\beta^*\sim\mathcal{N}(0, 4I_p) \to \beta^*\sim\mathcal{N}(0, 400I_p)$ & \best{$11.207$\,{\scriptsize$\pm0.2200$}\,{\scriptsize$(15.456)$}} & $14.769$\,{\scriptsize$\pm0.1202$}\,{\scriptsize$(18.548)$} \\\cmidrule(lr){2-4}
        & $n$ range: $[3p, 12p] \to [12p, 48p]$ & \best{$1.310$\,{\scriptsize$\pm0.0673$}\,{\scriptsize$(2.897)$}} & $1.763$\,{\scriptsize$\pm0.1611$}\,{\scriptsize$(3.148)$} \\
        & $n$ range: $[3p, 12p] \to [48p, 192p]$ & \best{$1.512$\,{\scriptsize$\pm0.0747$}\,{\scriptsize$(3.192)$}} & $2.107$\,{\scriptsize$\pm0.1921$}\,{\scriptsize$(3.678)$} \\\midrule
        \multirow{2}{*}{Funnel ($d=25$)} & $x_1$ variance range: $\left[\frac{3}{2}, 3\right] \to \left[3, \frac{9}{2}\right]$ & $3.609$\,{\scriptsize$\pm0.0564$}\,{\scriptsize$(8.537)$} & \best{$3.603$\,{\scriptsize$\pm0.0229$}\,{\scriptsize$(8.592)$}} \\
        & $x_1$ variance range: $\left[\frac{3}{2}, 3\right] \to \left[\frac{9}{2}, 6\right]$ & \best{$24.502$\,{\scriptsize$\pm0.0988$}\,{\scriptsize$(65.907)$}} & - \\\midrule
        \multirow{2}{*}{Hierarchical ($d=25$)} & $\tau$ range: $\left[\frac{1}{2}, 8\right] \to \left[8, 20\right]$  & \best{$3.767$\,{\scriptsize$\pm0.0882$}\,{\scriptsize$(7.176)$}} & $6.486$\,{\scriptsize$\pm0.4556$}\,{\scriptsize$(14.981)$} \\
        & $\tau$ range: $\left[\frac{1}{2}, 8\right] \to \left[20, 32\right]$ & -\,{\scriptsize$(13.854)$} & - \\
      \bottomrule
    \end{tabular}
  }
  \label{tab:results-generalisation-equivariance}
\end{table}

\subsection{Neural Capacity} \label{app:capacity-abl}
Here we show the effect of network capacity by differing the hidden dimension of the neural networks used in the adaptation function. In \cref{tab:results-abl-capacity} we see generally that increasing the hidden dimension yields better results.

\begin{table*}[ht]
  \centering
  \caption{Experimental results showing the effect of increasing the hidden dimension used in the MLP layers of the adaptation function (\cref{app:update-function}). Metrics reported follow the convention of \cref{tab:results-funnel}. All targets are at $d=25$.}
  \vspace*{-0.5em}
  \resizebox{\linewidth}{!}{
    \begin{tabular}{@{}clcccc}
      \toprule
      Target Family &Hidden dim. $\downarrow$ Metric $\rightarrow$
      & sliced $W_1^{\mathrm{prop}}$ $\downarrow$  & rev.\ KL $\downarrow$  & $\log Z$ err. $\downarrow$ & $\mathbb{E}[x]$ err. $\downarrow$  
      \\
      \midrule
        \multirow{4}{*}{BLR} &
        $H=64$ & \best{$1.174$\,{\scriptsize$\pm 0.0333$}\,{\scriptsize$(2.600)$}} & \best{$409.9$\,{\scriptsize$\pm 3.154$}\,{\scriptsize$(1.0\!\times\!10^{3})$}} & \best{$374.4$\,{\scriptsize$\pm 4.126$}\,{\scriptsize$(994.0)$}} & \best{$6.832$\,{\scriptsize$\pm 0.2652$}\,{\scriptsize$(15.645)$}} \\
        &$H=16$ & $3.119$\,{\scriptsize$\pm 0.1201$}\,{\scriptsize$(3.784)$} & $758.6$\,{\scriptsize$\pm 16.843$}\,{\scriptsize$(1.3\!\times\!10^{3})$} & $644.2$\,{\scriptsize$\pm 16.823$}\,{\scriptsize$(1.2\!\times\!10^{3})$} & $14.417$\,{\scriptsize$\pm 0.8210$}\,{\scriptsize$(21.133)$} \\
        &$H=32$ & $1.808$\,{\scriptsize$\pm 0.1249$}\,{\scriptsize$(2.609)$} & $602.8$\,{\scriptsize$\pm 22.304$}\,{\scriptsize$(1.1\!\times\!10^{3})$} & $484.6$\,{\scriptsize$\pm 20.098$}\,{\scriptsize$(992.2)$} & \second{$8.506$\,{\scriptsize$\pm 0.5306$}\,{\scriptsize$(14.841)$}} \\
        &$H=128$ & \second{$1.687$\,{\scriptsize$\pm 0.1518$}\,{\scriptsize$(3.390)$}} & \second{$527.6$\,{\scriptsize$\pm 24.223$}\,{\scriptsize$(1.2\!\times\!10^{3})$}} & \second{$448.7$\,{\scriptsize$\pm 22.353$}\,{\scriptsize$(1.1\!\times\!10^{3})$}} & $8.904$\,{\scriptsize$\pm 0.9462$}\,{\scriptsize$(18.417)$} \\
        \cmidrule(l){1-6}
        \multirow{4}{*}{Hierarchical} &
        $H=64$ & \second{$0.9877$\,{\scriptsize$\pm 0.0201$}\,{\scriptsize$(2.414)$}} & \second{$2.157$\,{\scriptsize$\pm 0.0160$}\,{\scriptsize$(3.137)$}} & \second{$1.279$\,{\scriptsize$\pm 0.0215$}\,{\scriptsize$(2.194)$}} & \second{$4.531$\,{\scriptsize$\pm 0.0769$}\,{\scriptsize$(12.805)$}} \\
        &$H=16$ & $2.813$\,{\scriptsize$\pm 0.0464$}\,{\scriptsize$(3.642)$} & $13.661$\,{\scriptsize$\pm 0.6366$}\,{\scriptsize$(18.976)$} & $3.318$\,{\scriptsize$\pm 0.1224$}\,{\scriptsize$(5.651)$} & $11.879$\,{\scriptsize$\pm 0.2681$}\,{\scriptsize$(20.676)$} \\
        &$H=32$ & $1.300$\,{\scriptsize$\pm 0.0295$}\,{\scriptsize$(3.383)$} & $3.207$\,{\scriptsize$\pm 0.1227$}\,{\scriptsize$(5.951)$} & $1.651$\,{\scriptsize$\pm 0.0667$}\,{\scriptsize$(3.040)$} & $5.711$\,{\scriptsize$\pm 0.1024$}\,{\scriptsize$(15.312)$} \\
        &$H=128$ & \best{$0.7749$\,{\scriptsize$\pm 0.0127$}\,{\scriptsize$(1.673)$}} & \best{$1.687$\,{\scriptsize$\pm 0.0207$}\,{\scriptsize$(2.571)$}} & \best{$1.059$\,{\scriptsize$\pm 0.0069$}\,{\scriptsize$(1.824)$}} & \best{$3.764$\,{\scriptsize$\pm 0.0740$}\,{\scriptsize$(10.390)$}} \\
        \cmidrule(l){1-6}
        \multirow{4}{*}{Funnel} &
        $H=64$ & \second{$0.5069$\,{\scriptsize$\pm 0.0061$}\,{\scriptsize$(1.082)$}} & \second{$0.1535$\,{\scriptsize$\pm 0.0032$}\,{\scriptsize$(0.2379)$}} & \second{$0.2672$\,{\scriptsize$\pm 0.0074$}\,{\scriptsize$(0.4131)$}} & $0.6550$\,{\scriptsize$\pm 0.1238$}\,{\scriptsize$(2.128)$} \\
        &$H=16$ & $2.833$\,{\scriptsize$\pm 0.2592$}\,{\scriptsize$(3.203)$} & $2.200$\,{\scriptsize$\pm 0.0906$}\,{\scriptsize$(2.840)$} & $1.265$\,{\scriptsize$\pm 0.0686$}\,{\scriptsize$(1.596)$} & $2.769$\,{\scriptsize$\pm 0.1653$}\,{\scriptsize$(4.694)$} \\
        &$H=32$ & $0.5644$\,{\scriptsize$\pm 0.0160$}\,{\scriptsize$(1.247)$} & $0.2447$\,{\scriptsize$\pm 0.0102$}\,{\scriptsize$(0.3586)$} & $0.2977$\,{\scriptsize$\pm 0.0161$}\,{\scriptsize$(0.4720)$} & \best{$0.5783$\,{\scriptsize$\pm 0.0436$}\,{\scriptsize$(1.698)$}} \\
        &$H=128$ & \best{$0.5019$\,{\scriptsize$\pm 0.0041$}\,{\scriptsize$(1.068)$}} & \best{$0.1478$\,{\scriptsize$\pm 0.0056$}\,{\scriptsize$(0.2205)$}} & \best{$0.2610$\,{\scriptsize$\pm 0.0050$}\,{\scriptsize$(0.4050)$}} & \second{$0.6051$\,{\scriptsize$\pm 0.0082$}\,{\scriptsize$(1.691)$}} \\
        \cmidrule(l){1-6}
        \multirow{4}{*}{Banana} &
        $H=64$ & $1.093$\,{\scriptsize$\pm 0.0069$}\,{\scriptsize$(1.492)$} & \second{$6.112$\,{\scriptsize$\pm 0.0565$}\,{\scriptsize$(7.823)$}} & \second{$2.127$\,{\scriptsize$\pm 0.0138$}\,{\scriptsize$(3.341)$}} & \second{$4.295$\,{\scriptsize$\pm 0.0227$}\,{\scriptsize$(6.514)$}} \\
        &$H=16$ & \best{$0.6292$\,{\scriptsize$\pm 0.0107$}\,{\scriptsize$(0.8657)$}} & $24.527$\,{\scriptsize$\pm 0.2351$}\,{\scriptsize$(30.369)$} & $6.734$\,{\scriptsize$\pm 0.1963$}\,{\scriptsize$(10.040)$} & $5.299$\,{\scriptsize$\pm 0.0465$}\,{\scriptsize$(8.114)$} \\
        &$H=32$ & $1.142$\,{\scriptsize$\pm 0.0098$}\,{\scriptsize$(1.598)$} & $6.902$\,{\scriptsize$\pm 0.0425$}\,{\scriptsize$(8.885)$} & $2.411$\,{\scriptsize$\pm 0.0348$}\,{\scriptsize$(3.775)$} & $4.375$\,{\scriptsize$\pm 0.0058$}\,{\scriptsize$(6.627)$} \\
        &$H=128$ & \second{$1.029$\,{\scriptsize$\pm 0.0021$}\,{\scriptsize$(1.446)$}} & \best{$5.004$\,{\scriptsize$\pm 0.0154$}\,{\scriptsize$(6.429)$}} & \best{$1.872$\,{\scriptsize$\pm 0.0087$}\,{\scriptsize$(2.986)$}} & \best{$4.135$\,{\scriptsize$\pm 0.0205$}\,{\scriptsize$(6.177)$}} \\
      \bottomrule
    \end{tabular}
  }
  \label{tab:results-abl-capacity}
\end{table*}
\subsection{Generalisation to Dissimilar Families} \label{app:generalisation-other}
Here we explore the result of training AAIS on one category of target family and evaluating on another. \cref{tab:results-generalisation-other} shows that each family is best sampled by the sampler that was trained on it -- as expected. The sampler trained on the Banana family appears to generalise the best to the other families. All the samplers show limited generalisation performance when evaluated on the Bayesian hierarchical family.
\begin{table}[ht]
  \caption{Experimental results showing the generalisation of a trained AAIS algorithm to targets from different families. Metrics reported follow the convention of \cref{tab:results-funnel}.}
  \vspace*{-0.5em}
  \resizebox{\linewidth}{!}{
    \begin{tabular}{@{}llcccc}
      \toprule
      &Trained on $\downarrow$ \,\,\,\, Evaluated on $\rightarrow$ & Banana & BLR & Funnel & Hierarchical \\
      \midrule
        \multirow{4}{*}{\rotatebox{90}{$d=5$}} &
        Banana & \best{$0.5072$\,{\scriptsize$\pm0.0042$}\,{\scriptsize($1.132$)}} & \second{$1.790$\,{\scriptsize$\pm0.1044$}\,{\scriptsize($3.365$)}} & \second{$0.9594$\,{\scriptsize$\pm0.0191$}\,{\scriptsize($1.709$)}} & $3.040$\,{\scriptsize$\pm0.0285$}\,{\scriptsize($6.533$)} \\
        &BLR & $1.516$\,{\scriptsize$\pm0.0204$}\,{\scriptsize($2.158$)} & \best{$0.7486$\,{\scriptsize$\pm0.0271$}\,{\scriptsize($2.397$)}} & $1.598$\,{\scriptsize$\pm0.0177$}\,{\scriptsize($2.336$)} & $3.607$\,{\scriptsize$\pm0.0194$}\,{\scriptsize($6.645$)} \\
        &Funnel & \second{$0.7655$\,{\scriptsize$\pm0.0175$}\,{\scriptsize($1.349$)}} & $2.415$\,{\scriptsize$\pm0.1542$}\,{\scriptsize($4.008$)} & \best{$0.4658$\,{\scriptsize$\pm0.0016$}\,{\scriptsize($0.9783$)}} & \second{$2.877$\,{\scriptsize$\pm0.0391$}\,{\scriptsize($6.540$)}} \\
        &Hierarchical & $0.9649$\,{\scriptsize$\pm0.0458$}\,{\scriptsize($1.698$)} & $4.585$\,{\scriptsize$\pm1.097$}\,{\scriptsize($8.091$)} & $1.107$\,{\scriptsize$\pm0.0319$}\,{\scriptsize($1.843$)} & \best{$0.8718$\,{\scriptsize$\pm0.0289$}\,{\scriptsize($1.500$)}} \\
        \cmidrule{1-6}
        \multirow{4}{*}{\rotatebox{90}{$d=25$}}&
        Banana & \best{1.093\,{\scriptsize$\pm0.0069$}\,{\scriptsize$(1.492)$}} & \second{$1.581$\,{\scriptsize$\pm0.0354$}\,{\scriptsize$(2.745)$}} & \second{1.045\,{\scriptsize$\pm0.0203$}\,{\scriptsize$(1.872)$}} & $3.985$\,{\scriptsize$\pm0.0198$}\,{\scriptsize$(8.518)$} \\
        &BLR & $1.610$\,{\scriptsize$\pm0.0280$}\,{\scriptsize$(1.955)$} & \best{$1.174$\,{\scriptsize$\pm0.0334$}\,{\scriptsize$(2.602)$}} & $1.480$\,{\scriptsize$\pm0.0490$}\,{\scriptsize$(2.378)$} & $4.234$\,{\scriptsize$\pm0.0647$}\,{\scriptsize$(8.567)$} \\
        &Funnel & $1.294$\,{\scriptsize$\pm0.1043$}\,{\scriptsize$(1.680)$} & $1.848$\,{\scriptsize$\pm0.1444$}\,{\scriptsize$(2.702)$} & \best{$0.5069$\,{\scriptsize$\pm0.0061$}\,{\scriptsize$(1.082)$}} & \second{$3.783$\,{\scriptsize$\pm0.0331$}\,{\scriptsize$(8.079)$}} \\
        &Hierarchical & \second{$1.179$\,{\scriptsize$\pm0.0258$}\,{\scriptsize$(1.607)$}} & $2.466$\,{\scriptsize$\pm0.0576$}\,{\scriptsize$(4.047)$} & $1.152$\,{\scriptsize$\pm0.0756$}\,{\scriptsize$(1.931)$} & \best{$0.9878$\,{\scriptsize$\pm0.0201$}\,{\scriptsize$(2.414)$}} \\
      \bottomrule
    \end{tabular}
  }
  \label{tab:results-generalisation-other}
\end{table}

\subsection{Additional Metrics for Main Text Experiments}\label{app:additional-experiments-main}
Here we give the full set of metrics for the experiments in the main text: funnel (\cref{tab:results-funnel-full}) and the Bayesian hierarchical family (\cref{tab:results-hierarchical-full}). We also show experiments for the banana family (\cref{tab:results-banana-full}) and the Bayesisan linear regression (\cref{tab:results-blr-full}) family, and a mixture of $5$ Gaussians (\cref{tab:results-gmm5-full}). 

\begin{table*}[t]
  \centering
  \caption{Experimental results for Funnel $d=5,25, 50$. Metrics reported follow the convention of \cref{tab:results-funnel}.}
  \vspace*{-0.5em}
  \resizebox{\linewidth}{!}{
    \begin{tabular}{@{}llccccccc}
      \toprule
      &Method $\downarrow$ Metric $\rightarrow$
      & sliced $W_1^{\mathrm{prop}}$ $\downarrow$ & sliced $W_1$ $\downarrow$ & rev.\ KL $\downarrow$ & ESS $\uparrow$ & $\log Z$ err. $\downarrow$ & $\mathbb{E}[x]$ err. $\downarrow$ & $\mathbb{E}\|x\|^2$ err. $\downarrow$
      \\
      \midrule
        \multirow{5}{*}{\rotatebox{90}{$d=5$}} & AAIS & \best{0.4658\,{\scriptsize$\pm$0.002}\,{\scriptsize(0.978)}} & 
        \best{0.5919\,{\scriptsize$\pm$0.004}\,{\scriptsize(1.21)}} & 
        \best{0.1224\,{\scriptsize$\pm$0.003}\,{\scriptsize(0.194)}} & 
        \best{0.2605\,{\scriptsize$\pm$0.008}\,{\scriptsize(0.015)}} & 
        \best{0.1250\,{\scriptsize$\pm$0.005}\,{\scriptsize(0.241)}} & 
        \best{0.3619\,{\scriptsize$\pm$0.006}\,{\scriptsize(1.07)}} & 
        \best{70.126\,{\scriptsize$\pm$0.267}\,{\scriptsize(246.6)}} \\
        &DM-PMC & 0.9592\,{\scriptsize(1.69)} &
        0.9823\,{\scriptsize(1.83)} &
        11.360\,{\scriptsize(31.36)} &
        0.0301\,{\scriptsize(0.002)} &
        0.2691\,{\scriptsize(0.537)} &
        \second{0.5487\,{\scriptsize(1.31)}} &
        83.221\,{\scriptsize(267.1)} \\
        &M-PMC &
        \second{0.7222\,{\scriptsize(1.43)}} & 
        \second{0.7588\,{\scriptsize(1.57)}} & 
        \second{0.7444\,{\scriptsize(1.52)}} & 
        \second{0.0965\,{\scriptsize(0.004)}} & 
        \second{0.1777\,{\scriptsize(0.347)}} & 
        0.5617\,{\scriptsize(1.43)} &
        78.737\,{\scriptsize(258.9)} \\
        &GAPIS & 
        1.678\,{\scriptsize(2.44)} & 
        1.425\,{\scriptsize(2.44)} &
        $1.6\!\times\!10^{25}$\,{\scriptsize($1.1\!\times\!10^{12}$)} &
        0.0055\,{\scriptsize(0.001)} &
        0.4400\,{\scriptsize(1.05)} & 
        0.9232\,{\scriptsize(1.94)} &
        \second{71.511\,{\scriptsize(259.7)}} \\\cmidrule(l){2-9}
        &True samples &
        0.4451\,{\scriptsize(0.848)} &
        0.5206\,{\scriptsize(0.967)} &
        0.0000\,{\scriptsize(0.000)} &
        1.000\,{\scriptsize(1.00)} &
        0.0000\,{\scriptsize(0.000)} &
        0.1113\,{\scriptsize(0.241)} &
        15.528\,{\scriptsize(73.76)} \\
        \midrule
        \multirow{5}{*}{\rotatebox{90}{$d=25$}} & AAIS & 
        \best{0.5069\,{\scriptsize$\pm$0.006}\,{\scriptsize(1.08)}} & 
        \best{0.5561\,{\scriptsize$\pm$0.008}\,{\scriptsize(1.16)}} & 
        \best{0.1535\,{\scriptsize$\pm$0.003}\,{\scriptsize(0.238)}} & 
        \best{0.1953\,{\scriptsize$\pm$0.006}\,{\scriptsize(0.003)}} & 
        \best{0.2672\,{\scriptsize$\pm$0.007}\,{\scriptsize(0.413)}} & 
        \best{0.6550\,{\scriptsize$\pm$0.124}\,{\scriptsize(2.13)}} & 
        \best{428.2\,{\scriptsize$\pm$7.106}\,{\scriptsize($1.6\!\times\!10^{3}$)}} \\
        &DM-PMC & 
        1.423\,{\scriptsize(2.22)} & 
        \second{1.706\,{\scriptsize(2.67)}} &
        \second{27.603\,{\scriptsize(37.52)}} &
        0.0004\,{\scriptsize(0.000)} &
        \second{4.277\,{\scriptsize(7.06)}} &
        \second{3.920\,{\scriptsize(6.95)}} & 
        483.0\,{\scriptsize($1.7\!\times\!10^{3}$)} \\
        &M-PMC & 
        2.060\,{\scriptsize(2.85)} & 
        2.074\,{\scriptsize(2.88)} & 
        97.424\,{\scriptsize(126.9)} & 
        0.0003\,{\scriptsize(0.000)} & 
        10.626\,{\scriptsize(13.29)} & 
        5.674\,{\scriptsize(8.45)} & 
        461.2\,{\scriptsize($1.6\!\times\!10^{3}$)} \\
        &GAPIS & 
        \second{1.125\,{\scriptsize(1.68)}} & 
        2.045\,{\scriptsize(2.83)} & 
        152.6\,{\scriptsize(207.5)} &
        \second{0.0013\,{\scriptsize(0.000)}} &
        9.789\,{\scriptsize(12.11)} &
        5.947\,{\scriptsize(8.63)} &
        \second{448.7\,{\scriptsize($1.6\!\times\!10^{3}$)}} \\\cmidrule(l){2-9}
        &True samples &
        0.4787\,{\scriptsize(0.941)} & 
        0.5088\,{\scriptsize(1.08)} & 
        0.0000\,{\scriptsize(0.000)} & 
        1.000\,{\scriptsize(1.00)} & 
        0.0000\,{\scriptsize(0.000)} & 
        0.1816\,{\scriptsize(0.397)} & 
        79.879\,{\scriptsize(329.9)} \\
        \midrule
        \multirow{5}{*}{\rotatebox{90}{$d=50$}} & AAIS & 
        \best{0.5123\,{\scriptsize$\pm$0.002}\,{\scriptsize(1.11)}} & 
        \best{0.5039\,{\scriptsize$\pm$0.012}\,{\scriptsize(1.09)}} &
        \best{0.1306\,{\scriptsize$\pm$0.001}\,{\scriptsize(0.214)}} &
        \best{0.1327\,{\scriptsize$\pm$0.003}\,{\scriptsize(0.002)}} &
        \best{0.2921\,{\scriptsize$\pm$0.003}\,{\scriptsize(0.437)}} &
        \best{0.7300\,{\scriptsize$\pm$0.110}\,{\scriptsize(1.90)}} & 
        \best{875.2\,{\scriptsize$\pm$3.980}\,{\scriptsize($3.3\!\times\!10^{3}$)}} \\
        &DM-PMC & 
        1.837\,{\scriptsize(2.65)} & 
        \second{2.090\,{\scriptsize(3.05)}} &
        \second{60.322\,{\scriptsize(76.70)}} &
        0.0001\,{\scriptsize(0.000)} &
        \second{18.597\,{\scriptsize(22.77)}} &
        \second{9.295\,{\scriptsize(12.86)}} &
        934.0\,{\scriptsize($3.4\!\times\!10^{3}$)} \\
        &M-PMC &
        2.263\,{\scriptsize(3.07)} &
        2.266\,{\scriptsize(3.06)} & 
        245.1\,{\scriptsize(260.7)} & 
        0.0001\,{\scriptsize(0.000)} & 
        22.685\,{\scriptsize(26.08)} & 
        9.375\,{\scriptsize(12.92)} & 
        925.8\,{\scriptsize($3.3\!\times\!10^{3}$)} \\
        &GAPIS & 
        \second{1.070\,{\scriptsize(1.64)}} & 
        2.160\,{\scriptsize(3.07)} & 
        356.6\,{\scriptsize(464.3)} &
        \second{0.0006\,{\scriptsize(0.000)}} &
        23.015\,{\scriptsize(26.58)} & 
        9.638\,{\scriptsize(13.59)} & 
        \second{909.4\,{\scriptsize($3.4\!\times\!10^{3}$)}} \\\cmidrule(l){2-9}
        &True samples & 
        0.4830\,{\scriptsize(0.967)} & 
        0.4836\,{\scriptsize(0.910)} & 
        0.0000\,{\scriptsize(0.000)} & 
        1.000\,{\scriptsize(1.00)} & 
        0.0000\,{\scriptsize(0.000)} & 
        0.1553\,{\scriptsize(0.315)} & 
        99.954\,{\scriptsize(431.2)} \\
      \bottomrule
    \end{tabular}
  }
  \label{tab:results-funnel-full}
\end{table*}

\begin{table*}[t]
  \centering
  \caption{Experimental results for Banana $d=5,25, 50$. Metrics reported follow the convention of \cref{tab:results-funnel}.}
  \vspace*{-0.5em}
  \resizebox{\linewidth}{!}{
    \begin{tabular}{@{}llccccccc}
      \toprule
      &Method $\downarrow$ Metric $\rightarrow$
      & sliced $W_1^{\mathrm{prop}}$ $\downarrow$ & sliced $W_1$ $\downarrow$ & rev.\ KL $\downarrow$ & ESS $\uparrow$ & $\log Z$ err. $\downarrow$ & $\mathbb{E}[x]$ err. $\downarrow$ & $\mathbb{E}\|x\|^2$ err. $\downarrow$
      \\
      \midrule
        \multirow{5}{*}{\rotatebox{90}{$d=5$}} & AAIS & 
        \best{0.5072\,{\scriptsize$\pm$0.004}\,{\scriptsize(1.13)}} & 
        \second{0.5685\,{\scriptsize$\pm$0.016}\,{\scriptsize(1.25)}} &
        \best{0.4092\,{\scriptsize$\pm$0.008}\,{\scriptsize(0.867)}} &
        \best{0.1301\,{\scriptsize$\pm$0.003}\,{\scriptsize(0.006)}} & 
        \best{0.1988\,{\scriptsize$\pm$0.011}\,{\scriptsize(0.453)}} & 
        \second{0.9676\,{\scriptsize$\pm$0.029}\,{\scriptsize(2.44)}} &
        9.373\,{\scriptsize$\pm$0.085}\,{\scriptsize(27.18)} \\
        &DM-PMC & 
        0.7214\,{\scriptsize(1.38)} &
        0.6353\,{\scriptsize(1.35)} & 
        3.493\,{\scriptsize(6.80)} & 
        0.0466\,{\scriptsize(0.003)} &
        \second{0.2104\,{\scriptsize(0.473)}} & 
        \best{0.9608\,{\scriptsize(2.17)}} & 
        9.273\,{\scriptsize(27.58)} \\
        &M-PMC &
        \second{0.5467\,{\scriptsize(1.25)}} & 
        \best{0.5637\,{\scriptsize(1.27)}} & 
        \second{2.151\,{\scriptsize(6.55)}} & 
        \second{0.0619\,{\scriptsize(0.002)}} & 
        0.2597\,{\scriptsize(0.688)} & 
        0.9823\,{\scriptsize(2.41)} & 
        \second{8.910\,{\scriptsize(25.39)}} \\
        &GAPIS & 
        1.214\,{\scriptsize(1.69)} &
        0.9583\,{\scriptsize(1.57)} &
        162.3\,{\scriptsize(675.6)} &
        0.0053\,{\scriptsize(0.001)} &
        0.5060\,{\scriptsize(1.09)} &
        1.207\,{\scriptsize(2.30)} &
        \best{6.848\,{\scriptsize(22.08)}} \\\cmidrule(l){2-9}
        &True samples &
        0.2041\,{\scriptsize(0.339)} &
        0.2487\,{\scriptsize(0.408)} & 
        0.0000\,{\scriptsize(0.000)} &
        1.000\,{\scriptsize(1.00)} & 
        0.0000\,{\scriptsize(0.000)} &
        0.0610\,{\scriptsize(0.117)} &
        0.4696\,{\scriptsize(1.55)} \\
       \midrule
       \multirow{5}{*}{\rotatebox{90}{$d=25$}} & AAIS & 
       \second{1.093\,{\scriptsize$\pm$0.007}\,{\scriptsize(1.49)}} &
       \best{1.275\,{\scriptsize$\pm$0.011}\,{\scriptsize(1.90)}} &
       \best{6.112\,{\scriptsize$\pm$0.056}\,{\scriptsize(7.82)}} &
       \best{0.0032\,{\scriptsize$\pm$0.000}\,{\scriptsize(0.000)}} &
       \best{2.127\,{\scriptsize$\pm$0.014}\,{\scriptsize(3.34)}} &
       \best{4.295\,{\scriptsize$\pm$0.023}\,{\scriptsize(6.51)}} & 
       70.088\,{\scriptsize$\pm$0.358}\,{\scriptsize(113.7)} \\
        &DM-PMC & 
        1.356\,{\scriptsize(1.91)} &
        \second{1.699\,{\scriptsize(2.29)}} &
        \second{27.645\,{\scriptsize(41.22)}} &
        0.0004\,{\scriptsize(0.000)} &
        \second{5.999\,{\scriptsize(9.25)}} &
        \second{6.206\,{\scriptsize(8.75)}} &
        58.521\,{\scriptsize(99.73)} \\
        &M-PMC & 
        2.239\,{\scriptsize(2.64)} & 
        2.247\,{\scriptsize(2.66)} & 
        114.9\,{\scriptsize(140.8)} & 
        0.0001\,{\scriptsize(0.000)} & 
        26.816\,{\scriptsize(36.92)} & 
        8.774\,{\scriptsize(11.09)} & 
        \best{31.910\,{\scriptsize(88.34)}} \\
        &GAPIS & 
        \best{0.5332\,{\scriptsize(0.645)}} &
        1.848\,{\scriptsize(2.51)} & 
        445.5\,{\scriptsize(707.8)} &
        \second{0.0010\,{\scriptsize(0.000)}} &
        10.795\,{\scriptsize(14.04)} & 
        6.815\,{\scriptsize(9.97)} & 
        \second{40.659\,{\scriptsize(85.92)}} \\\cmidrule(l){2-9}
        &True samples & 
        0.2357\,{\scriptsize(0.292)} & 
        0.2583\,{\scriptsize(0.333)} &
        0.0000\,{\scriptsize(0.000)} & 
        1.000\,{\scriptsize(1.00)} & 
        0.0000\,{\scriptsize(0.000)} & 
        0.0980\,{\scriptsize(0.134)} &
        0.8277\,{\scriptsize(2.35)} \\
        \midrule
        \multirow{5}{*}{\rotatebox{90}{$d=50$}} & AAIS &
        \second{1.196\,{\scriptsize$\pm$0.016}\,{\scriptsize(1.51)}} &
        \best{1.482\,{\scriptsize$\pm$0.006}\,{\scriptsize(2.06)}} &
        \best{13.702\,{\scriptsize$\pm$0.186}\,{\scriptsize(16.68)}} &
        \best{0.0004\,{\scriptsize$\pm$0.000}\,{\scriptsize(0.000)}} &
        \best{5.208\,{\scriptsize$\pm$0.149}\,{\scriptsize(7.47)}} &
        \best{7.071\,{\scriptsize$\pm$0.033}\,{\scriptsize(10.02)}} &
        154.4\,{\scriptsize$\pm$0.762}\,{\scriptsize(218.1)} \\
        &DM-PMC & 
        1.761\,{\scriptsize(2.28)} & 
        2.159\,{\scriptsize(2.68)} &
        \second{75.504\,{\scriptsize(106.1)}} &
        0.0001\,{\scriptsize(0.000)} &
        \second{24.740\,{\scriptsize(34.40)}} &
        12.437\,{\scriptsize(16.33)} &
        80.989\,{\scriptsize(163.7)} \\
        &M-PMC & 
        2.394\,{\scriptsize(2.68)} &
        2.397\,{\scriptsize(2.69)} &
        275.8\,{\scriptsize(303.4)} &
        0.0000\,{\scriptsize(0.000)} &
        68.704\,{\scriptsize(91.90)} &
        13.831\,{\scriptsize(16.22)} &
        \best{50.547\,{\scriptsize(128.0)}} \\
        &GAPIS & 
        \best{0.4102\,{\scriptsize(0.480)}} &
        \second{2.047\,{\scriptsize(2.51)}} &
        101.2\,{\scriptsize(88.85)} &
        \second{0.0004\,{\scriptsize(0.000)}} &
        29.252\,{\scriptsize(34.85)} &
        \second{10.948\,{\scriptsize(14.38)}} &
        \second{80.799\,{\scriptsize(152.2)}} \\ \cmidrule(l){2-9}
        &True samples & 
        0.2424\,{\scriptsize(0.282)} & 
        0.2470\,{\scriptsize(0.291)} & 
        0.0000\,{\scriptsize(0.000)} & 
        1.000\,{\scriptsize(1.00)} & 
        0.0000\,{\scriptsize(0.000)} & 
        0.0857\,{\scriptsize(0.110)} & 
        0.7316\,{\scriptsize(1.99)} \\
      \bottomrule
    \end{tabular}
  }
  \label{tab:results-banana-full}
\end{table*}

\begin{table*}[t]
  \centering
  \caption{Experimental results for GMM5 $d=2,5,25$. Metrics reported follow the convention of \cref{tab:results-funnel}.}
  \vspace*{-0.5em}
  \resizebox{\linewidth}{!}{
    \begin{tabular}{@{}llccccccc}
      \toprule
      &Method $\downarrow$ Metric $\rightarrow$
      & sliced $W_1^{\mathrm{prop}}$ $\downarrow$ & sliced $W_1$ $\downarrow$ & rev.\ KL $\downarrow$ & ESS $\uparrow$ & $\log Z$ err. $\downarrow$ & $\mathbb{E}[x]$ err. $\downarrow$ & $\mathbb{E}\|x\|^2$ err. $\downarrow$
      \\
      \midrule
        \multirow{5}{*}{\rotatebox{90}{$d=2$}} 
        &Learnt & \second{$0.6752$\,{\scriptsize$\pm 0.0028$}\,{\scriptsize$(1.579)$}} & \second{$0.4618$\,{\scriptsize$\pm 0.0042$}\,{\scriptsize$(1.200)$}} & \second{$0.2126$\,{\scriptsize$\pm 0.0032$}\,{\scriptsize$(0.5842)$}} & \best{$0.4159$\,{\scriptsize$\pm 0.0054$}\,{\scriptsize$(0.0371)$}} & \second{$0.1191$\,{\scriptsize$\pm 0.0032$}\,{\scriptsize$(0.4545)$}} & \second{$0.4452$\,{\scriptsize$\pm 0.0067$}\,{\scriptsize$(1.262)$}} & $0.9646$\,{\scriptsize$\pm 0.0329$}\,{\scriptsize$(3.482)$} \\
        &DM-PMC & $1.116$\,{\scriptsize$(2.179)$} & $0.8316$\,{\scriptsize$(1.856)$} & $1.199$\,{\scriptsize$(2.092)$} & $0.1341$\,{\scriptsize$(0.0073)$} & $0.1933$\,{\scriptsize$(0.4949)$} & $0.6694$\,{\scriptsize$(1.910)$} & $1.110$\,{\scriptsize$(2.738)$} \\
        &M-PMC & \best{$0.3899$\,{\scriptsize$(0.9256)$}} & \best{$0.3840$\,{\scriptsize$(0.9017)$}} & \best{$0.2054$\,{\scriptsize$(0.6037)$}} & \second{$0.4153$\,{\scriptsize$(0.0352)$}} & \best{$0.0840$\,{\scriptsize$(0.3159)$}} & \best{$0.3541$\,{\scriptsize$(1.278)$}} & \best{$0.6824$\,{\scriptsize$(2.382)$}} \\
        &GAPIS & $0.7924$\,{\scriptsize$(1.595)$} & $0.5031$\,{\scriptsize$(1.269)$} & $0.4298$\,{\scriptsize$(0.9473)$} & $0.3072$\,{\scriptsize$(0.0115)$} & $0.1297$\,{\scriptsize$(0.4825)$} & $0.4465$\,{\scriptsize$(1.477)$} & \second{$0.9507$\,{\scriptsize$(3.325)$}} \\
        \cmidrule(l){2-9}
        &True samples & $0.1862$\,{\scriptsize$(0.3086)$} & $0.2578$\,{\scriptsize$(0.4077)$} & $0.0000$\,{\scriptsize$(0.0000)$} & $1.0000$\,{\scriptsize$(1.0000)$} & $0.0000$\,{\scriptsize$(0.0000)$} & $0.0505$\,{\scriptsize$(0.1023)$} & $0.1148$\,{\scriptsize$(0.2929)$} \\
            
       \midrule
       \multirow{5}{*}{\rotatebox{90}{$d=5$}} & Learnt & \second{$0.7183$\,{\scriptsize$\pm 0.0043$}\,{\scriptsize$(1.200)$}} & \second{$0.5029$\,{\scriptsize$\pm 0.0048$}\,{\scriptsize$(0.9176)$}} & \second{$0.3738$\,{\scriptsize$\pm 0.0023$}\,{\scriptsize$(0.7057)$}} & \second{$0.2325$\,{\scriptsize$\pm 0.0012$}\,{\scriptsize$(0.0095)$}} & \second{$0.2625$\,{\scriptsize$\pm 0.0030$}\,{\scriptsize$(0.5819)$}} & \second{$0.9929$\,{\scriptsize$\pm 0.0063$}\,{\scriptsize$(2.060)$}} & \second{$2.545$\,{\scriptsize$\pm 0.0308$}\,{\scriptsize$(7.899)$}} \\
        &DM-PMC & $1.251$\,{\scriptsize$(2.170)$} & $1.208$\,{\scriptsize$(2.205)$} & $3.393$\,{\scriptsize$(4.342)$} & $0.0457$\,{\scriptsize$(0.0008)$} & $0.6904$\,{\scriptsize$(1.585)$} & $2.560$\,{\scriptsize$(5.181)$} & $4.223$\,{\scriptsize$(10.547)$} \\
        &M-PMC & $1.578$\,{\scriptsize$(2.502)$} & $1.588$\,{\scriptsize$(2.504)$} & $1.532$\,{\scriptsize$(2.642)$} & $0.0929$\,{\scriptsize$(0.0008)$} & $1.237$\,{\scriptsize$(1.946)$} & $3.369$\,{\scriptsize$(5.651)$} & $6.179$\,{\scriptsize$(14.867)$} \\
        &GAPIS & \best{$0.7138$\,{\scriptsize$(1.194)$}} & \best{$0.4777$\,{\scriptsize$(0.8853)$}} & \best{$0.3551$\,{\scriptsize$(0.7013)$}} & \best{$0.2661$\,{\scriptsize$(0.0066)$}} & \best{$0.2188$\,{\scriptsize$(0.5239)$}} & \best{$0.9343$\,{\scriptsize$(2.171)$}} & \best{$2.116$\,{\scriptsize$(6.743)$}} \\
        \cmidrule(l){2-9}
        &True samples & $0.1965$\,{\scriptsize$(0.3019)$} & $0.2303$\,{\scriptsize$(0.3374)$} & $0.0000$\,{\scriptsize$(0.0000)$} & $1.000$\,{\scriptsize$(1.000)$} & $0.0000$\,{\scriptsize$(0.0000)$} & $0.0612$\,{\scriptsize$(0.1118)$} & $0.1294$\,{\scriptsize$(0.3106)$} \\
        \midrule
        \multirow{5}{*}{\rotatebox{90}{$d=25$}} &Learnt & \second{$0.5869$\,{\scriptsize$\pm 0.0012$}\,{\scriptsize$(0.9770)$}} & \best{$0.2691$\,{\scriptsize$\pm 0.0034$}\,{\scriptsize$(0.5840)$}} & \best{$0.2416$\,{\scriptsize$\pm 0.0072$}\,{\scriptsize$(0.5675)$}} & \best{$0.1370$\,{\scriptsize$\pm 0.0018$}\,{\scriptsize$(0.0044)$}} & \best{$0.3041$\,{\scriptsize$\pm 0.0022$}\,{\scriptsize$(0.5076)$}} & \best{$1.036$\,{\scriptsize$\pm 0.0124$}\,{\scriptsize$(3.098)$}} & \best{$2.918$\,{\scriptsize$\pm 0.0906$}\,{\scriptsize$(13.044)$}} \\
        &DM-PMC & $1.838$\,{\scriptsize$(2.153)$} & $2.102$\,{\scriptsize$(2.451)$} & $22.754$\,{\scriptsize$(27.329)$} & $0.0004$\,{\scriptsize$(0.0001)$} & $3.386$\,{\scriptsize$(5.442)$} & $10.139$\,{\scriptsize$(12.222)$} & $20.518$\,{\scriptsize$(48.346)$} \\
        &M-PMC & $2.390$\,{\scriptsize$(2.634)$} & $2.394$\,{\scriptsize$(2.633)$} & $177.4$\,{\scriptsize$(213.7)$} & $0.0001$\,{\scriptsize$(0.0001)$} & $87.586$\,{\scriptsize$(115.1)$} & $10.349$\,{\scriptsize$(12.579)$} & $34.230$\,{\scriptsize$(70.776)$} \\
        &GAPIS & \best{$0.5746$\,{\scriptsize$(0.9472)$}} & \second{$0.3353$\,{\scriptsize$(0.6209)$}} & \second{$0.6309$\,{\scriptsize$(1.057)$}} & \second{$0.0284$\,{\scriptsize$(0.0018)$}} & \second{$0.6016$\,{\scriptsize$(0.9685)$}} & \second{$2.163$\,{\scriptsize$(5.333)$}} & \second{$5.237$\,{\scriptsize$(17.045)$}} \\
        \cmidrule(l){2-9}
        &True samples & $0.1905$\,{\scriptsize$(0.2713)$} & $0.2085$\,{\scriptsize$(0.3031)$} & $0.0000$\,{\scriptsize$(0.0000)$} & $1.000$\,{\scriptsize$(1.000)$} & $0.0000$\,{\scriptsize$(0.0000)$} & $0.0943$\,{\scriptsize$(0.1493)$} & $0.1951$\,{\scriptsize$(0.5108)$} \\
      \bottomrule
    \end{tabular}
  }
  \label{tab:results-gmm5-full}
\end{table*}

\begin{table*}[t]
  \centering
  \caption{Experimental results for Bayesian linear regression $d=5,25$. Metrics reported follow the convention of \cref{tab:results-funnel}.}
  \vspace*{-0.5em}
  \resizebox{\linewidth}{!}{
    \begin{tabular}{@{}llccccccc}
      \toprule
      &Method $\downarrow$ Metric $\rightarrow$
      & sliced $W_1^{\mathrm{prop}}$ $\downarrow$ & sliced $W_1$ $\downarrow$ & rev.\ KL $\downarrow$ & ESS $\uparrow$ & $\log Z$ err. $\downarrow$ & $\mathbb{E}[x]$ err. $\downarrow$ & $\mathbb{E}\|x\|^2$ err. $\downarrow$
      \\
      \midrule
        \multirow{5}{*}{\rotatebox{90}{$d=5$}}& AAIS & \best{0.7489\,{\scriptsize$\pm$0.027}\,{\scriptsize(2.39)}} &
        \best{0.7158\,{\scriptsize$\pm$0.031}\,{\scriptsize(2.41)}} &
        \best{25.102\,{\scriptsize$\pm$1.016}\,{\scriptsize(90.71)}} &
        \best{0.0140\,{\scriptsize$\pm$0.001}\,{\scriptsize(0.000)}} &
        \second{21.971\,{\scriptsize$\pm$1.398}\,{\scriptsize(85.87)}} &
        \second{1.869\,{\scriptsize$\pm$0.081}\,{\scriptsize(6.14)}} &
        \second{3.852\,{\scriptsize$\pm$0.125}\,{\scriptsize(17.16)}} \\
        &DM-PMC &
        \second{1.099\,{\scriptsize(2.20)}} &
        \second{0.7844\,{\scriptsize(2.07)}} &
        $2.0\!\times\!10^{3}$\,{\scriptsize($6.4\!\times\!10^{3}$)} &
        0.0005\,{\scriptsize(0.000)} &
        \best{14.359\,{\scriptsize(54.53)}} &
        \best{1.419\,{\scriptsize(4.65)}} & 
        \best{3.311\,{\scriptsize(11.96)}} \\
        &M-PMC &
        2.263\,{\scriptsize(4.17)} &
        2.223\,{\scriptsize(4.18)} &
        \second{114.1\,{\scriptsize(262.1)}} &
        \second{0.0016\,{\scriptsize(0.000)}} & 
        111.3\,{\scriptsize(257.0)} &
        5.941\,{\scriptsize(11.14)} &
        30.751\,{\scriptsize(91.58)} \\
        &GAPIS 
        & 34.469\,{\scriptsize(163.6)} 
        & 2.999\,{\scriptsize(4.71)} 
        & $6.8\!\times\!10^{3}$\,{\scriptsize($1.5\!\times\!10^{4}$)} 
        & 0.0003\,{\scriptsize(0.000)} 
        & 69.437\,{\scriptsize(144.8)} 
        & 5.895\,{\scriptsize(10.22)} 
        & 30.323\,{\scriptsize(75.57)} \\\cmidrule(l){2-9}
        &True samples & 0.0169\,{\scriptsize(0.030)} & 0.0199\,{\scriptsize(0.036)} & 0.0000\,{\scriptsize(0.000)} & 1.000\,{\scriptsize(1.00)} & 0.0000\,{\scriptsize(0.000)} & 0.0048\,{\scriptsize(0.011)} & 0.0104\,{\scriptsize(0.027)} \\
        \midrule
        \multirow{5}{*}{\rotatebox{90}{$d=25$}}& AAIS &
        \best{1.174\,{\scriptsize$\pm$0.033}\,{\scriptsize(2.60)}} & 
        \best{1.091\,{\scriptsize$\pm$0.044}\,{\scriptsize(2.49)}} & 
        \best{409.9\,{\scriptsize$\pm$3.154}\,{\scriptsize($1.0\!\times\!10^{3}$)}} &
        \second{0.0004\,{\scriptsize$\pm$0.000}\,{\scriptsize(0.000)}} & 
        \best{374.4\,{\scriptsize$\pm$4.126}\,{\scriptsize(994.0)}} & 
        \best{6.832\,{\scriptsize$\pm$0.265}\,{\scriptsize(15.64)}} & 
        \best{33.606\,{\scriptsize$\pm$5.746}\,{\scriptsize(140.9)}} \\
        &DM-PMC &
        \second{2.033\,{\scriptsize(3.41)}} & 
        \second{1.838\,{\scriptsize(3.33)}} & 
        \second{938.4\,{\scriptsize($1.8\!\times\!10^{3}$)}} & 
        0.0001\,{\scriptsize(0.000)} & 
        \second{524.6\,{\scriptsize($1.0\!\times\!10^{3}$)}} & 
        \second{11.236\,{\scriptsize(20.26)}} & 
        \second{112.4\,{\scriptsize(310.3)}} \\
        &M-PMC & 
        3.180\,{\scriptsize(3.95)} & 
        3.180\,{\scriptsize(3.95)} & 
        $2.0\!\times\!10^{3}$\,{\scriptsize($6.6\!\times\!10^{3}$)} & 
        0.0001\,{\scriptsize(0.000)} & 
        $1.9\!\times\!10^{3}$\,{\scriptsize($6.4\!\times\!10^{3}$)} & 
        19.805\,{\scriptsize(24.38)} & 
        234.6\,{\scriptsize(350.7)} \\
        &GAPIS & 
        279.4\,{\scriptsize($1.4\!\times\!10^{3}$)} &
        3.269\,{\scriptsize(3.95)} & 
        $2.0\!\times\!10^{5}$\,{\scriptsize($1.0\!\times\!10^{6}$)} & 
        \best{0.0005\,{\scriptsize(0.000)}} &
        804.8\,{\scriptsize($1.4\!\times\!10^{3}$)} &
        19.089\,{\scriptsize(23.66)} &
        225.5\,{\scriptsize(358.0)} \\\cmidrule(l){2-9}
        &True samples & 0.0047\,{\scriptsize(0.009)} & 0.0051\,{\scriptsize(0.009)} & 0.0000\,{\scriptsize(0.000)} & 1.000\,{\scriptsize(1.00)} & 0.0000\,{\scriptsize(0.000)} & 0.0020\,{\scriptsize(0.004)} & 0.0056\,{\scriptsize(0.018)} \\
      \bottomrule
    \end{tabular}
  }
  \label{tab:results-blr-full}
\end{table*}

\begin{table*}[t]
  \centering
  \caption{Experimental results for the Bayesian hierarchical model $d=5,25$. Metrics reported follow the convention of \cref{tab:results-funnel}.}
  \vspace*{-0.5em}
  \resizebox{\linewidth}{!}{
    \begin{tabular}{@{}llccccccc}
      \toprule
      &Method $\downarrow$ Metric $\rightarrow$
      & sliced $W_1^{\mathrm{prop}}$ $\downarrow$ & sliced $W_1$ $\downarrow$ & rev.\ KL $\downarrow$ & ESS $\uparrow$ & $\log Z$ err. $\downarrow$ & $\mathbb{E}[x]$ err. $\downarrow$ & $\mathbb{E}\|x\|^2$ err. $\downarrow$
      \\
      \midrule
        \multirow{5}{*}{\rotatebox{90}{$d=5$}} & AAIS & 
        \best{0.8718\,{\scriptsize$\pm$0.029}\,{\scriptsize(1.50)}} &
        \best{1.077\,{\scriptsize$\pm$0.017}\,{\scriptsize(2.23)}} &
        \best{1.120\,{\scriptsize$\pm$0.021}\,{\scriptsize(1.43)}} &
        \best{0.0358\,{\scriptsize$\pm$0.003}\,{\scriptsize(0.002)}} &
        \best{0.5812\,{\scriptsize$\pm$0.007}\,{\scriptsize(0.972)}} &
        \best{1.678\,{\scriptsize$\pm$0.066}\,{\scriptsize(4.08)}} &
        \best{39.313\,{\scriptsize$\pm$1.213}\,{\scriptsize(94.82)}} \\
        &DM-PMC & 
        \second{1.594\,{\scriptsize(3.16)}} &
        \second{1.650\,{\scriptsize(3.33)}} &
        81.933\,{\scriptsize(316.4)} &
        \second{0.0120\,{\scriptsize(0.001)}} &
        \second{0.8585\,{\scriptsize(1.69)}} &
        \second{2.575\,{\scriptsize(6.85)}} &
        \second{62.201\,{\scriptsize(182.2)}} \\
        &M-PMC & 
        2.005\,{\scriptsize(4.31)} & 
        1.999\,{\scriptsize(3.93)} &
        \second{10.592\,{\scriptsize(28.46)}} &
        0.0081\,{\scriptsize(0.000)} &
        1.674\,{\scriptsize(3.81)} & 
        3.707\,{\scriptsize(9.78)} & 
        68.642\,{\scriptsize(187.8)} \\
        &GAPIS & 
        138.7\,{\scriptsize(543.3)} & 
        4.140\,{\scriptsize(8.03)} &
        $1.8\!\times\!10^{6}$\,{\scriptsize($8.9\!\times\!10^{6}$)} &
        0.0016\,{\scriptsize(0.000)} & 
        2.449\,{\scriptsize(5.90)} & 
        6.160\,{\scriptsize(14.54)} & 
        86.651\,{\scriptsize(242.6)} \\\cmidrule(l){2-9}
        &True samples & 
        0.3458\,{\scriptsize(0.565)} & 
        0.4084\,{\scriptsize(0.665)} & 
        0.0000\,{\scriptsize(0.000)} & 
        1.000\,{\scriptsize(1.00)} & 
        0.0000\,{\scriptsize(0.000)} & 
        0.0900\,{\scriptsize(0.182)} & 
        1.432\,{\scriptsize(4.17)} \\
        \midrule
        \multirow{5}{*}{\rotatebox{90}{$d=25$}} & AAIS & 
        \best{0.9877\,{\scriptsize$\pm$0.020}\,{\scriptsize(2.41)}} &
        \best{1.201\,{\scriptsize$\pm$0.028}\,{\scriptsize(3.01)}} & 
        \best{2.157\,{\scriptsize$\pm$0.016}\,{\scriptsize(3.14)}} &
        \best{0.0125\,{\scriptsize$\pm$0.001}\,{\scriptsize(0.001)}} &
        \best{1.279\,{\scriptsize$\pm$0.022}\,{\scriptsize(2.19)}} &
        \best{4.531\,{\scriptsize$\pm$0.077}\,{\scriptsize(12.81)}} &
        \best{193.4\,{\scriptsize$\pm$1.506}\,{\scriptsize(763.2)}} \\
        &DM-PMC & 
        \second{3.675\,{\scriptsize(6.99)}} & 
        \second{3.815\,{\scriptsize(7.02)}} &
        \second{125.9\,{\scriptsize(208.6)}} &
        0.0003\,{\scriptsize(0.000)} & 
        20.770\,{\scriptsize(33.99)} &
        \second{19.843\,{\scriptsize(39.56)}} &
        \second{710.4\,{\scriptsize($2.4\!\times\!10^{3}$)}} \\
        &M-PMC & 
        5.413\,{\scriptsize(9.27)} & 
        5.420\,{\scriptsize(9.29)} & 
        137.3\,{\scriptsize(169.1)} & 
        0.0001\,{\scriptsize(0.000)} & 
        48.089\,{\scriptsize(76.65)} & 
        29.205\,{\scriptsize(52.29)} & 
        742.7\,{\scriptsize($2.6\!\times\!10^{3}$)} \\
        &GAPIS & 
        429.0\,{\scriptsize($1.2\!\times\!10^{3}$)} & 
        8.042\,{\scriptsize(11.18)} &
        $7.5\!\times\!10^{7}$\,{\scriptsize($2.9\!\times\!10^{8}$)} &
        \second{0.0007\,{\scriptsize(0.000)}} &
        \second{19.314\,{\scriptsize(41.64)}} &
        39.403\,{\scriptsize(60.79)} &
        $1.2\!\times\!10^{3}$\,{\scriptsize($2.7\!\times\!10^{3}$)} \\\cmidrule(l){2-9}
        &True samples & 
        0.3343\,{\scriptsize(0.598)} & 
        0.3607\,{\scriptsize(0.657)} &
        0.0000\,{\scriptsize(0.000)} & 
        1.000\,{\scriptsize(1.00)} &
        0.0000\,{\scriptsize(0.000)} & 
        0.1370\,{\scriptsize(0.274)} & 
        3.084\,{\scriptsize(9.11)} \\
      \bottomrule
    \end{tabular}
  }
  \label{tab:results-hierarchical-full}
\end{table*}

\begin{table*}[t]
  \centering
  \caption{Experimental results showing the performance of AAIS amortised over a single target vs over a whole family, all targets are at $d=25$. Metrics reported follow the convention of \cref{tab:results-funnel}.}
  \vspace*{-0.5em}
  \resizebox{\linewidth}{!}{
    \begin{tabular}{@{}clccccc}
      \toprule
      Target Family&Amortised over & Evaluated on & sliced $W_1^{\mathrm{prop}}$ $\downarrow$ & rev.\ KL $\downarrow$ & $\log Z$ err. $\downarrow$ & $\mathbb{E}[x]$ err. $\downarrow$ \\
      \midrule
        \multirow{4}{*}{\rotatebox{0}{Banana}}
        &Single-target & \multirow{2}{*}{\rotatebox{0}{Single-target}} & \best{$0.6679$\,{\scriptsize$\pm 0.0009$}\,{\scriptsize$(0.7235)$}}  & \best{$3.445$\,{\scriptsize$\pm 0.0024$}\,{\scriptsize$(3.675)$}}  & \best{$1.096$\,{\scriptsize$\pm 0.0005$}\,{\scriptsize$(1.529)$}} & \best{$2.821$\,{\scriptsize$\pm 0.0426$}\,{\scriptsize$(5.602)$}}  \\
        &Family &  & {$0.8139$\,{\scriptsize$\pm 0.0048$}\,{\scriptsize$(0.8673)$}}  & {$4.671$\,{\scriptsize$\pm 0.0535$}\,{\scriptsize$(4.920)$}}  & {$1.500$\,{\scriptsize$\pm 0.0244$}\,{\scriptsize$(2.102)$}} & {$3.160$\,{\scriptsize$\pm 0.0164$}\,{\scriptsize$(5.759)$}}  \\
        \cmidrule(l){2-7}
        &Single-target & \multirow{2}{*}{\rotatebox{0}{Family}} & \best{$0.8439$\,{\scriptsize$\pm 0.0201$}\,{\scriptsize$(1.162)$}}  & {$147.0$\,{\scriptsize$\pm 17.239$}\,{\scriptsize$(500.2)$}}  & {$2.563$\,{\scriptsize$\pm 0.0315$}\,{\scriptsize$(4.500)$}} & {$4.627$\,{\scriptsize$\pm 0.0680$}\,{\scriptsize$(7.142)$}}  \\
        &Family &  & {$1.093$\,{\scriptsize$\pm 0.0069$}\,{\scriptsize$(1.492)$}}
        & \best{$6.112$\,{\scriptsize$\pm 0.0565$}\,{\scriptsize$(7.823)$}}  & \best{$2.127$\,{\scriptsize$\pm 0.0138$}\,{\scriptsize$(3.340)$}} & \best{$4.295$\,{\scriptsize$\pm 0.0227$}\,{\scriptsize$(6.514)$}}  \\
        \cmidrule(l){1-7}
        \multirow{4}{*}{\rotatebox{0}{Funnel}}
        &Single-target & \multirow{2}{*}{\rotatebox{0}{Single-target}} & \best{$0.3999$\,{\scriptsize$\pm 0.0002$}\,{\scriptsize$(0.5568)$}}  & {$0.1336$\,{\scriptsize$\pm 0.0009$}\,{\scriptsize$(0.1866)$}}  & {$0.2656$\,{\scriptsize$\pm 0.0062$}\,{\scriptsize$(0.3187)$}} & \best{$0.6306$\,{\scriptsize$\pm 0.0416$}\,{\scriptsize$(2.064)$}}  \\
        &Family &  & {$0.4041$\,{\scriptsize$\pm 0.0032$}\,{\scriptsize$(0.5610)$}}  & \best{$0.1309$\,{\scriptsize$\pm 0.0023$}\,{\scriptsize$(0.1845)$}}  & \best{$0.2647$\,{\scriptsize$\pm 0.0081$}\,{\scriptsize$(0.3222)$}} & {$0.6437$\,{\scriptsize$\pm 0.0836$}\,{\scriptsize$(2.051)$}}  \\
        \cmidrule(l){2-7}
        &Single-target & \multirow{2}{*}{\rotatebox{0}{Family}} & {$0.5072$\,{\scriptsize$\pm 0.0044$}\,{\scriptsize$(1.051)$}}  & {$0.1622$\,{\scriptsize$\pm 0.0017$}\,{\scriptsize$(0.2438)$}}  & \best{$0.2661$\,{\scriptsize$\pm 0.0049$}\,{\scriptsize$(0.4080)$}} & \best{$0.6296$\,{\scriptsize$\pm 0.0358$}\,{\scriptsize$(2.098)$}}  \\
        &Family &  & \best{$0.5069$\,{\scriptsize$\pm 0.0061$}\,{\scriptsize$(1.082)$}}  & \best{$0.1535$\,{\scriptsize$\pm 0.0032$}\,{\scriptsize$(0.2379)$}}  & {$0.2672$\,{\scriptsize$\pm 0.0074$}\,{\scriptsize$(0.4131)$}} & {$0.6550$\,{\scriptsize$\pm 0.1238$}\,{\scriptsize$(2.128)$}}  \\
        \cmidrule{1-7}
        \multirow{4}{*}{\rotatebox{0}{BLR}}
        &Single-target & \multirow{2}{*}{\rotatebox{0}{Single-target}} & \best{$1.308$\,{\scriptsize$\pm 0.0322$}\,{\scriptsize$(1.420)$}}  & \best{$412.6$\,{\scriptsize$\pm 3.939$}\,{\scriptsize$(432.2)$}}   & \best{$293.7$\,{\scriptsize$\pm 5.751$}\,{\scriptsize$(335.1)$}} & \best{$4.827$\,{\scriptsize$\pm 0.1062$}\,{\scriptsize$(5.905)$}}  \\
        &Family &  & {$1.399$\,{\scriptsize$\pm 0.1051$}\,{\scriptsize$(1.609)$}}  & {$461.4$\,{\scriptsize$\pm 4.167$}\,{\scriptsize$(492.9)$}}  & {$428.8$\,{\scriptsize$\pm 7.762$}\,{\scriptsize$(461.9)$}} & {$7.790$\,{\scriptsize$\pm 0.5420$}\,{\scriptsize$(9.006)$}}  \\
        \cmidrule(l){2-7}
        &Single-target & \multirow{2}{*}{\rotatebox{0}{Family}} & {$2.481$\,{\scriptsize$\pm 0.0830$}\,{\scriptsize$(3.223)$}}  & {$786.0$\,{\scriptsize$\pm 5.995$}\,{\scriptsize$(1.4\!\times\!10^{3})$}}  & {$605.2$\,{\scriptsize$\pm 9.940$}\,{\scriptsize$(1.1\!\times\!10^{3})$}} & {$12.123$\,{\scriptsize$\pm 0.3951$}\,{\scriptsize$(18.309)$}}  \\
        &Family &  & \best{$1.174$\,{\scriptsize$\pm 0.0332$}\,{\scriptsize$(2.600)$}}  & \best{$409.9$\,{\scriptsize$\pm 2.980$}\,{\scriptsize$(1.1\!\times\!10^{3})$}}  & \best{$374.3$\,{\scriptsize$\pm 4.139$}\,{\scriptsize$(998.9)$}} & \best{$6.826$\,{\scriptsize$\pm 0.2540$}\,{\scriptsize$(15.623)$}}  \\
        \cmidrule(l){1-7}
        \multirow{4}{*}{\rotatebox{0}{Hierarchical}}
        &Single-target & \multirow{2}{*}{\rotatebox{0}{Single-target}} & \best{$0.4696$\,{\scriptsize$\pm 0.0058$}\,{\scriptsize$(0.6323)$}}  & \best{$1.405$\,{\scriptsize$\pm 0.0140$}\,{\scriptsize$(1.546)$}}  & \best{$0.9722$\,{\scriptsize$\pm 0.0194$}\,{\scriptsize$(1.208)$}} & \best{$2.012$\,{\scriptsize$\pm 0.0388$}\,{\scriptsize$(4.466)$}}  \\
        &Family &  & {$0.5641$\,{\scriptsize$\pm 0.0198$}\,{\scriptsize$(0.7198)$}}  & {$1.849$\,{\scriptsize$\pm 0.0380$}\,{\scriptsize$(1.994)$}}  & {$1.281$\,{\scriptsize$\pm 0.0208$}\,{\scriptsize$(1.572)$}} & {$2.601$\,{\scriptsize$\pm 0.0934$}\,{\scriptsize$(5.415)$}}  \\
        \cmidrule(l){2-7}
        &Single-target & \multirow{2}{*}{\rotatebox{0}{Family}} & {$3.399$\,{\scriptsize$\pm 0.1079$}\,{\scriptsize$(7.696)$}}  & {$9.786$\,{\scriptsize$\pm 0.8974$}\,{\scriptsize$(29.533)$}}  & {$4.720$\,{\scriptsize$\pm 0.1137$}\,{\scriptsize$(15.600)$}} & {$13.206$\,{\scriptsize$\pm 0.1720$}\,{\scriptsize$(36.371)$}}  \\
        &Family &  & \best{$0.9878$\,{\scriptsize$\pm 0.0201$}\,{\scriptsize$(2.414)$}}  & \best{$2.157$\,{\scriptsize$\pm 0.0160$}\,{\scriptsize$(3.137)$}}  & \best{$1.279$\,{\scriptsize$\pm 0.0214$}\,{\scriptsize$(2.196)$}} & \best{$4.530$\,{\scriptsize$\pm 0.0764$}\,{\scriptsize$(12.802)$}}  \\
      \bottomrule
    \end{tabular}
  }
  \label{tab:results-single-vs-family-full}
\end{table*}

\end{document}